%% file: main.tex
\documentclass[accepted]{uai2026} %

\usepackage[british]{babel}

\usepackage{natbib} %
\usepackage{mathtools} %
\usepackage{booktabs} %
\usepackage{tikz} %
\usepackage{amsfonts}
\usepackage[dvipsnames]{xcolor}
\everypar=\expandafter{\the\everypar\looseness=-1 }  %
\usepackage[capitalize,noabbrev]{cleveref}

\crefformat{algorithm}{Alg.~#2#1#3}
    
\crefname{thm}{Thm.}{Thms.}
\Crefname{thm}{Thm.}{Thms.}
\crefname{lem}{Lemma}{Lemmas}
\Crefname{lem}{Lemma}{Lemmas}
\crefname{proposition}{Prop.}{Props.}
\Crefname{proposition}{Prop.}{Props.}
\crefname{definition}{Def.}{Defs.}
\Crefname{definition}{Def.}{Defs.}
\crefname{property}{Property}{Properties}
\Crefname{property}{Property}{Properties}
\crefname{example}{Example}{Examples}
\Crefname{example}{Example}{Examples}
\crefname{theorem}{Thm.}{Theorems}
\Crefname{theorem}{Thm.}{Theorems}
\crefname{corollary}{Cor.}{Corols.}
\Crefname{corollary}{Cor.}{Corols.}
\crefname{equation}{Eq.}{Eqs.}
\Crefname{equation}{Eq.}{Eqs.}
\crefname{assumption}{Ass.}{Assumps.}
\Crefname{assumption}{Ass.}{Assumps.}
\crefname{section}{Sec.}{Secs.}
\Crefname{section}{Sec.}{Secs.}
\crefname{figure}{Fig.}{Figs.}
\Crefname{figure}{Fig.}{Figs.}
\crefname{table}{Tab.}{Tabs}
\Crefname{table}{Tab.}{Tabs}
\crefname{appendix}{App.}{Apps.}
\Crefname{appendix}{App.}{Apps.}

\usepackage{xspace}
\newcommand{\ours}{\textsc{TabPC}\xspace}

\usepackage{bm}
\usepackage{siunitx}
\usepackage{multirow}

\usepackage{algpseudocode}
\usepackage{algorithm}

\newcommand{\scope}{\ensuremath{{\mathsf{sc}}}}
\newcommand{\inscope}{\ensuremath{{\mathsf{in}}}}

\newcommand{\metricname}[1]{\textsc{\small #1}}
\newcommand{\ctwosttt}{\metricname{C2ST}\xspace}
\newcommand{\ctst}{\ref{eq:c2st}\xspace}
\newcommand{\ctwost}{\ctst}
\newcommand{\trend}{\metricname{Trend}\xspace}
\newcommand{\shape}{\metricname{Shape}\xspace}
\newcommand{\wnmis}{\metricname{wNMIS}\xspace}
\newcommand{\nmis}{\metricname{NMIS}\xspace}
\newcommand{\alphaprecision}{\metricname{$\alpha$-precision}\xspace}
\newcommand{\betarecall}{\metricname{$\beta$-recall}\xspace}

\newcommand{\methodname}[1]{\textsc{\small #1}}
\newcommand{\FF}{\ref{eq:ff_model}\xspace} 
\newcommand{\SM}{\ref{eq:sm_model}\xspace} 
\newcommand{\tabdiff}{\methodname{TabDiff}\xspace}
\newcommand{\tabsyn}{\methodname{TabSyn}\xspace}
\newcommand{\ctgan}{\methodname{CtGAN}\xspace}
\newcommand{\tvae}{\methodname{Tvae}\xspace}
\newcommand{\codi}{\methodname{CoDi}\xspace}
\newcommand{\stasy}{\methodname{STaSy}\xspace}
\newcommand{\cdtd}{\methodname{CDTD}\xspace}
\newcommand{\great}{\methodname{GReaT}\xspace}
\newcommand{\hivae}{\methodname{HI-VAE}\xspace}
\newcommand{\tabpfn}{\methodname{TabPFN}\xspace}

\newcommand{\sota}{SotA\xspace}

\usepackage{amsthm}
\theoremstyle{plain}
\newtheorem{theorem}{Theorem}[section]

\newtheorem{lemma}[theorem]{Lemma}

\theoremstyle{definition}

\theoremstyle{remark}

\hypersetup{
colorlinks=true,linkcolor=teal,citecolor=DarkOrchid,urlcolor=DarkOrchid,
}

\usepackage{subcaption}

\title{A Sobering Look at Tabular Data Generation via Probabilistic Circuits}

\newcommand{\random}{{\small\textcircled{r}}}

\author[ ]{\href{mailto:<davide.scassola@phd.units.it>?Subject=Your UAI 2026 paper}{Davide~Scassola}\textsuperscript{$2, 3, *$}~{\random}~\href{mailto:<D.W.Ponsford@sms.ed.ac.uk>?Subject=Your UAI 2026 paper}{Dylan~Ponsford}\textsuperscript{$1, *$}}

\author[1]
{\href{mailto:<ajavaloy@ed.ac.uk>?Subject=Your UAI 2026 paper}{Adrián~Javaloy}{}}

\author[3]
{\href{mailto:<sebastiano@aindo.com>?Subject=Your UAI 2026 paper}{Sebastiano~Saccani}{}}

\author[2]
{\href{mailto:<lbortolussi@units.it>?Subject=Your UAI 2026 paper}{Luca~Bortolussi}{}}

\author[1]
{\href{mailto:<henry.gouk@ed.ac.uk>?Subject=Your UAI 2026 paper}{Henry~Gouk}{}}

\author[1]
{\href{mailto:<avergari@ed.ac.uk>?Subject=Your UAI 2026 paper}{Antonio~Vergari}{}}
\affil[1]{%
    School of Informatics\\
    University of Edinburgh\\
    Edinburgh, UK
}
\affil[2]{%
    AI\textsc{lab}\\
    University of Trieste\\
    Trieste, Italy
}
\affil[3]{%
    Aindo SpA\\
    AREA Science Park\\
    Trieste, Italy\\
  }
  
\begin{document}
\maketitle

\def\thefootnote{*}\footnotetext{Equal contribution (\href{https://www.aeaweb.org/journals/policies/random-author-order/search?RandomAuthorsSearch\%5Bsearch\%5D=BlmxyClAO2YD}{author order was randomized}).}\def\thefootnote{\arabic{footnote}}

\begin{abstract}
Tabular data is more challenging to generate than text and images, due to its heterogeneous features and much lower sample sizes.
On this task, diffusion-based models are the current state-of-the-art (\sota) model class, %
achieving almost perfect performance on commonly used benchmarks.
In this paper, \textit{we question the perception of progress for tabular data generation}.
First, 
{we highlight the limitations of current protocols to evaluate the fidelity of generated data, and advocate for alternative ones. Next,}
we revisit a simple baseline---hierarchical mixture models in the form of deep probabilistic circuits (PCs)---which delivers competitive or superior performance to \sota models \textit{for a fraction of the cost}.
PCs are the generative counterpart of decision %
forests, and as such can natively handle heterogeneous data as well as deliver tractable probabilistic generation and inference.
{Finally}, in a rigorous empirical analysis we show that the apparent saturation of progress for \sota models is {largely} due to the {use of inadequate metrics}. %
As such, we highlight that there is still much to be done to generate realistic tabular data.
Code available at \url{https://github.com/april-tools/tabpc}.
\end{abstract}

\section{Introduction}\label{sec:intro}

Deep Generative Models (DGMs; \citealt{tomczak2024deep}) have mastered generating high-dimensional and structured data such as images \citep{croitoru2023diffusion} and text \citep{achiam2023gpt}.
However, a modality still considered challenging for DGMs is \textit{tabular data}---data found in the rows (samples) and columns (features) of a database table \citep{borisov2022deep}.
One explanation for this is that tabular data can be \textit{heterogeneous}, i.e., be both continuous and discrete, and have different statistical data types \citep{valera2017automatic}. Moreover, tables usually have \textit{fewer samples} compared to datasets typically used for training neural networks.
These issues also explain why simple \textit{discriminative} models such as decision trees and their ensembles \citep{xgboost} can still outperform neural predictors on tabular data \citep{grinsztajn2022tree, shwartz2022tabular,van2024tabular}.

\begin{figure}[!t]
    \centering
    \includegraphics[width=\linewidth]{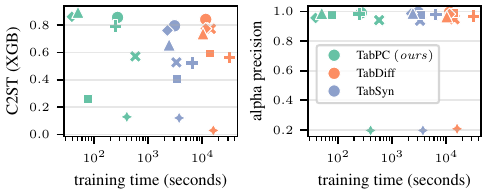}
    \caption{\textbf{PCs for tabular data (\ours) compete with diffusion-based approaches at a fraction of the cost} for all datasets (denoted by marker shape, see \cref{fig:random_FF_vs_trained_FF}) on fidelity metrics such as \alphaprecision \citep{faithful} and \ctwost \citep{lopez-paz2017revisiting} when computed with an XGBoost classifier.
    We remark this is not the commonly used implementation of \ctwost, which instead uses a logistic regressor for which even a fully-factorised model yields \sota results, \textbf{delivering a false sense of progress} (see \cref{fig:xgb_vs_lr}).}
    \label{fig:pc-global}
\end{figure}

Despite this, recent diffusion-based DGMs \citep{tabddpm, tabsyn, tabdiff, tabbyflow} have shown remarkable progress on a number of commonly used tabular data generation (TDG) benchmarks \citep{adult,magic,default,beijing}, to the point that metrics used to measure the \textit{fidelity} 
of generated data \citep{faithful,sdmetrics,stasy} cannot be further improved.
For example, the \textit{classifier two-sample test} (\ctwost; \citealt{lopez-paz2017revisiting}) metric, which uses a classifier to distinguish generated from real data, or the \trend metric 
\citep{sdmetrics}
, used to measure dependency between two features, can be close to 1.0 (their maximum value, i.e., 100\% fidelity) for \sota diffusion models \citep{tabdiff}.

In this paper, \textit{we question the apparent progress of DGMs for TDG} via a twofold approach.
First, we investigate some of the most commonly used fidelity metrics and their implementation.
We note that the saturated performance of DGMs can be explained by simplistic choices of how metrics are computed, e.g., using linear classifiers for \ctwost to distinguish generated from real data, or measuring \textit{linear} correlations and univariate marginal fidelity for \trend. 
To highlight this issue, we introduce %
fully-factorised distributions as baselines, and show how they can perform on par with DGMs under these flawed metrics (\cref{sec:sota-tab-gen}).

Second, we propose an embarrassingly %
simple baseline for TDG that %
matches diffusion-based \sota models while being one or two orders of magnitude faster (see \cref{fig:pc-global}):  \textit{Probabilistic Circuits} (PCs; \citealt{vergari2020probabilistic,choi2020pc,darwiche2003differential}),
which can be understood as the hierarchical version of %
mixture models 
\citep{mclachlan2019finite}, and as the generative equivalent of decision trees and forests \citep{correia2020joints,khosravihandling}. As such, PCs inherit the %
ability to seamlessly handle heterogeneous data and scarce sample sizes.
Moreover, PCs with tree-like structures have been extensively trained on tabular data in the past \citep{molina2018mixed,vergari2019automatic,watson2023arf}, especially on binary data and in the form of sparse computational graphs running on the CPU \citep{zhao2016unified,di2017fast,dang2020strudel}.
However, modern PCs with tensorised structures \citep{peharz2019ratspn,peharz_einsum_2020} and better learning algorithms \citep{mari2023unifying,loconte2025what} 
have never been evaluated for TDG. We argue that the TDG community has been missing a strong baseline which, as we show, can easily outperform more recent DGMs, while providing tractable inference.

Our contributions can therefore be summarised as follows.
\textbf{C1)} We question the effectiveness of standard metrics for evaluating TDG such as \ctwost and \trend \citep{sdmetrics} in \cref{sec:sota-tab-gen}, and propose alternatives that better illustrate how the current \sota stands.
\textbf{C2)} We introduce \ours as tensorised PCs for TDG in \cref{sec:tab-pc}, by showing how it is simple to learn modern PC architectures \citep{mari2023unifying,loconte2025what,liu2021tractable} on heterogeneous data.
Finally, \textbf{C3)} we show in a rigorous and extensive set of experiments in \cref{sec:exps} that \ours can \textit{outperform and compete} with current \sota DGMs for TDG under fidelity and utility metrics \textit{in a fraction of the time}.

\section{The \sota of Tabular Data Generation(?)}
\label{sec:sota-tab-gen}

Before discussing the limitations in commonly-used evaluation protocols for measuring data fidelity, we briefly introduce the current \sota of DGM for TDG.
{%
GAN-based approaches started the trend of DGMs for TDG with methods such as \ctgan \citep{ctgan},
concurrently with VAE approaches such as \tvae \citep{ctgan} and \hivae \citep{hivae,javaloy2022mitigating}.
More recently, diffusion-based approaches such as \codi \citep{codi}, \stasy \citep{stasy} and \cdtd \citep{mueller2025continuous} were introduced.

At the time of writing, the current \sota for DGMs for TDG is given by diffusion-based approaches like \tabdiff \citep{tabdiff} and \tabsyn \citep{tabsyn}, 
which provide more sophisticated ways to perform diffusion, but we remark that TDG is an actively growing field \citep{mueller2026cascaded}.
}
{Recently, foundation models such as \tabpfn \citep{hollmann2023tabpfn} and TabICL \citep{qu2025tabicl} have gained popularity for discriminative tabular tasks. Extensions of \tabpfn\footnote{\url{https://github.com/PriorLabs/tabpfn-extensions}} provide functionality for the generation of tabular data. To the best of our knowledge, as of yet this has seen only limited benchmarking in one concurrent work, focused primarily on causally-informed generation and not the general task \citep{tugnoli2026improvingtabpfnssyntheticdata}.}

{One aspect that sets tabular data apart from modalities such as image or text, is that \textit{there is no clear way to qualitatively assess sample quality}. 
We cannot simply look at a generated row and say ``yes, that looks like a realistic sample''.}
It is therefore crucial to develop meaningful metrics to evaluate synthetic data quality. Recent %
works \citep{tabsyn,tabbyflow, tabdiff} use a number of metrics that can be grouped along these axes: 
\textit{i)} statistical similarity to the original data (\textit{fidelity}); %
\textit{ii)} performance on downstream tasks, when using synthetic data to train a regressor/classifier and evaluating on real data (\textit{utility} of synthetic data); 
and \textit{iii)} protection of synthetic data against leaking sensitive information (\textit{privacy}) 
\citep{stoian2025surveytabulardatageneration}.
We now focus on fidelity metrics such as \trend and \ctwost, as they are arguably the ones most prominently used to assess the \sota for TDG, and privacy metrics have already been critically analysed in \citealt{yao_dcr_2025}.
We will discuss the evaluation of utility metrics in \cref{app:mle}.

\subsection{Perfect Fidelity?}\label{sec:ff_intro}
\sota DGMs such as \tabdiff \citep{tabdiff} and \tabsyn \citep{tabsyn} have been reported to achieve \textit{almost perfect scores} in \trend and \ctwost, i.e., values close to 1, as we also show in \cref{tab:density-Trend_updated,tab:logistic regression(C2ST)_updated},
thus suggesting almost-perfect generation.
However, as we show next, this does not mean that  data they generate is really indistinguishable from original data (\ctwost), nor that it truly recovers feature dependencies (\trend).
To this end, we consider the simplest generative model: a \textit{fully factorised} (FF) model, assuming each feature (column) to be independent, defined as  %
\begin{equation}\label{eq:ff_model}\tag{FF}
    p_{\mathrm{FF}}(\boldsymbol{x}) = \prod\nolimits_{i=1}^D p_i(x_i)
\end{equation}
where $\boldsymbol{x}\coloneqq\{x_1, \ldots, x_D\}$ denote the set of random variables representing the table features,
and each $p_i$ is modelled as a Gaussian distribution for continuous features and a categorical distribution for discrete features.
This yields a parsimonious model, with only $\mathcal{O}(D)$ trainable parameters.
Maximum likelihood estimation (MLE) training is almost instantaneous as we just need to sweep over the training data once to compute sufficient statistics (after the standard data preprocessing done for DGMs detailed in \cref{app:preprocessing}).

Clearly, the \FF model is fundamentally limited, since \textit{it cannot generate correlated features by design}, and one would expect its fidelity as measured by \trend and \ctwost to be low, except for trivial datasets.
However, this turns out not to be the case. 
On \trend, the \FF model is able to achieve competitive or \textit{occasionally superior} scores to \sota diffusion-based approaches. For example, on the Diabetes dataset (further details in \cref{sec:exps} and \cref{app:datasets}), \tabdiff scores $0.9711$ and \tabsyn $0.9593$, while \FF scores $0.9686$, nearing the perfect score of $1$. \FF beating a diffusion-based model in a metric supposedly measuring dependencies is clearly problematic. %
Similarly, \FF achieves a \textit{perfect} \ctwost utility score of $1$ on the Diabetes dataset, in comparison with $0.9591$ for \tabdiff and $0.6476$ for \tabsyn.
This problem is exacerbated by the fact that these two metrics are used in several recent works to evaluate TDG of \sota models \citep{tabsyn, tabdiff, tabbyflow}. Next, we investigate how and why exactly \FF is able to `fool' these metrics. %

\subsection{Off trend: modelling dependencies}
\label{sec:trend}

To understand why a simple \FF model is able to match \sota DGMs for \trend, we show that this metric is sensitive to the quality of the model's univariate marginals.
We later propose an alternative  that resolves this issue.
\trend computes the average similarity of correlations between feature pairs in both real and synthetic data. 
Specifically, for each pair of continuous features  $x_i, x_j$, it computes 
\begin{equation}\label{eq:trend_pearson}
   s_{\mathrm{corr}}(x_i,x_j) = 1 - 0.5 |\rho_R(x_i,x_j) - \rho_S(x_i,x_j)|,
\end{equation}
where $\rho_R(x_i,x_j)$ is the Pearson correlation between columns $x_i$ and $x_j$ in the real data, and similarly for $\rho_S$ in the synthetic data. 
For each pair of categorical features $x_i, x_j$, it instead computes the total variation distance between the real and synthetic distributions, given by
\begin{equation}\label{eq:trend_contingency}
    s_{\mathrm{contingency}}(x_i,x_j) = 1 - \frac{1}{2}\sum_{\alpha \in x_i} \sum_{\beta \in x_j} |R_{\alpha, \beta} - S_{\alpha, \beta}|,
\end{equation}
where $R_{\alpha, \beta}$ denotes the empirical probability of jointly observing $x_i = \alpha$, $x_j = \beta$ in the real data, and similarly for $S_{\alpha, \beta}$ in the synthetic data.
For pairs of mixed continuous and categorical features, it first discretises the numerical column and then computes the contingency similarity. The overall \trend score is then given by the average of the scores of all unique column pairs.

As one can clearly see, \trend \textbf{i)} only measures linear correlation among continuous features and thus fails to capture more complex dependencies.
More crucially, however, \textbf{ii)} it can be fooled by the quality of univariate marginals.
In fact, a \FF model with randomised parameters scores less on \trend than a \FF model learned by MLE (see \cref{fig:random_FF_vs_trained_FF}), while both should ideally score the same values for a measure of pairwise dependencies, which both models cannot capture.
As a result, \trend captures the marginal quality more than pairwise dependencies.\footnote{\shape is the fidelity metric used to measure univariate marginal quality \citep{sdmetrics}, and as expected a MLE-trained \FF model scores almost perfectly on it as well (see \cref{tab:density-Shape_updated} in our experiments).}
Instead, an ideal metric for bivariate dependencies should be invariant to marginal transformations, on top of capturing rich non-linear dependencies.

\begin{figure}[!t]
    \centering
    \includegraphics[width=0.9\linewidth]{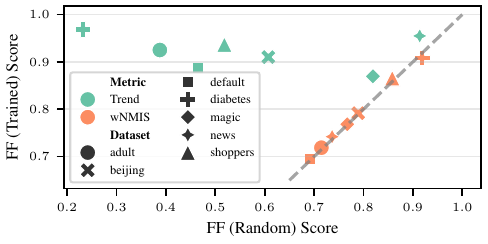}
    \caption{
    \textbf{\wnmis is invariant to the quality of the univariate marginals while \trend is not}, as shown by the fact that it assigns almost identical scores to \ref{eq:ff_model} models trained via MLE and to \ref{eq:ff_model} models with randomly initialised parameters across all datasets (denoted by marker shape), lying on the identity denoted as the grey dashed line. 
    In contrast, \trend says that a trained \FF model captures bivariate dependencies well, and an untrained model does not.
    }
    \label{fig:random_FF_vs_trained_FF}
\end{figure}

To overcome the limitations \textbf{(i-ii)} of \trend, we follow \citealt{tgan, yang2024structuredevaluationsynthetictabular} who proposed to use mutual information (MI) to measure bivariate dependencies.
Specifically, we propose a \textit{weighted normalised mutual information (NMI) similarity (\wnmis)} that weights NMI by emphasising which feature pairs have high NMI, and penalises those with low NMI.
For features $x$ and $y$, the NMI  \citep{witten_data_2011} is defined as
\begin{equation}\label{eq:NMI}
    \mathrm{NMI}(x,y) = {2\mathrm{I}(x,y)}/{(\mathrm{H}(x)+\mathrm{H}(y))} \in [0,1],
\end{equation}
where $\mathrm{I}$ and $\mathrm{H}$ refer to the mutual information and entropy, respectively.
Specifically, we compute wNMIS as follows:
\begin{align}\label{eq:wNMIS}
    &\sum\nolimits_{\{x_i,x_j \, : \, i<j\}}\widetilde{w}(x_i,x_j)\mathrm{NMIS}(x_i,x_j) \in [0,1],
\end{align}
where $x_i, x_j$ are two different columns of the dataset, and
\begin{align*}
\mathrm{NMIS}(x_i,x_j) &\coloneqq 1 - |\mathrm{NMI}_R(x_i,x_j) - \mathrm{NMI}_S(x_i,x_j)| 
\\
w(x_i,x_j) &\coloneqq |\mathrm{NMI}_R(x_i,x_j) + \mathrm{NMI}_S(x_i,x_j)|, 
\\
\widetilde{w}(x_i,x_j) &\coloneqq {w(x_i,x_j)}/{\sum\nolimits_{i<j}w(x_i,x_j)}
\end{align*}
where 
$\mathrm{NMI}_R$ denotes the NMI between features in the real dataset, and $\mathrm{NMI}_S$ in the synthetic one. 
We can estimate \wnmis as
$\widehat{\mathrm{wNMIS}}$ by replacing $\mathrm{I}$ and $\mathrm{H}$ in \cref{eq:NMI} with their empirical estimators after discretising continuous features.
{ We provide an ablation study on the effect of different discretisation schemes on \wnmis score in \cref{app:wnmis_discretisation}.}
This score is measured from 0 (worst) to 1 (best), thereby matching the semantics of \trend.
Crucially, it does not suffer from issue \textbf{(ii)} of \trend, as discussed next.

Its invariance to univariate marginal quality is reflected empirically in our previous experiment with a randomised \FF model.
\Cref{fig:random_FF_vs_trained_FF} shows the \wnmis scores are the same, up to numerical precision, for both \FF models.
Note that the \wnmis score of a \FF model is not zero but highlights how complex the bivariate interactions in a dataset are, in an analogous way in which the MI of a feature with itself is its entropy.

\subsection{Better classifiers, lower fidelity}
\label{sec:c2st}

The next fidelity metric we consider, \ctwost  \citep{lopez-paz2017revisiting}, trains a classifier to distinguish real from synthetic data, with increasing classifier performance implying lower data quality. This is based on the idea that synthetic data should be indistinguishable from real data. 
\ctwost is defined as follows. Given real and synthetic datasets $\mathcal{D}_R$ and $\mathcal{D}_S$ (respectively), with $|\mathcal{D}_R| = |\mathcal{D}_S|$, one forms balanced training ($\mathcal{D}^{\mathrm{(train)}}$) and test ($\mathcal{D}^{\mathrm{(test)}}$) sets out of concatenating $\mathcal{D}_R$ and $\mathcal{D}_S$, then trains a binary classifier $c$ to determine synthetic samples on $\mathcal{D}^{\mathrm{(train)}}$, and finally computes its area under the receiver operating  curve (AUROC) \citep{hanley1982meaning} on $\mathcal{D}^{\mathrm{(test)}}$. The overall \ctwost  score, denoted as $\mathrm{C2ST}(c, \mathcal{D}_R, \mathcal{D}_S)$, is then given by 
\begin{align}\label{eq:c2st}
\tag{\ctwosttt}
    1-(2 \cdot \max(\mathrm{AUROC}(c, \mathcal{D}^{\mathrm{(test)}}),0.5) - 1).
\end{align}
\FF models are able to achieve high \ctst scores because it is standard to use Logistic Regression (LR) as the classifier, $c$, as seen in recent \sota work \citep{tabdiff, tabsyn, tabbyflow} and the library \texttt{SDMetrics} \citep{sdmetrics}.
Intuitively, these unusually high scores may happen because a LR is not powerful enough as a classifier. This intuition is formalised in the theorem below.

\begin{theorem}\label{thm:ff_fool_lr}
    Let $\mathcal{D}_R$ with $|\mathcal{D}_R| = n$ be a real dataset, and let $p$ be an \FF model trained by MLE on this dataset.
    Then
    \begin{equation*}
       \lim_{n \to \infty} \mathbb{E}_{\mathcal{D}_S}[\ctst(LR, \mathcal{D}_R, \mathcal{D}_S)] = 1,
    \end{equation*}
    where $\mathcal{D}_S$ is an i.i.d. sample of $n$ items drawn from $p$.
\end{theorem}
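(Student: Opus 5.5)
Our plan is to show that, as $n\to\infty$, the logistic regressor that best separates real from synthetic data converges to one with zero weight on every feature, so its scores carry no information and the AUROC tends to $1/2$.

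\textbf{Step 1 (population objective).} Write the LR as $c(\boldsymbol{x}) = \sigma(w_0 + \boldsymbol{w}^\top \phi(\boldsymbol{x}))$. Here $\phi$ is the preprocessed feature map: continuous features enter linearly and categorical features are one-hot encoded, as in \cref{app:preprocessing}. In the balanced population limit, the expected log-loss on label $y\in\{R,S\}$ is convex in $(w_0,\boldsymbol{w})$. Its gradient with respect to $\boldsymbol{w}$ at $\boldsymbol{w}=0$ and $w_0=0$ is proportional to
\begin{equation*}
\mathbb{E}_{p_R}[\phi(\boldsymbol{x})] - \mathbb{E}_{p_{\mathrm{FF}}}[\phi(\boldsymbol{x})].
\end{equation*}
The key observation is that each coordinate of $\phi$ depends on a single feature $x_i$. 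Its expectation therefore depends only on the univariate marginal $p_i$. An \FF model trained by MLE with Gaussian and categorical factors matches the empirical mean of each continuous feature and the empirical frequencies of each categorical value. Hence this gradient vanishes on the training data exactly, and asymptotically in expectation. The same holds for the $w_0$ gradient. By convexity, $(0,0)$ is a population minimiser. Variances are not needed, since LR features are linear in $\phi$.

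\textbf{Step 2 (consistency).} We then argue that the empirical LR fit, trained on a finite split of $\mathcal{D}_R\cup\mathcal{D}_S$, has parameters $\hat{\boldsymbol{w}}_n \to 0$ in probability. We would use standard M-estimation consistency for convex losses, or Newton-type arguments. The sources of error are:
\begin{itemize}
\item the MLE parameters converge to the true marginal moments by the LLN;
\item the synthetic and real sample moments converge as well.
\end{itemize}
If the population minimiser is not unique, we would add a small ridge penalty, which is standard in sklearn, or argue directly about scores. This is the step we expect to be the main obstacle. Pinning down uniqueness and uniform convergence with heterogeneous one-hot features requires care.

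\textbf{Step 3 (AUROC).} With $\hat{\boldsymbol{w}}_n\to 0$, the score $\hat{\boldsymbol{w}}_n^\top\phi(\boldsymbol{x})$ used for ranking is a direction $\hat{\boldsymbol{w}}_n/\|\hat{\boldsymbol{w}}_n\|$ applied to $\phi$. The ranking is invariant to scaling, so convergence of the weights to zero alone is not enough. We therefore use a sharper fact. For any fixed direction $\boldsymbol{u}$, real and synthetic projections $\boldsymbol{u}^\top\phi$ have equal means asymptotically, but they need not have equal distributions.

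To handle this, we would instead note that the fitted direction is driven by sampling noise of order $n^{-1/2}$. It is independent of the test split and asymptotically symmetric. Averaging the test AUROC over $\mathcal{D}_S$, a direction $\boldsymbol{u}$ and its negation $-\boldsymbol{u}$ are equally likely asymptotically, and their AUROCs sum to $1$. So the expected AUROC tends to $1/2$.

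Finally, we would handle the $\max(\cdot,0.5)$ clipping in \ctst. Since the AUROC concentrates, with variance $O(1/n)$ and mean tending to $1/2$, the expected clipped value tends to $1/2$ as well. Hence $\mathbb{E}[\ctst]\to 1$. If the symmetrisation argument proves delicate, a fallback is to show that the AUROC of any fixed linear direction tends to $1/2$ on the relevant datasets under mild conditions. Without such conditions, the symmetry argument is the route we would pursue.
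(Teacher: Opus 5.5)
Your Steps 1--2 reproduce the paper's key lemma: the LR log-likelihood gradient at $\boldsymbol{w}=\boldsymbol{0}$ is the difference of real and synthetic means, and concavity then makes the constant classifier optimal. You also correctly see what the paper's proof glosses over. The paper applies its exact-equal-means lemma ``in the limit'' and concludes that the learned LR is the random classifier, with AUROC $\tfrac12$. But the AUROC depends only on the direction of $\hat{\boldsymbol{w}}_n$ and is discontinuous at $\boldsymbol{0}$, so $\hat{\boldsymbol{w}}_n\to\boldsymbol{0}$ by itself says nothing about the AUROC.

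Your repair, however, fails at the clipping step. By your own account the fitted direction is driven by $O(n^{-1/2})$ noise, i.e.\ $\sqrt{n}\,\hat{\boldsymbol{w}}_n$ has a nondegenerate Gaussian limit. So $\hat{\boldsymbol{w}}_n/\|\hat{\boldsymbol{w}}_n\|$ converges in law to a genuinely random direction $\boldsymbol{U}$. The test AUROC then converges to $a(\boldsymbol{U})$, where $a(\boldsymbol{u})$ is the population AUROC of $\boldsymbol{u}^\top\phi$ for real versus FF data. The $O(1/n)$ variance holds only conditionally on the direction, so the AUROC does \emph{not} concentrate at $\tfrac12$. Symmetrisation does give $\mathbb{E}[\mathrm{AUROC}]\to\tfrac12$, but the score $2-2\max(\mathrm{AUROC},\tfrac12)$ is not linear in the AUROC. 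For symmetric $\boldsymbol{U}$ its expectation tends to $1-\mathbb{E}\,|a(\boldsymbol{U})-\tfrac12|$. This equals $1$ only if $a(\boldsymbol{u})=\tfrac12$ for almost every $\boldsymbol{u}$, which is exactly the property you observed can fail (equal means, unequal distributions).

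And it does fail, so this is not a technicality you can patch. Take two binary columns with $x_1=x_2\sim\mathrm{Bernoulli}(0.1)$ in the real data. The FF model samples them independently with the same marginals, putting mass $0.18$ on the configurations $(1,0),(0,1)$ that the real data never visits. For $\boldsymbol{u}=(1,1)$, with real as the positive class and ties counted as $\tfrac12$, a direct count gives wins $0.099$ and ties $0.730$, i.e.\ AUROC $0.464$. The same count gives $0.464$ or $0.536$ for every $(u_1,u_2)$ with $u_1u_2\neq0$ and $u_1+u_2\neq0$. Hence $|a(\boldsymbol{u})-\tfrac12|=0.036$ almost everywhere. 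Under your symmetric limit law the limit of $\mathbb{E}[\mathrm{C2ST}]$ is therefore $0.964$, and it is bounded away from $1$ in any case.

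So the statement needs an extra hypothesis that forces $a\equiv\tfrac12$. One example is that every projection $\boldsymbol{u}^\top\phi$ is symmetric about its common mean under both the real and the FF law, as holds for jointly Gaussian data. Your fallback ``under mild conditions'' is exactly where such a hypothesis must enter, and neither your argument nor the paper's supplies it.
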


We prove this result in \cref{app:lr_c2st_proof} by noting that any distribution that recovers the data univariate marginals will maximise \ctst implemented with a LR classifier.
This, in addition to our results on \trend being insufficient in capturing bivariate dependencies, begs the question: are \sota DGMs just recovering the data marginals?
The short answer is no (see \cref{sec:exps}), but a longer version reveals that under more robust metrics, these models' performance is far from optimal. 
We now look into estimating \ctst more robustly.

Some prior \citep{foolxgboost} and concurrent \citep{KINDJI2025131655, mueller2026cascaded} works have used more powerful boosted tree classifiers such as XGBoost \citep{xgboost} instead of LR. However, they do not adopt the same evaluation protocol as \sota DGMs and thus do not observe the issue of using LR in the context of evaluating DGMs.
We therefore emphasise here that \textbf{\textit{logistic regression is not a suitable choice of classifier for \ctst and we want to popularise the use of XGBoost for \ctst in TDG benchmarks}}.

\begin{figure}[!t]
    \centering
    \includegraphics[width=.9\linewidth]{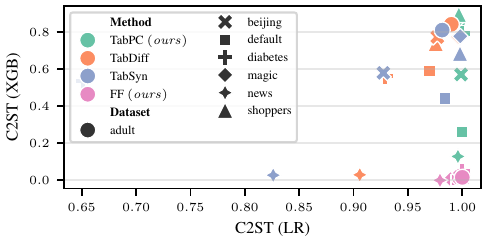}
    \caption{
    \textbf{\ctwost with XGBoost offers a much clearer stratification of model performance than with LR} across all datasets, and is able to correctly separate out the performance of the trivial \ref{eq:ff_model} model.
    Details in \cref{sec:trend}.
    }
    \label{fig:xgb_vs_lr}
\end{figure}

In fact, using an insufficiently powerful classifier for C2ST has significantly inflated our perception of how well we are creating `indistinguishable' data.
For example, \tabdiff's \ctwost on the Adult dataset goes from an almost perfect $0.9899$ using LR to $0.8409$ using XGBoost,
while a \FF model goes from $1.0$ to $0.0160$. 
More concerningly, \tabdiff also goes from $0.9057$ to a dismal $0.0296$ on the News dataset. It is clear then that we still have some way to go, as supported by \cref{fig:xgb_vs_lr} and complete results in \cref{tab:logistic regression(C2ST)_updated,tab:xgboost(C2ST)_updated}.

The \FF model has served as a cheap and simple diagnostic benchmark, uncovering flaws in existing fidelity metrics by achieving undeservedly high scores.
In the next section, we start from the \FF model to devise an expressive and efficient approach to TDG, that retains simplicity and efficiency while delivering competitive performance on stronger metrics such as \wnmis and \ctwost with XGBoost.

\section{From FF Models to DEEP \ours}
\label{sec:tab-pc}

\subsection{From FF to Shallow Mixtures\ldots}
\label{sec:sm}

A natural way to reintroduce dependencies among features, starting from a \FF model, is to combine several of them into a  \textit{shallow mixture \emph{(SM)}} \citep{mclachlan2019finite}, defined as
\begin{equation}\label{eq:sm_model}
\tag{\small\textsc{SM}}
    p_{\mathrm{SM}}(\boldsymbol{x}) = \sum\nolimits_{i=1}^K w_i  \prod\nolimits_{j=1}^D p_{i,j}(x_j)
\end{equation}
where $w_i \in \mathbb{R}_{>0}, \, \sum_i w_i = 1$ are additional learnable \emph{mixture weights} determining the contributions of the $K$ mixture components.
The base distributions, $p_{i,j}$, can be Gaussians or categoricals, depending on the feature type.
Compared to \FF models, the number of parameters will linearly increase to $\mathcal{O}(KD)$, and MLE training can be done via expectation maximisation (EM) or stochastic gradient ascent, as \SM models introduce a categorical latent variable with $K$ states that is marginalised out to compute $ p_{\mathrm{SM}}(\boldsymbol{x})$ \citep{peharz_latent_2017}.
This latent variable interpretation also enables efficient exact sampling: one first samples the latent variable state proportionally to the mixture weights, and then samples from the selected component directly \citep{peharz_latent_2017}.

Despite introducing a single latent variable, \SM models are universal density approximators in the limit of infinite mixture components \citep{mclachlan2019finite}. 
That is, by increasing $K$ we can increase model expressiveness.
For example, one can see an increase in \ctst with XGB, from $0.0160$ on Adult as scored by a \FF model  to $0.7727$ for a \SM with $K=20,000$. 
This is remarkable for such simple baselines, as TabDiff, having 
$7.87\times$ more parameters
scores $0.8554$ (see \cref{sec:c2st}).
We study the results of SM models in greater detail in \cref{app:sm_model_results}.
Next, we discuss how to use these ideas to build \textit{hierarchical and overparameterised mixture models} in the form of PCs.
As per the deep learning recipe, we will increase expressiveness by building \textit{deeper}, and \textit{not wider}, mixture models \citep{martens2014expressive,choi2020pc}.

\subsection{\ldots and to deep Probabilistic Circuits}
\label{sec:pcs}

Before detailing our `recipe' for building and learning of a PC for tabular data (\ours), we briefly review PCs.
PCs provide a framework to model  hierarchical mixture models as deep computational graphs over which one can systematically trade-off expressiveness for tractability by governing the number of parameters in them and certifying that certain structural properties of the graph are met \citep{vergari2019tractable}. 
Furthermore, many other classical tractable probabilistic models such as HMMs and trees \citep{correia2020joints,watson2023arf}, as well as tensor factorisations, are instances of PCs \citep{choi2020pc,correia2020joints,khosravihandling,loconte2025what}.

Formally, a \textit{circuit} \citep{darwiche2003differential,choi2020pc,vergari2021compositional} $c$ is a parameterised directed
acyclic computational graph over variables $\boldsymbol{x}$ encoding a function $c(\boldsymbol{x})$. A circuit comprises three kinds of computational units: \textit{input}, \textit{product}, and \textit{sum}.
Each sum or product unit $n$ receives the outputs of other units as inputs, denoted as the set $\inscope(n)$. Each $n$ encodes a function $c_n$ defined as:
    (i) a tractable function $f_n(\scope(n); \theta)$ if $n$ is an \textit{input unit} with parameters $\theta$, defined over variables $\scope(n) \subseteq \boldsymbol{x}$, called its \textit{scope};
    (ii) $\prod_{j\in\inscope(n)} c_j(\scope(j))$ if $n$ is a \textit{product unit}; and
    (iii) $\sum_{j\in\inscope(n)} w_{n,j} c_j(\scope(j))$ if $n$ is a \textit{sum unit}, where each $w_{n,j}\in\mathbb{R}$ is a parameter of $n$.
The scope of a sum or product unit is the union of the scopes of its inputs, i.e. $\scope(n) = \bigcup_{j\in\inscope(n)} \scope(j)$.
Then, a \textit{probabilistic circuit} (PC) is a circuit $c$ encoding a non-negative function, i.e., $c(\boldsymbol{x})\geq 0$ for any $\boldsymbol{x}$, thus encoding a (possibly unnormalised) probability distribution $p(\boldsymbol{x})\propto c(\boldsymbol{x})$.
In a PC, input units can model probability densities (e.g., Gaussians) or masses (e.g., categorical distributions) \citep{molina2018mixed}.
Note that both the FF model (in \cref{eq:ff_model}) and SM model (in \cref{eq:sm_model}) are special cases of PCs, as they can be represented as simple computational graphs in this language, as shown in  \cref{fig:circuit_diagrams_cirkit}.

A PC $c$ supports the tractable marginalisation 
of any subset of its variables, and hence also renormalisation, in a single forward step \citep{choi2020pc} 
if
    (i) its input functions
    $f_n$ can be integrated tractably, and 
    (ii) it is \emph{smooth} and \emph{decomposable} \citep{darwiche2002knowledge}.
A circuit is \emph{smooth} if for every sum unit $n$, all of its input units depend on the same variables, i.e. $\forall i,j\in\inscope(n)\colon \scope(i) = \scope(j)$. A circuit is \emph{decomposable} if the inputs of every product unit $n$ depend on disjoint sets of variables, i.e. $\forall i,j\in\inscope(n)\ i\neq j\colon \scope(i)\cap\scope(j) = \emptyset$. 
Throughout this work, we will assume all our PCs are smooth and decomposable by construction (see \cref{sec:building-tab-pc}), which  allows the interpretation of  PCs as hierarchical latent variable models \citep{peharz_latent_2017,gala2024pic,gala2024scaling}. 

\textbf{Beyond just sampling for tabular data.} Therefore, smoothness and decomposability also enable exact sampling in time linear in the circuit size, i.e., the number of edges in it, which can be done via ancestral sampling by recursively generalising the way in which \SM models are sampled, as detailed in \cref{app:pc_sampling}.
More crucially, smoothness and decomposability will allow \ours to perform certain inference tasks that are out of the reach of all the other \sota DGMs.
First, we can compute exact normalised likelihood scores, which we can use for model selection in our experiments 
(\cref{sec:exps}).
Second, one can handle missing values `on the fly' both at inference and training time.
Therefore, \ours can also perform exact conditional sampling for \textit{any} subset of provided evidence configuration, \textit{without} retraining.
We evaluate these additional capabilities of \ours in \cref{sec:tabpc_extensions}.

\begin{figure*}[!t]
	\centering
	\begin{minipage}{0.48\textwidth}
		\centering
		\begin{subfigure}{\textwidth}
			\centering
			\includegraphics[scale=0.55, page=1]{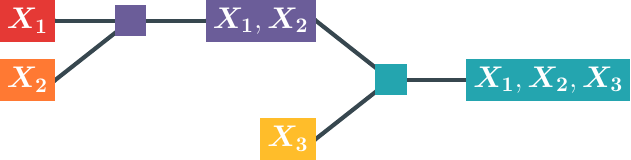}
			\caption{region graph\vspace{10pt}}
			\label{fig:sub1}
		\end{subfigure}
		
		\begin{subfigure}{\textwidth}
			\centering
			\includegraphics[scale=0.6, page=2]{uai-figures/circuit_figures/simple_circuit.pdf}
			\caption{underparameterised PC $K=1$}
			\label{fig:sub2}
		\end{subfigure}
	\end{minipage}
	\begin{minipage}{0.48\textwidth}
		\centering
		\begin{subfigure}{\textwidth}
			\centering
			\includegraphics[scale=0.6, page=3]{uai-figures/circuit_figures/simple_circuit.pdf}
			\caption{overparameterised PC with $K=3$}
			\label{fig:sub3}
		\end{subfigure}
	\end{minipage}
	\caption{
    \textbf{Constructing \ours requires three choices: the region graph (RG), level of overparameterisation, and type of sum-product layers.} \cref{fig:sub1} shows a tree-shaped RG over three variables. This acts as a template from which we construct the simple circuit shown in \cref{fig:sub2}. To increase expressivity, we \textit{overparameterise} the circuit by populating it with $K$ units organised in \textit{layers}, as seen in \cref{fig:sub3} for $K=3$. Finally, the choice of sum-product layer dictates how we connect units across layers. \ours uses CP sum-product layers, also shown in \cref{fig:sub3}, which are described in detail below.
    }
	\label{fig:simple_circuit_pipeline}
\end{figure*}

\subsection{Building \ours}
\label{sec:building-tab-pc}

Several past works  have learned PCs for tabular data, with the majority focussing on learning their structure, i.e., the edges in their  graphs \citep{vergari2015simplifying,molina2018mixed,vergari2019automatic,di2017fast,watson2023arf} and parameters \citep{zhao2016unified} but assuming their computational graphs to be trees instead of directed acyclic graphs (DAGs).
More recent works shifted to learning PCs over image and text modalities \citep{anji2022scaling-pcs-lvd,gala2024pic,
zhang2025scaling},  prescribing the structure of PCs to be a fixed but \textit{overparameterised} DAG, and learning only the parameters \citep{peharz2019ratspn,peharz_einsum_2020,liu2024scaling,loconte2025what,suresh2025tractable}.
As such, modern tricks to scale building and learning PCs have not been explored for TDG, yet.

In this section, we detail the construction of \ours, which uses modern recipes for overparameterised DAG PCs. 
Specifically, we piggyback on the  ``Lego block'' approach to construct smooth and decomposable tensorised PCs  \citep{loconte2025what}, which is summarised in the following `recipe': (i) the \textbf{choice of region graph (RG)},  (ii) the \textbf{level of overparameterisation}, and (iii) the \textbf{choice of sum-product layers}.
We remark that few adaptions must be made to unlock TDG for PCs; the flexibility of the framework enables their use almost `out of the box', highlighting how little is needed to yield a strong baseline that resets the \sota for TDG.

\textbf{(i) Region Graph (RG):}
Given the set of features of a table, a RG is a template to build PCs that are smooth and decomposable by design \citep{peharz2019ratspn}.
A RG is a  bipartite DAG comprising regions, i.e., sets of features, and partitions, describing a hierarchical partitioning of these features.
Each region and partition in a RG will serve as the basis of a \textit{layer} in a PC.
\cref{fig:sub1} illustrates a tree RG over three features and
\cref{app:chow_liu} formalises its construction.

Several approaches to RG construction have been proposed in the literature \citep{loconte2025what}.
For \ours, we use a method which builds a RG from a Chow-Liu tree \citep{chow1968clts} learned on the training data \citep{dang2021strudel,liu2021tractable}. 
This is done by computing the MI between feature pairs and iteratively adding the maximum MI feature pair to the tree. The constructed tree can then be compiled directly into a RG; see \citealt{loconte2025what} and \cref{app:chow_liu} for more details. 
We then build our PC according to the RG, associating input layers to leaf regions, and product and sum layers to partitions and inner regions in the RG. 
Each layer is a logical abstraction that can correspond to a set of units in the circuit.

\textbf{(ii) Overparameterisation:}
In principle, one can use a single unit per layer.
This would build a simple circuit obeying our desired structural constraints. \cref{fig:simple_circuit_pipeline} provides an example of this process.
Alternatively, one can \textit{overparameterise} the layered PC:
in the same way we increase the number of mixture components $K$ in a \SM model, we populate each node of the RG with not just one, but $K$ sum, input, and product units.
This allows for computations to be executed in parallel, thereby increasing efficiency. In practice, these layers are computed as tensors, i.e. are \textit{tensorised}.
More crucially, deep overparameterised circuits are more expressive efficient than shallow mixtures, i.e., using $K$ units per layer yields the expressivity of a mixture with $K^{N/2}$ components, where $N$ is the depth of the PC \citep{choi2020pc}. 
In our experiments, we use $K$ on the order of $10^3$ units per layer (specific values per dataset can be found in \cref{app:tabpc_implementation}).
However, this will not be a drawback when it comes to speed and performance, as shown in \cref{sec:exps}.

\textbf{(iii) Sum-Product Layer:}
After we decide on the number of units $K$, we need to connect them across layers.
For \ours, we use \textit{CP sum-product layers}, described in \citealt{loconte2025what}. The CP sum-product layer computation is defined as follows: if the scope of CP-layer $\bm{\ell}$ is $\bm{y} \subseteq \bm{x}$, and we assume it has two input layers $\bm{\ell}_1, \bm{\ell}_2$ with disjoint scopes $\bm{y}_1, \bm{y}_2$, with $\bm{y}_1 \cup \bm{y}_2 = \bm{y}$, then the CP-layer $\bm{\ell}$ computes
\begin{equation}\label{eq:cp}
    \bm{\ell}(\bm{y}) = \left(\bm{W}^{(1)}\bm{\ell}_1 (\bm{y}_1)\right) \odot \left( \bm{W}^{(2)}\bm{\ell}_2 (\bm{y}_2) \right)
\end{equation}
for some weight matrices $\bm{W}^{(i)} \in \mathbb{R}^{K \times K}, \, i \in \{1,2\}$ 
and where $\odot$ denotes the Hadamard (element-wise) product.

Locally, we can view this as a tensor of $K$ distributions, each of which factorises into two shallow mixture models over their respective variables $\bm{y}_1, \bm{y}_2$. Here, smoothness gives us the interpretation of a mixture model, since the scopes of all mixture components are the same, and decomposability gives us the factorisation, since the scopes of both sum layers are disjoint. By stacking these on top of each other according to the RG, as can be seen in \cref{fig:simple_circuit_pipeline}, we build a deep, hierarchical mixture model---i.e., a PC.
\cref{app:cp_layers} details how the evaluation of these tensorised \ours can be 
sped up further on the GPU.
Once built, one can train PCs by MLE either by stochastic gradient ascent, or by EM \citep{peharz_latent_2017, peharz_einsum_2020}.
In our experiments, we train \ours using the RAdam optimiser \citep{liu_variance_2021}.

In summary, we use \ours to denote our specific implementation for TDG: an overparameterised PC built from a CLT RG and using CP layers.

\section{Re-evaluating \sota for TDG}
\label{sec:exps}

We now  evaluate \ours against  \sota DGMs models on a range of popular benchmarks for TDG, starting from the experimental protocol of \citealt{tabdiff, tabsyn, tabbyflow}, but updating it after our observations in \cref{sec:sota-tab-gen}.
We aim to answer the following questions:
\textbf{Q1)} is \ours competitive in terms of fidelity,  utility and time?
\textbf{Q2)} how much does the \sota change when we use \wnmis and \ctwost with XGBoost for assessing fidelity? 
\textbf{Q3)} can we effectively harness tractable inference with \ours for conditional sampling and model selection?
{ \textbf{Q4)} how do \ours and existing \sota DGMs compare against the generative extension of foundation models such as \tabpfn?}

\textbf{Datasets:} 
We evaluate on the commonly used UCI datasets with heterogeneous features: \textit{Adult}, \textit{Beijing}, \textit{Default}, \textit{Diabetes}, \textit{Magic}, \textit{News}, and \textit{Shoppers}.
\cref{app:datasets} details their statistics and \cref{app:preprocessing} describes their pre-processing.

\textbf{Baselines:}
We follow \citealt{tabsyn,tabdiff} and compare against \sota DGMs for TDG, including 
\ctgan, \tvae, \great \citep{great} and \stasy, \codi, \tabsyn, and \tabdiff. See \cref{sec:sota-tab-gen} and
\cref{app:baseline_details} for details.
{Additionally, to answer Q4 we compare against the data generation extension of \tabpfn v3 (\cref{app:tabpfn_details}).}

\textbf{Metrics:}
We evaluate against the following metrics (organised by category). Each is described in greater technical detail in \cref{app:metrics}.  (i) \textbf{Fidelity}: \shape, \trend, and \ctwost (LR) (implemented using \texttt{SDMetrics} \citep{sdmetrics}), and \textit{$\alpha$-precision} and \textit{$\beta$-recall} \citep{faithful} (implemented using \texttt{Synthcity} \citep{synthcity}). 
We additionally evaluate against our promoted metrics \ctwost (XGB) and \wnmis. 
Due to space constraints we push to \cref{app:mle} the discussion about
(ii) \textbf{Utility}: \textit{machine learning efficacy}
\citep{ctgan} (re-using the implementation of \citealt{tabdiff}).
We provide our results, as well as a further discussion of interpreting and evaluating performance on this metric.
(iii) \textbf{Privacy}: \cref{app:dcr} discusses our results on distance-based privacy metrics, such as the commonly used \textit{distance to closest record (DCR)}. However, we remark that DCR has been strongly criticised in \citealt{yao_dcr_2025} as ``flawed by design'', and hence we do not focus in detail on these results.

\textbf{Evaluation protocol:} 
Differently from  \citealt{tabsyn,tabdiff}, who train a single model per dataset and use it to generate several data samples, we rerun each baseline with five seeds, and report averaged metrics with proper standard deviations.
{This is with the exception of \tabpfn, which does not require retraining or fine-tuning, and hence we compute its metrics averaged over five generations (where possible).}
Apart from reporting full results and average ranks in \cref{app:tables}, we also compute critical difference diagrams (CDDs), which 
provide an indication of whether the performance of two methods are statistically different from each other. \cref{app:cdds} collects all CDDs.

 \subsection{Q1 + Q2: Fidelity Metrics \& TIME }\label{sec:exp_fidelity}

\textbf{\shape} is designed to measure how well synthetic data models the univariate marginals. 
\cref{tab:density-Shape_updated} shows that 
\ours outperforms all previous \sota models in six out of the seven datasets, consistently achieving scores above $0.99$, close to the maximum value of $1$. 
The saturation of this metric suggests that all recent \sota models accurately recover univariate marginals, and scoring well on \shape is necessary but not sufficient to generate realistic data.

\textbf{\trend and \wnmis:} As we have noted, the \trend metric is problematic, and so we instead consider \wnmis scores to measure bivariate dependency quality (for reference, \trend results are reported in \cref{tab:density-Trend_updated}, and \wnmis results in \cref{tab:nmi_l1_weighted complement_updated}. 
In terms of \wnmis, \ours outperforms existing \sota models in four out of seven datasets, and achieves scores greater than $0.99$ in five datasets.
Again, metric saturation suggests that \sota models already capture low-order correlations relatively effectively, and investigating higher-order information would be required for better stratification of model performance.

\textbf{\ctwost (LR / XGB):} 
In comparison to \shape and \trend~/ \wnmis, \ctwost considers samples from the full joint, and not just univariate or pair-wise statistics.
However, as we discussed in \cref{sec:c2st}, \ctwost (LR) is flawed, and hence we instead consider the results using XGBoost as the classifier (LR results are reported in \cref{tab:logistic regression(C2ST)_updated} for reference, and XGB results in \cref{tab:xgboost(C2ST)_updated}). In terms of \ctwost (XGB), \ours offers competitive performance, outperforming the diffusion-based \sota in four out of the seven tested datasets. 
We note that all generally high-performing models struggle on the News dataset, which has a large number of numerical features (i.e. 46 out of 48 features are numerical) and skewed distributions, and remark again that under \ctwost (LR) this struggle was not visible.

\textbf{\alphaprecision and \betarecall:} These metrics extend the notion of precision and recall to generative models \citep{faithful}. \alphaprecision measures synthetic data `realism', whereas \betarecall measures synthetic data `diversity'. 
Full results are in \cref{tab:alpha precision oc_updated} for \alphaprecision and \cref{tab:beta recall oc_updated} for \betarecall. The CDDs in \cref{fig:ap_br_cdd} show that
\ours is in the top clique for both \alphaprecision and \betarecall along with \tabdiff and \tabsyn (and a few other methods for \alphaprecision), suggesting that \ours offers competitive fidelity and diversity of samples to these methods. Note that backwards compatibility of these results with specific values reported by certain prior works is problematic; see \cref{app:ap_br_discrepancy}.

\textbf{Time and discussion:} \ours is able to deliver the performance described above one or two orders of magnitude faster than other DMGs in terms of training time; see \cref{fig:pc-global} and \cref{tab:training time_updated}.
We remark that our implementation can be further sped up by leveraging recent advancements in scaling PCs over GPUs \citep{liu2024scaling,zhang2025scaling}.
{Moreover, we note that \ours provides the best ranked dataset sampling time of the current \sota TDG methods, in particular beating \tabdiff and \tabsyn in six out of the seven datasets; see \cref{tab:sampling time_updated}.}

\subsection{Q3: beyond unconditional sampling}\label{sec:tabpc_extensions}

As discussed in \cref{sec:tab-pc}, \ours allows for the efficient computation of exact likelihoods and arbitrary marginals. 
We now evaluate these capabilities for model selection and conditional sample generation.

\textbf{Model selection via likelihood:} 
We investigate if there is a correlation between the fidelity metrics discussed above with the validation likelihood for \ours.
For high dimensional data such as images, high likelihood does not always corresponding to good sample quality \citep{theis2015note,lang2022elevating,braun2025tractable}.
As we show next, this is not the case for tabular data.
Note that this type of analysis is only possible with PCs, as all previous \sota models do not allow for exact likelihood computation.

Concretely, we compute the validation bits-per-dimension (BPD, see \cref{app:ll_bpd}) and downstream sample quality, as measured by \ctwost (XGB) for the different \ours models we trained with various hyperparameters---specifically, number of sum and input units ($K$), learning rate, and batch size.
\cref{fig:bpd_vs_xgb_c2st} shows that decreasing BPD (corresponding to an increased log-likelihood) strongly correlates  with increasing sample \ctwost (XGB) (and similar plots for all datasets can be found in \cref{app:ll_expts}). 
As PCs struggle to generate high-quality image samples while delivering low bpds \citep{lang2022elevating,braun2025tractable}, we highlight that tabular data is a better modality for this kind of generative model.

\begin{figure}[!t]
    \centering
    \includegraphics[width=0.95\linewidth]{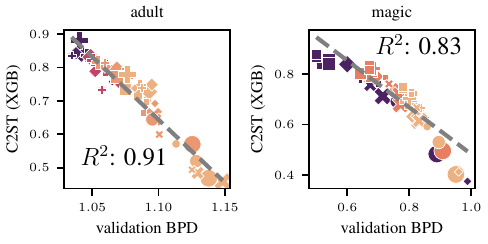}
    \caption{\textbf{Validation bits-per-dimension (BPD) provides a strong signal of downstream sample quality} as measured by \ctwost (XGB), as displayed here for the Adult \textit{(left)} and Magic \textit{(right)} datasets (full results in \cref{app:ll_plots}). 
    Each point represents a different hyperparameter configuration for \ours, across number of sum and input units (colour), batch size (marker size), and learning rate (marker style). The dashed grey line is a Huber regression fit. 
    }
    \label{fig:bpd_vs_xgb_c2st}
\end{figure}

\textbf{Exact and efficient conditional sampling:} 
As discussed in \cref{sec:pcs}, circuits can exactly condition on any arbitrary feature subset.
This enables us to evaluate the fidelity of conditionally generated data in a systematic way.
Note that this is in stark contrast to other DGMs such as diffusion which cannot marginalise exactly and can only retrieve conditional sampling with either heuristics involving training over conditioning masks \citep{song2020score} or more sophisticated MCMC schemes \citep{wedenig2025effective}.

\cref{fig:c2st_cond_sampling_main} reports \ctwost(XGB) scores when conditioning on different percentages of randomly selected features on the test set.
Specifically, $0\%$ of conditioning (left) corresponds to unconditional generation, while observing $100\%$ (right) of the dataset simply means copying it, hence maximum performance. 
We can see how \ours performance quickly increases and never drops (thanks to exact conditioning) and it is able to generate completions which almost perfectly fool the XGBoost classifier with already 50\% observations, hence displaying a high degree of realism.
Note how a simple baseline such as mean imputation (or mode imputation for categorical features) grows much slower.
\cref{fig:cond_sampling_full_size} shows the same trend for \wnmis.
We remark that we condition on a different set of features for each data row, this would make it infeasible to perform heuristic imputation through optimisation \citep{ho2022classifierfreediffusionguidance} with diffusion models.

\begin{figure}[!t]
    \centering
    \includegraphics[width=0.95\linewidth]{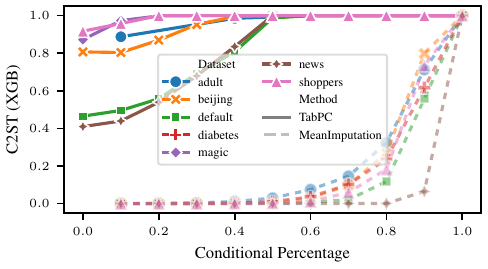}
    \caption{\textbf{\ours produces high fidelity conditional samples}, as shown by its ability to produce datasets with a high \ctwost (XGB) score when \textbf{seamlessly performing exact conditioning} on various percentages of the unseen test set. Solid lines correspond to results from datasets generated by \ours, dashed translucent lines correspond to mean imputation (or mode imputation for categorical features).}
    \label{fig:c2st_cond_sampling_main}
\end{figure}

{
\subsection{Q4: Deep Generative Models vs \tabpfn}

Given the recent promising results of tabular foundation models such as \tabpfn on discriminative tasks (e.g. as shown by \citealt{erickson2026tabarena}), this begs the question: how well can they be extended to generative tasks, and how do they compare to DGMs designed for this task? We focus on \tabpfn as one of the current most popular models, and because of its community extensions for unsupervised tasks.

 Notably, \tabpfn has been extended to provide functionality for synthetic data generation. It does this by sampling features autoregressively following some feature ordering. For robustness, the extension generates according to multiple permutations of the feature ordering, and then averages over these generations. See \cref{app:tabpfn_details} for specific implementation details.

Full results for \tabpfn can again be found in \cref{app:tables}. In particular, we focus here on its \shape, \ctwost (XGB), and \wnmis scores. \tabpfn's scores on \shape place it somewhat below the level of the diffusion and PC-based \sota in capturing univariate marginals, falling somewhere between \tabsyn and \stasy. For \wnmis, \tabpfn again offers performance somewhere between the current \sota and \stasy, suggesting that it is not quite as good at capturing pairwise dependencies as \sota models. In terms of \ctwost (XGB), \tabpfn offers varying performance. For example, on Magic, \tabpfn scores similarly to \ours (the top-ranking method for this dataset), whereas on News and Shoppers, the score is almost zero. We conclude that the quality of the joint distribution captured by these samples varies greatly per dataset. Also, \tabpfn has an average rank above \tabsyn and \ours for \betarecall, and generally scores well for this metric. This suggests that \tabpfn is able to generate a comparably diverse range of samples. And, while the utility metric MLE is not a main focus of this work, it also performs well under this; see \cref{app:mle}.

We also note that the dataset sampling time is significantly higher for \tabpfn than all other methods. 
See \cref{fig:c2st_train_n_sample} for comparison against \sota DGMs.
This all suggests that, while it is interesting that \tabpfn can be adapted to handle generative tasks, its performance in generating high-fidelity data is currently somewhat behind the level of bespoke models. 
All in all, \ours stands as the fastest alternative that delivers the best fidelity performance overall.

\begin{figure}[!t]
    \centering
    \includegraphics[width=0.95\linewidth]{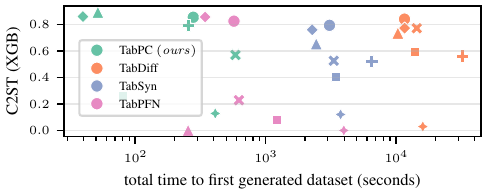}
    \caption{{\textbf{\ours models can train \textit{and} generate one dataset in less time than it takes to generate one dataset from \tabpfn,} while providing superior fidelity under our metrics. Though producing  similar fidelity data to \ours, \tabdiff and \tabsyn come at the cost of a longer total time. For \tabpfn we just plot the time to generate one dataset.}}
    \label{fig:c2st_train_n_sample}
\end{figure}

}

\section{Conclusion}
In this work, we \textit{\textbf{reset the current \sota for TDG}}, highlighting how  there is clearly still a long way to go in generating realistic synthetic data.
By highlighting some of the flaws in how two fidelity metrics (\trend and \ctwost) are currently used, we have determined that \textit{\textbf{our progress in fidelity is not as advanced as we may have believed}}. 
We then provided empirical and theoretical evidence to \textit{\textbf{support the widespread use of more robust metrics such as \wnmis and \ctwost(XGB)}}.
Moreover, we introduced the simplest baseline that sets \textit{\textbf{a new standard in trading-off computation for downstream data quality}}, \ours, which opens up future venues in TDG such as efficient conditional generation without retraining.
We also note that our \ours implementation can benefit from further advancements from the circuit literature such as introducing negative parameters \citep{loconte2024subtractive,loconte2025sum,loconte2026square}, and continuous latent variables \citep{correia2023continuous,gala2024pic,gala2024scaling}, which are expected to boost expressiveness further with a small computational overhead.
Lastly, we highlight how we should revisit in future work classical PC papers learning the structure of circuits \citep{molina2018mixed,vergari2019automatic,correia2020joints} or proposing different objectives for parameter learning \citep{watson2023arf}, noting that they could be extended to our overparameterized setting.

\begin{contributions} %
DS and AV conceived the original idea and they later discussed it with DP, AJ, HG, SS and LB. 
DS provided the first implementation of \ours, ran preliminary experiments for hyperparameter selection and proposed the \methodname{wNMI}-based evaluation metric.
DP proved \cref{thm:ff_fool_lr} with help from HG and AV,
provided the final implementation and hyperparameter selection used in the experiments, and is responsible for all tables and plots, with the exception of \cref{fig:simple_circuit_pipeline}, later improved by AJ.
AJ implemented conditional sampling, with help from DS.
HG proposed to compare  bpds and downstream performance.
DP led the writing of the paper, with help from DS, AJ, HG and AV.
AV supervised all phases of the project and provided feedback throughout.
\end{contributions}

\begin{acknowledgements} 
We would like to acknowledge the \href{https://april-tools.github.io/}{april lab} for its support and feedback, in particular Charles Bricout. Furthermore, we would like to thank Eleonora Giunchiglia for her advice and early feedback, and Chris Williams for an insightful discussion about earlier VAE-based approaches to TDG and association measures for the $2 \times 2$ contingency table. {Also, we would like to thank Gennaro Gala for providing a more efficient sampling implementation for PCs which helped to greatly improve the dataset sampling times.}
DS and SS were funded by the European Union through grant agreement 101218531 - SydAi.
HG was supported by the Royal Academy of Engineering under the Research Fellowship programme.
DP, AJ and AV are supported by the "UNREAL: Unified Reasoning Layer for Trustworthy ML" project (EP/Y023838/1) selected by the ERC and funded by UKRI EPSRC. 

\end{acknowledgements}

\bibliography{uai2026-template}

\newpage

\onecolumn

\title{A Sobering Look at Tabular Data Generation via Probabilistic Circuits\\(Supplementary Material)}
\maketitle

\appendix

\section{Proofs}\label{app:lr_c2st_proof}

In this section, we prove \cref{thm:ff_fool_lr}, firstly by proving the following lemma.

\begin{lemma}\label{lem:c2st_lr}
    Let $\mathcal{D}_p,\,\mathcal{D}_q$ with $|\mathcal{D}_p| = |\mathcal{D}_q| = N$ be two sets of datapoints drawn i.i.d. from distributions $p$ and $q$ respectively. If
    \begin{equation}\label{eq:lr_assumption}
        \frac{1}{N} \sum_{\bm{x}\in \mathcal{D}_p} \bm{x} = \frac{1}{N} \sum_{\bm{x}\in \mathcal{D}_q} \bm{x},
    \end{equation}
    i.e. the means of $\mathcal{D}_p$ and $\mathcal{D}_q$ are equal, then the LR classifier learned by maximum likelihood estimation to distinguish $\mathcal{D}_p$ and $\mathcal{D}_q$ is the random classifier.
\end{lemma}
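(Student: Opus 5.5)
We will show that the point $(\bm{w},b)=(\bm{0},0)$ is a global maximiser of the logistic-regression log-likelihood. Label the points of $\mathcal{D}_p$ with $y=1$ and those of $\mathcal{D}_q$ with $y=0$, and let $\sigma$ denote the sigmoid. The log-likelihood is
\begin{equation*}
\mathcal{L}(\bm{w},b) = \sum_{\bm{x}\in\mathcal{D}_p}\log\sigma(\bm{w}^\top\bm{x}+b) + \sum_{\bm{x}\in\mathcal{D}_q}\log\bigl(1-\sigma(\bm{w}^\top\bm{x}+b)\bigr).
\end{equation*}
The argument has three steps: compute the gradient, show it vanishes at the origin using \cref{eq:lr_assumption}, and then use concavity to conclude that this stationary point is a global maximum.

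\textbf{Gradient.} Using $\sigma'=\sigma(1-\sigma)$, the gradient is
\begin{equation*}
\nabla_{\bm{w}}\mathcal{L} = \sum_{(\bm{x},y)}\bigl(y-\sigma(\bm{w}^\top\bm{x}+b)\bigr)\bm{x}, \qquad \partial_b\mathcal{L} = \sum_{(\bm{x},y)}\bigl(y-\sigma(\bm{w}^\top\bm{x}+b)\bigr).
\end{equation*}
At $(\bm{0},0)$ we have $\sigma\equiv 1/2$. The bias derivative is then $N\cdot\tfrac12 - N\cdot\tfrac12 = 0$, since the two classes are balanced. The weight gradient becomes
\begin{equation*}
\tfrac12\Bigl(\sum_{\mathcal{D}_p}\bm{x} - \sum_{\mathcal{D}_q}\bm{x}\Bigr),
\end{equation*}
which vanishes exactly by \cref{eq:lr_assumption}.

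\textbf{Concavity.} The Hessian of $\mathcal{L}$ is $-\sum \sigma(1-\sigma)\,\tilde{\bm{x}}\tilde{\bm{x}}^\top$, where $\tilde{\bm{x}}=(\bm{x},1)$. This matrix is negative semidefinite, so $\mathcal{L}$ is concave and every stationary point is a global maximiser.

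\textbf{Main obstacle: uniqueness.} Concavity alone does not make the maximiser unique. If the augmented data do not span the full space, for instance when a feature is constant or features are collinear (as with one-hot encodings), there is a flat direction. In that case I would argue as follows. Every maximiser $(\bm{w},b)$ yields the same predictions on the data, because strict concavity holds along spanned directions. Alternatively, one can restrict to the minimum-norm solution, or to the unique solution under the ridge penalty that standard implementations use. Adding an $\ell_2$ penalty makes the objective strictly concave, and its gradient still vanishes at the origin, so the regularised MLE is exactly $(\bm{0},0)$. I would state the lemma under this convention, or under a full-rank assumption.

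\textbf{Conclusion.} At $(\bm{0},0)$ the classifier outputs $1/2$ for every input. Its scores are therefore constant, so it is the random classifier and has AUROC $0.5$.
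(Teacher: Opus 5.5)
Your proof is the paper's own argument: the gradient at the origin reduces to $\tfrac12\bigl(\sum_{\mathcal{D}_p}\bm{x}-\sum_{\mathcal{D}_q}\bm{x}\bigr)=\bm{0}$ by the mean-equality, and concavity then makes the origin a global maximiser. Your two additions are refinements of that route, not a different one: the intercept, whose derivative vanishes because $|\mathcal{D}_p|=|\mathcal{D}_q|$, matches how LR is used in practice for C2ST, and your full-rank or $\ell_2$-penalty condition repairs the paper's step inferring \emph{uniqueness} from concavity alone, which fails for rank-deficient designs such as one-hot encoded features.
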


\begin{proof}
    Let $y=1$ denote the case that a datapoint $\bm{x}$ comes from $p$, and $y=0$ denote the case that it comes from $q$. We form a dataset $\mathcal{D} = \{(\bm{x},0) \, | \, \bm{x}\in \mathcal{D}_q\} \cup \{(\bm{x},1) \, | \, \bm{x}\in \mathcal{D}_p\}$, on which to train the logistic regressor classifier $c_{\bm{w}}(y=1|\bm{x}) = \sigma(\bm{w}^{T} \bm{x})$, where $\sigma$ denotes the sigmoid function $\sigma(x) = 1 / (1+\exp(-x))$.

    When training with maximum likelihood estimation, we want the classifier to maximise the data log-likelihood. In this case, the data log-likelihood is given by
    \begin{align}
        \mathcal{L}(\mathcal{D}, \bm{w}) &= \sum_{(\bm{x},y) \in \mathcal{D}} \log c_{\bm{w}}(y\,|\,\bm{x}), \\
        &= \sum_{\bm{x}\in\mathcal{D}_p} \log c_{\bm{w}}(y=1 \,|\, \bm{x}) + \sum_{\bm{x}\in\mathcal{D}_q} \log c_{\bm{w}}(y=0 \,|\, \bm{x}), \\
        &= \sum_{\bm{x}\in\mathcal{D}_p} \log \sigma(\bm{w}^T\bm{x}) + \sum_{\bm{x}\in\mathcal{D}_q} \log (1 -  \sigma(\bm{w}^T\bm{x})).
    \end{align}
    Taking the gradient with respect to $\bm{w}$, we therefore have by linearity that
    \begin{align}
        \nabla_{\bm{w}} \mathcal{L}(\mathcal{D}, \bm{w}) &= \sum_{\bm{x}\in\mathcal{D}_p} \nabla_{\bm{w}} \left(\log \sigma(\bm{w}^T\bm{x}) \right) + \sum_{\bm{x}\in\mathcal{D}_q} \nabla_{\bm{w}} \left( \log (1 - \sigma(\bm{w}^T\bm{x})) \right) \label{eq:lr_step_linearity}, \\
        &= \sum_{\bm{x}\in\mathcal{D}_p} \left(1- \sigma(\bm{w}^T\bm{x}) \right)\bm{x} - \sum_{\bm{x}\in\mathcal{D}_q} \sigma(\bm{w}^T\bm{x})\bm{x} \label{eq:lr_step_chain_rule},
    \end{align}
    where \cref{eq:lr_step_chain_rule} follows from \cref{eq:lr_step_linearity} by the chain rule and due to the fact that $\sigma'(x) = \sigma(x)(1 - \sigma(x))$.
    
    Setting the weights to be zero, $\bm{w} = \bm{0}$, we see that
    \begin{align}
        \nabla_{\bm{w}} \mathcal{L}(\mathcal{D}, \bm{w}=\bm{0}) &= \sum_{\bm{x}\in\mathcal{D}_p} \left(1- \sigma(\bm{0}^T\bm{x}) \right)\bm{x} - \sum_{\bm{x}\in\mathcal{D}_q} \sigma(\bm{0}^T\bm{x})\bm{x}, \\
        &= \sum_{\bm{x}\in\mathcal{D}_p} \left(1- \frac{1}{2} \right)\bm{x} - \sum_{\bm{x}\in\mathcal{D}_q} \frac{1}{2}\bm{x}, \\
        &= \frac{1}{2} \left( \sum_{\bm{x}\in\mathcal{D}_p} \bm{x} - \sum_{\bm{x}\in\mathcal{D}_q} \bm{x}\right), \\
        &= \bm{0},
    \end{align}
    where the last step holds by \cref{eq:lr_assumption}. Therefore, $\bm{w}=\bm{0}$ is a stationary point of the log-likelihood. Now, since the log-likelihood function of logistic regression is concave, $\bm{w}=\bm{0}$ is hence the unique global maximum. The logistic regressor classifier with weights $\bm{0}$ assigns probability $1/2$ to all points, i.e. is the random classifier. The classifier learned by MLE is hence the random classifier.
\end{proof}

We use this lemma to prove \cref{thm:ff_fool_lr}, which we restate here for convenience.

\begin{theorem}

    Let $\mathcal{D}_R$ with $|\mathcal{D}_R| = n$ be a real dataset, and let $p$ denote a \FF model trained by MLE on this dataset.
    Then
    \begin{equation*}
       \lim_{n \to \infty} \mathbb{E}_{\mathcal{D}_S}[\ctst(LR, \mathcal{D}_R, \mathcal{D}_S)] = 1,
    \end{equation*}
    where $\mathcal{D}_S$ is an i.i.d. sample of $n$ items drawn from $p$.
\end{theorem}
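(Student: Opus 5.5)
We reduce the statement to \cref{lem:c2st_lr}. The key observation is that the \FF model $p$ trained by MLE reproduces the empirical means of every (preprocessed, e.g.\ one-hot encoded) feature of $\mathcal{D}_R$. A Gaussian fitted by MLE has mean equal to the sample mean of its column. A categorical fitted by MLE has probabilities equal to the empirical frequencies, which are exactly the means of the one-hot indicators. Hence $\mathbb{E}_{p}[\bm{x}] = \frac{1}{n}\sum_{\bm{x}\in\mathcal{D}_R}\bm{x} =: \bar{\bm{x}}_R$, coordinatewise. (If the LR has a bias term, we treat it as a constant feature equal to $1$, whose means trivially match.)

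The synthetic set $\mathcal{D}_S$ is an i.i.d.\ sample of size $n$ from $p$. By the strong law of large numbers, its empirical mean satisfies $\bar{\bm{x}}_S - \bar{\bm{x}}_R \to \bm{0}$ almost surely, at rate $O_P(n^{-1/2})$, assuming finite second moments. We do \emph{not} get exact equality, so \cref{lem:c2st_lr} cannot be applied verbatim. Instead we will use a perturbation version of it. The LR log-likelihood, normalised by $2n$, is strictly concave near $\bm{w}=\bm{0}$, with Hessian $-\tfrac14\,\hat\Sigma$, where $\hat\Sigma$ is the pooled empirical second-moment matrix. We assume $\hat\Sigma$ is nondegenerate, after dropping one redundant one-hot column per categorical feature. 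Its gradient at $\bm{0}$ is $\tfrac12(\bar{\bm{x}}_R-\bar{\bm{x}}_S)\to\bm{0}$. By strong concavity, the maximiser satisfies $\hat{\bm{w}}_n=O(\|\bar{\bm{x}}_R-\bar{\bm{x}}_S\|)\to\bm{0}$ almost surely, which is the limit analogue of the lemma's conclusion that the fitted classifier is the random one.

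Next we need that the classifier's AUROC on the held-out balanced test split tends to $1/2$. This is the step I expect to be the main obstacle, since AUROC is scale-invariant in $\bm{w}$: a tiny but nonzero $\hat{\bm{w}}_n$ ranks points by $\hat{\bm{w}}_n^T\bm{x}$ just as a large one in the same direction would. To handle this, we will normalise $\bm{u}_n=\hat{\bm{w}}_n/\|\hat{\bm{w}}_n\|$ and argue that for \emph{any} fixed direction $\bm{u}$, the AUROC of the score $\bm{u}^T\bm{x}$ between real and synthetic test points converges to $P(\bm{u}^T X > \bm{u}^T X')$, with $X$ real and $X'\sim p$, plus half the ties. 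This equals $1/2$ whenever the laws of $\bm{u}^T X$ and $\bm{u}^T X'$ coincide.

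That requirement is stronger than matching means. I would therefore establish the result under the paper's implicit regime, where the classifier effectively only exploits first moments. Concretely, I would show that the population LR optimum between the data distribution and $p$ is $\bm{w}^\star=\bm{0}$, because the population means match. The trained classifier is then, in the limit, the constant score, and ties yield AUROC $=1/2$. So I will interpret "the LR classifier" as its limiting predictions $\sigma(\hat{\bm{w}}_n^T\bm{x})\to 1/2$ uniformly on bounded sets. I will use a uniform-integrability or tie-breaking convention to conclude that $\mathrm{AUROC}\to 1/2$ in expectation.

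Finally, the map $a\mapsto 1-(2\max(a,0.5)-1)$ is bounded and continuous. Applying bounded convergence then gives $\mathbb{E}_{\mathcal{D}_S}[\ctst]\to 1$.
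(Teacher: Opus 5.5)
You take the same route as the paper. MLE for the FF model reproduces the empirical means, so by Lemma~\ref{lem:c2st_lr} the fitted LR weights are asymptotically $\bm{0}$, the classifier is ``the random one'', its AUROC is $\tfrac12$, and C2ST is $1$. Your strong-concavity bound $\hat{\bm{w}}_n=O(\|\bar{\bm{x}}_R-\bar{\bm{x}}_S\|)$ is more careful than the paper, which applies a lemma requiring exactly equal means ``in the limit''.

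However, the step you flag is a genuine gap, and your way of closing it does not prove the statement. As a function of $\bm{w}$, the AUROC of $\sigma(\bm{w}^T\bm{x})$ is scale-invariant, hence in general discontinuous at $\bm{w}=\bm{0}$. For $\hat{\bm{w}}_n\neq\bm{0}$ it depends only on the direction $\bm{u}_n$ of the non-constant part of $\hat{\bm{w}}_n$, however small the norm. So ``$\sigma(\hat{\bm{w}}_n^T\bm{x})\to\tfrac12$ uniformly'' says nothing about the ranking. Replacing the fitted classifier by its limit $\bm{w}^\star=\bm{0}$, or adopting a tie convention that declares nearly equal scores tied, changes the quantity C2ST computes. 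Uniform integrability is irrelevant, since the score is bounded.

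Nor does $\bm{u}_n$ settle down. Under the same nondegeneracy you assume, $\sqrt n\,\hat{\bm{w}}_n$ has a nondegenerate Gaussian limit. Hence $\bm{u}_n$ converges in law to a random direction $U$ with full support. The test AUROC then tracks $A(U)$ rather than $\tfrac12$, where $A(\bm{u})=P(\bm{u}^TX>\bm{u}^TX')+\tfrac12P(\bm{u}^TX=\bm{u}^TX')$, with $X$ real (labelled positive) and $X'\sim p$ independent.

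Matching means does not force $A\equiv\tfrac12$, so the gap cannot be closed without a new hypothesis. Take two binary columns with $X_1=X_2\sim\mathrm{Bern}(0.2)$. The FF model makes them asymptotically independent $\mathrm{Bern}(0.2)$, with the same means. For every $\bm{u}$ with $u_1,u_2<0$, $\bm{u}^TX$ equals $0$ or $u_1+u_2$ with probabilities $0.8,0.2$. Meanwhile $\bm{u}^TX'$ equals $0,u_1,u_2,u_1+u_2$ with probabilities $0.64,0.16,0.16,0.04$, so
\[
A(\bm{u})=0.8\cdot 0.36+\tfrac12\,(0.8\cdot 0.64+0.2\cdot 0.04)=0.548 .
\]
$U$ lands in this open cone (or its negative, depending on which class is labelled positive) with probability bounded away from $0$. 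Hence the expected C2ST stays bounded away from $1$; averaging also over $\mathcal{D}_R$, it tends to about $0.95$.

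The paper's proof has exactly the same hole. It passes from ``in the limit we learn $\bm{w}=\bm{0}$'' to ``the expected AUROC is $0.5$'', which is the identification you make by reinterpretation. The missing ingredient is a hypothesis giving $P(\bm{u}^TX>\bm{u}^TX')=P(\bm{u}^TX<\bm{u}^TX')$ for every $\bm{u}$. This is weaker than equality of laws, which you rightly note is not implied by matching means. One example is real data centrally symmetric about its mean together with symmetric FF marginals, as for jointly Gaussian data. Under such a hypothesis, your fixed-direction argument goes through, combined with uniform-in-$\bm{u}$ convergence of the empirical AUROC and bounded convergence. Without it, what you and the paper actually establish is the claim for the population-optimal classifier $\bm{w}^\star=\bm{0}$, not for the fitted LR.
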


\begin{proof}
    We learn FF models with MLE by matching the empirical sufficient statistics.
    Therefore, in the limit of $n \to \infty$, we have that $\frac{1}{n}\sum_{\mathcal{D}_S} \bm{x}$ (i.e. the sample mean) converges to the original mean $\frac{1}{n}\sum_{\mathcal{D}_R} \bm{x}$ for any dataset $\mathcal{D}_S$ of i.i.d. samples from $p$.
    By applying Lemma \ref{lem:c2st_lr}, in the limit as $n \to \infty$ we learn the random classifier with weights $\bm{w}=\bm{0}$. 
    Now, the expected AUROC of the random classifier is 0.5, and therefore the expected \ctwost score in the limit is $1$.
\end{proof}

\section{Probabilistic Circuits}\label{app:pcs_top}

\begin{figure}
    \centering
    \includegraphics[width=0.23\linewidth, page=8]{uai-figures/circuit_figures/circuits.pdf}\hspace{50pt}
    \includegraphics[width=0.37\linewidth, page=9]{uai-figures/circuit_figures/circuits.pdf}
    \caption{\textbf{The simple \ref{eq:ff_model} and \ref{eq:sm_model} models are special cases of probabilistic circuits.} The \ref{eq:ff_model} simply puts a product unit over independent input distributions. The \ref{eq:sm_model} mixes together with a sum unit $K$ separate \ref{eq:ff_model} models (here, $K=2$).} 
    \label{fig:circuit_diagrams_cirkit}
\end{figure}

\subsection{Chow-Liu Algorithm and Region Graph}\label{app:chow_liu}

The region graph (RG) tells us how to build our PC; in the interpretation as a deep, hierarchical mixture model, the RG instructs us at what depth to mix together different components. If the RG is built carefully, the PC we construct will be smooth and decomposable by design.

Each node in the RG is either a region $\mathcal{R}$, denoting a subset $\mathcal{R}\subseteq\boldsymbol{x}$, or a partition, which describes how a region is partitioned into other regions, $\mathcal{R} = \mathcal{R}_1 \cup \mathcal{R}_2$, with $\mathcal{R}_1 \cap \mathcal{R}_2 = \emptyset$. The root of the RG must necessarily be the `full' region $\mathcal{R}=\bm{x}$. So to construct an RG, we need a way of choosing how to partition regions at each level.

When using the Chow-Liu algorithm \citep{chow1968clts} to build a region graph, we first build the Chow-Liu tree based on the training data. The Chow-Liu tree is built by computing the pairwise mutual information between all pairs of features. This yields a fully connected weighted graph between all features. After that, it builds the maximum spanning tree by using Kruskal's algorithm \citep{kruskal1956shortest}. This yields the Chow-Liu tree (CLT).

Once we have the CLT, this can be compiled directly into a region graph. This process works as follows. We obtain the Jordan centre \citep{gadat2018barycenter, floyd1962shortest, wasserman1994social} of the CLT, and select this as the root. Then, we build the region graph by progressively merging scopes, starting from the leaf nodes and moving to the root. An example of this is shown above in \cref{fig:clt}. For more details, see \citealt{dang2021strudel, liu2021tractable, loconte2025what}.

\begin{figure}
    \centering
    \includegraphics[width=0.32\linewidth]{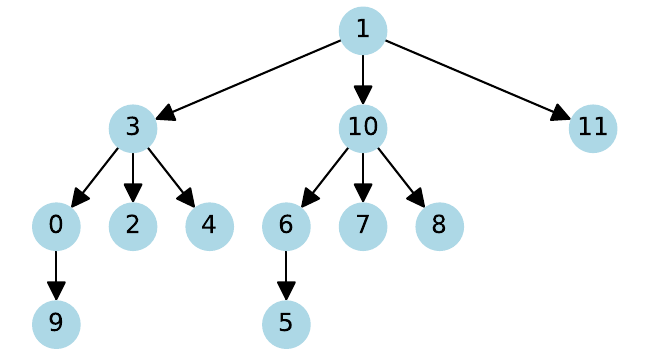}
    \includegraphics[width=0.32\linewidth]{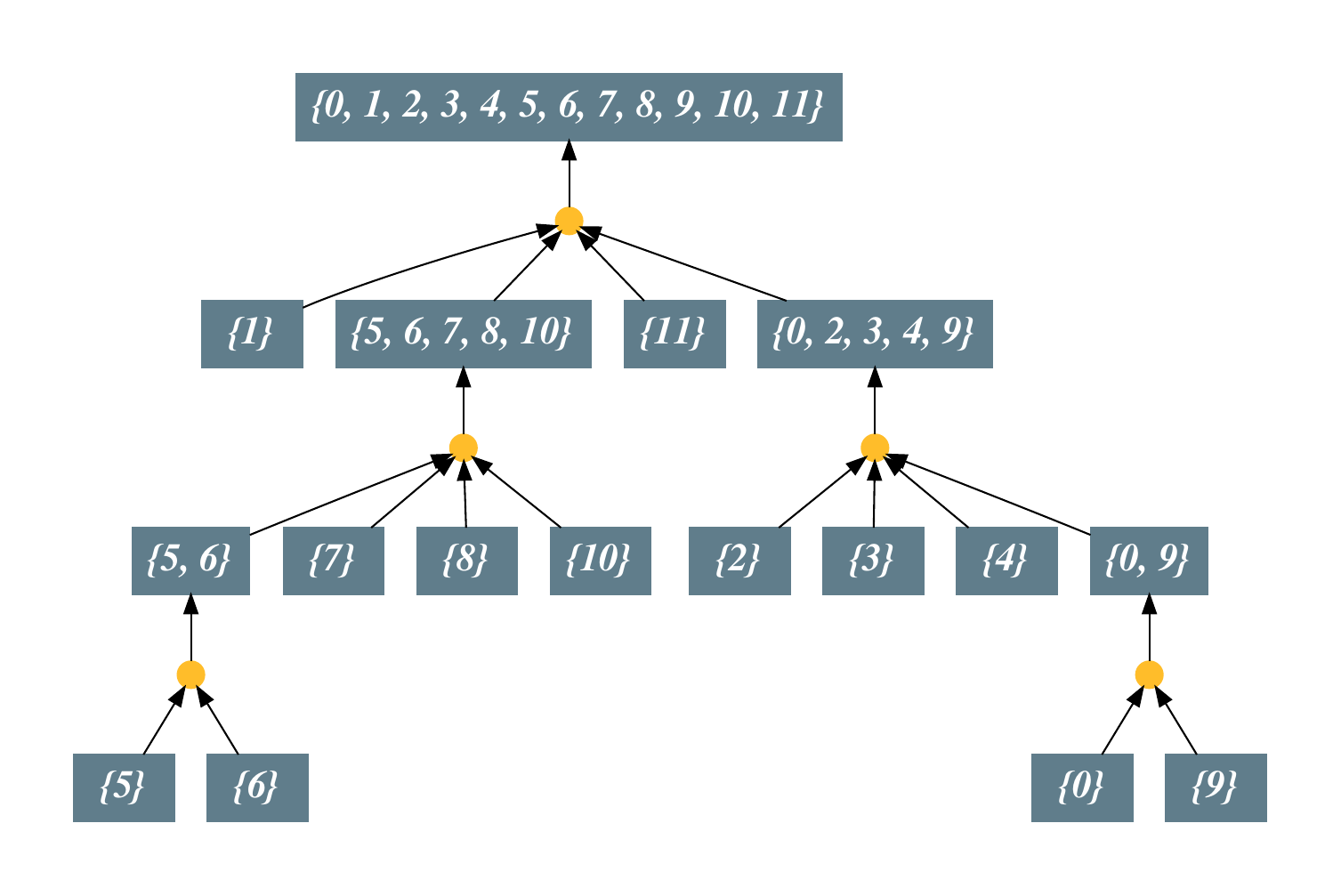}
    \includegraphics[width=0.32\linewidth]{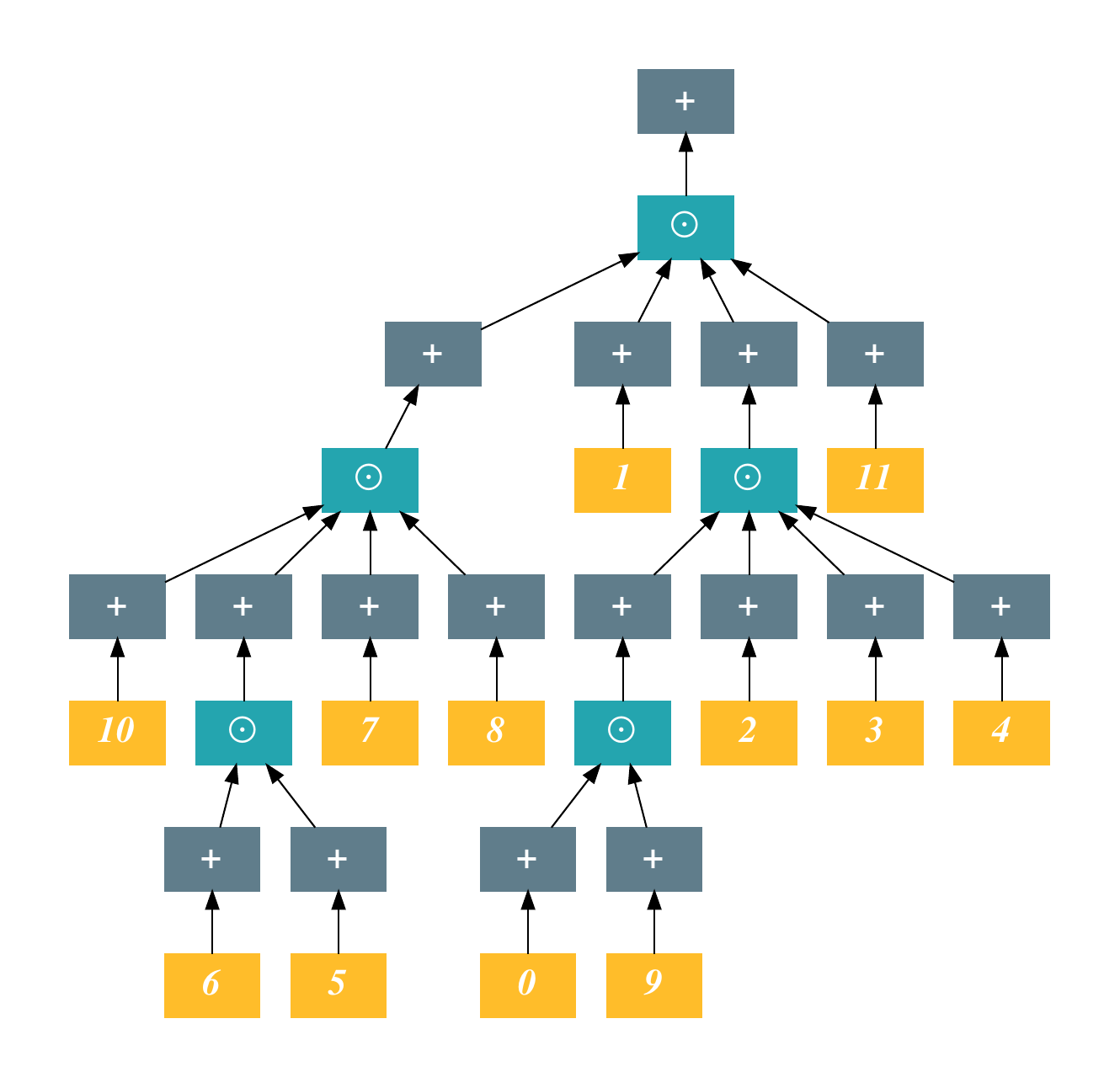}
    \caption{
    \textbf{Circuits can be built directly from a learned Chow-Liu tree}, as we see in the pipeline shown here for the Magic dataset. We first learn a Chow-Liu tree from the training data and root it at its barycentre \textit{(left)}, then compile this into a region graph \textit{(centre)}, then from this we build a circuit \textit{(right)}, where here we choose CP sum-product layers.
    }
    \label{fig:clt}
\end{figure}

\subsection{Sampling (conditionally) from smooth and decomposable PCs}\label{app:pc_sampling}

Sampling from smooth and decomposable PCs can be done via ancestral sampling, by using the latent variable interpretation of each sum unit. This corresponds to a backward traversal of its computational graph. Starting at the output unit, at each sum unit we sample one input branch proportionally to its weight, and then continue traversing the graph towards the inputs. At each product unit, we traverse the graph along all input branches. Once we reach an input unit, we sample from this as normal. Assuming smoothness and decomposability, we are guaranteed to end up in a set of input units whose scope is $\bm{x}$ and where only one input unit (or rather, input layer) is selected per variable.

We can additionally perform conditional sampling if the input units support tractable conditional sampling. Overall, conditional sampling can be performed by first performing a forward pass with the observed evidence, using this information to modify the sum weights, and then performing the same ancestral sampling as before using these new weights. During this forward pass, the input units either propagate their encoded function evaluated at the observed value (if their scope variable is observed) or $1$ (since we are marginalising out unobserved variables). Then, we use these modified weights and perform sampling as before. When we reach an input unit whose scope is observed, instead of sampling we take the observed value of its scope variable (assuming univariate scope).

This procedure is detailed in \cref{alg:pc-sampling}, which is adapted from \citealt{grivas2026fast}. Here, we additionally assume for simplicity that input units have univariate scope, but the algorithm can be extended to consider multivariate inputs.

\begin{algorithm}[!t]
    \caption{$\textsc{ConditionalSample}(c)$}\label{alg:pc-sampling}
    \textbf{Input:} A smooth, decomposable and normalised PC  $c$  encoding a joint distribution over $\bm{X}=\{X_1,\ldots,X_n\}$
    and observed values $\bm{X}_o = \bm{x}_o$ for $\bm{X}_o \subseteq \bm{X}$. \\
    \textbf{Output:} a sample $\bm{x}_m \sim c(\bm{X}_m | \bm{X}_o)$ for $\bm{X}_m = \bm{X} \setminus \bm{X}_o$.
    \begin{algorithmic}[1]
        \For{$c_n \in \mathsf{FeedforwardOrder(c)}$}
        \Comment{first compute $c(\bm{x}_o)$ and store intermediate results}
        \If{$c_n$ is an input unit over variable $X_i \notin \bm{X}_o$}
        \State $r_n \leftarrow 1$
        \ElsIf{$c_n$ is an input unit over variable $X_i \in \bm{X}_o$}
        \State $r_n \leftarrow c_n(x_{\phi(n)})$
        \Comment{compute probability of observation; $\phi(n)$ outputs $c_n$'s scope variable}
        \ElsIf{$c_n$ is a sum unit}
        \State $r_n \leftarrow \sum_{j\in\inscope(n)} \omega_j r_j $
        \Comment{$\inscope(n)$ denotes the input units to $c_n$}
        \ElsIf{$c_n$ is a product unit}
        \State $r_n \leftarrow \prod_{j\in\inscope(n)} r_j$
        \EndIf
        \EndFor
        \State $\bm{x} \leftarrow \mathsf{zeroes}(n)$
        \Comment{init empty sample}
        \State $c_n \leftarrow \mathsf{output}(c)$
        \State $\mathcal{N} \leftarrow \mathsf{queue}(\{c_n\})$
        \Comment{traverse the computational graph from outputs to inputs}
        \While{$\mathcal{N}$ not empty} 
        \State $c_n \leftarrow \mathsf{pop}(\mathcal{N})$
        \If{$c_n=\sum_{j=1}^{K}\omega_jc_j$ }
        \Comment{$c_n$ is a sum unit with $K$ inputs}
        \For{$j=1 \ldots K$}
        \State $\widetilde{\omega}_j \leftarrow \omega_j r_j / \sum_{i=1}^K \omega_i r_i$
        \Comment{modify sum weights by conditioning information}
        \EndFor
        \State $k\leftarrow \mathsf{sampleCategorical}(\widetilde{\omega}_1,\ldots, \widetilde{\omega}_K)$
        \Comment{sample from a categorical with $K$ states}
        \State $\mathcal{N} \leftarrow \mathsf{push}(\mathcal{N}, c_k)$
        \ElsIf {$c_n=\prod_{j=1}^{d} c_j$}
        \Comment{$c_n$ is a product unit with $d$ inputs}
        \For{$k=1\ldots d$}
        \State $\mathcal{N} \leftarrow \mathsf{push}(\mathcal{N}, c_k)$
        \Comment{visit all inputs of $c_n$}
        \EndFor
        \ElsIf{$c_n$ is an input unit over variable $X_i \notin \bm{X}_o$ with parameters $\phi_i$}
        \State $x_i\leftarrow \mathsf{sampleUnit}(\phi_i)$ \Comment{sample from the input distribution if unobserved}
        \ElsIf{$c_n$ is an input unit over variable $X_i \in \bm{X}_o$}
        \State $x_i \leftarrow x_{o,i}$
        \Comment{otherwise take observed value}
        \EndIf
        \EndWhile
    \State \textbf{return} $\bm{x}$ 
    \end{algorithmic}
\end{algorithm}

This algorithm relies on the following observations: for a smooth and decomposable PC, the conditional of a sum unit is a sum unit over conditional inputs, and the conditional of a product unit is a product unit over conditional inputs. Further details can be found in \url{https://github.com/smatmo/ESSAI24-PCs/blob/master/lecture01/lecture01.pdf} (slides 37--42).

\subsection{Additional details on tensorising and training PCs}\label{app:cp_layers}

Following the choice of region graph, degree of overparameterisation, and choice of sum-product layer, the final step in circuit construction pipeline is to \textit{fold} the PC. This means stacking layers with the same functional form so that they can be evaluated in parallel, yielding significant computational benefits \citep{loconte2025what}. Note that folding does not change expressivity --- it just enables more efficient computation. For \ours, we use the default folding algorithm provided in the \texttt{cirkit} package \citep{The_APRIL_Lab_cirkit_2024}.\footnote{\url{https://github.com/april-tools/cirkit}} All later experiments are also implemented using this package.

The training objective for PCs is to directly maximise the log-likelihood (LL) of the training data under the model, which can be tractably evaluated in a single forward pass (if the PC is smooth and decomposable). The LL can be optimised by stochastic gradient ascent (SGD), or by the more specific expectation maximisation (EM), which has been derived for PCs \citep{peharz_latent_2017, peharz_einsum_2020}. In our experiments, we train using the RAdam optimiser \citep{liu_variance_2021}, since EM is not currently implemented in \texttt{cirkit}.

\section{More Related Works}\label{app:related}

Before the recent spate of works on TDG mentioned in the main text, some earlier works focused in greater detail on using VAEs. These works were not necessarily framed under the modern banner of TDG, but typically tended to focus on missing data imputation and various tasks associated with this. Such works include \citealt{ivanov2018variational, ma2020vaem, collier2020vaes, ma2018eddi, mattei2019miwae}.

Another related line of research tackles imposing semantic constraints over the generated data \citep{stoian2024realistic,stoian2025beyond} and evaluating their violation by DGMs.
While \ours is not designed to satisfy constraints, works on using PCs and other tractable models for neuro-symbolic constraint satisfaction could be investigated in the future \citep{ahmed2022semantic,kurscheidt2025probabilistic}.

Some prior works have also focused in some detail on investigating suitable measures of association between random variables, in particular for the $2 \times 2$ contingency table \citep{williams2022suspicious, edwards1963measure, hasenclever_comparing_2016}. Since we consider a more general case, we focus in the main text on comparing \trend and \wnmis, but note that adapting such discussions to evaluating synthetic tabular data could be an interesting avenue for future work.

\section{Experimental Setup}\label{app:exps_top}

We release the code, and will release all final model checkpoints and generated datasets for reproducibility. The code is available at \url{https://github.com/april-tools/tabpc} and will be updated with the link to checkpoints and datasets.

\subsection{Dataset Details}\label{app:datasets}

\begin{table}[ht]
    \centering
    \caption{Dataset statistics. \# Num = number of numerical columns, \# Cat = number of categorical columns, \# Max Cat = max categories in any categorical column. \# Train and \# Test refer to the number of samples in the training and test splits respectively.}
    \label{tbl:exp-dataset}
    \begin{tabular}{lrrrrrrl}
        \toprule
        \textbf{Dataset} & \textbf{\# Rows} & \textbf{\# Num} & \textbf{\# Cat} & \textbf{\# Max Cat} & \textbf{\# Train} & \textbf{\# Test} & \textbf{Task} \\
        \midrule
        Adult \citep{adult}     & $48,842$  & $6$  & $9$  & $42$  & $32,561$ & $16,281$ & Classification \\
        Beijing \citep{beijing}   & $43,824$  & $7$  & $5$  & $31$  & $39,441$ & $4,383$  & Regression     \\
        Default \citep{default}   & $30,000$  & $14$ & $11$ & $11$  & $27,000$ & $3,000$  & Classification \\
        Diabetes \citep{diabetes}  & $101,766$ & $9$  & $27$ & $716$ & $81,412$ & $20,354$ & Classification \\
        Magic \citep{magic}     & $19,019$  & $10$ & $1$  & $2$   & $17,117$ & $1,902$  & Classification \\
        News \citep{news}      & $39,644$  & $46$ & $2$  & $7$   & $35,679$ & $3,965$  & Regression     \\
        Shoppers \citep{shoppers}  & $12,330$  & $10$ & $8$  & $20$  & $11,097$ & $1,233$  & Classification \\
        \bottomrule
    \end{tabular}
\end{table}

\subsubsection{Dataset Splits}

We follow the same dataset splitting as performed by \citealt{tabdiff}, who in turn follow the protocol of \citealt{tabsyn}. Namely, each dataset above is split into train and test sets, with 90\% of the data forming the training set, and 10\% forming the test set. From the training set, we then form a validation set comprising 10\% of this data. Random splits are performed based on the random seed $1234$.

\subsection{Dataset Preprocessing}\label{app:preprocessing}

For comparability, we follow the same pre-processing protocol as in \citealt{tabdiff} for removing missing data. Specifically, we replace missing numerical values with the column mean, and treat missing categorical values as a new category. However, we note that as tractable models, PCs would be able to seamlessly handle missing data during training by marginalising out the missing variable(s).

The following pre-processing steps are taken following this missing-value protocol.

For \ref{eq:ff_model}, we include two data pre-processing steps.
\begin{enumerate}
    \item \textbf{Inflated value handling:} Inflated values are specific values which are significantly oversampled in the original data; for example, these could be the maximum or minimum possible values. A common example of an inflated value is zero (e.g. the minimum value on some measuring device \citep{rozanec_dealing_2025}). In the case of \ref{eq:ff_model}, we simply discard inflated values during fitting.
    \item \textbf{Quantile normalisation:} this transforms samples from any distribution into samples from a standard Gaussian, and can later be inverted to obtain samples from the data distribution. More details below in \cref{app:quantnorm}.
\end{enumerate}

For \ours, we apply the same quantile normalization pre-processing. However, we modify the inflated value handling step.
\begin{enumerate}
    \item \textbf{Inflated value handling:} For \ours, we handle features with inflated values by creating a new category indicating that an inflated value is present, and then treating the value itself as though it were missing (i.e. by marginalising out that feature). Recall that, with a smooth and decomposable PC, we can efficiently and exactly marginalise out this feature.
    \item \textbf{Quantile normalisation:} as before.
\end{enumerate}

\subsubsection{Quantile Normalisation}\label{app:quantnorm}

We use the \texttt{QuantileTransformer} from \texttt{scikit-learn} \citep{scikit-learn}.\footnote{For further details, see \url{https://scikit-learn.org/stable/modules/generated/sklearn.preprocessing.QuantileTransformer.html}.} This works by estimating the cumulative distribution function (CDF) of the data, and then uses this to map the data to samples from a $\mathrm{Uniform}([0,1])$. Then, the quantile function of the normal distribution is used to transform the samples to those from a standard Gaussian. This transformation is applied independently to each feature. We go from samples in the normal space to samples in the data space by inverting the transformation.

\subsubsection{Preprocessing Ablation Studies}\label{app:preprocessing_ablation}

Here we study the effect of these different pre-processing steps on model performance. The results of these studies can be found in \cref{tab:ablation_c2st_xgboost}. We note here the importance of quantile normalisation in model performance, supporting its use by other methods \citep{tabsyn, tabdiff} as an important pre-processing step. We also provide training curves with and without the full pre-processing, which show that the stability of training is greatly improved with the pre-processing, in \cref{fig:ablation_losses}.

\input{tables_new_protocol/ablation}

\begin{figure}[!t]
    \centering
    \includegraphics[width=0.45\linewidth]{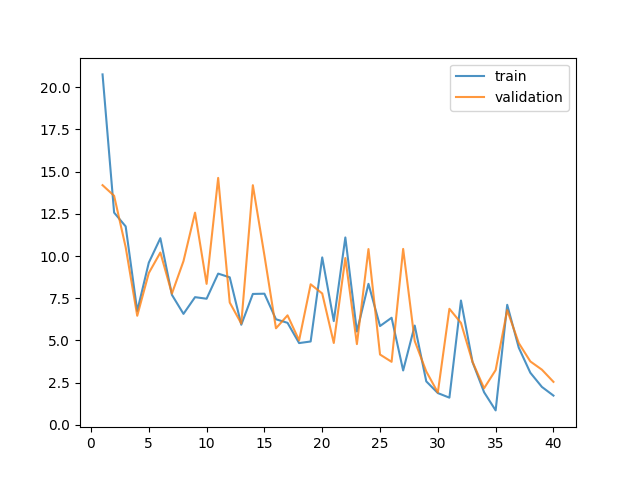}
    \includegraphics[width=0.45\linewidth]{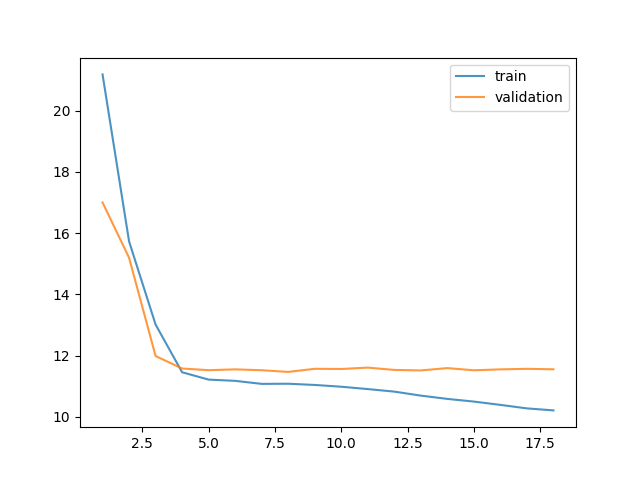}
    \caption{\textbf{Dataset pre-processing improves the stability of training.} The plots display training (blue) and validation (orange) negative log-likelihoods on the Adult dataset. Curves are shown for the base model with no pre-processing \textit{(left)}, and for the model with the full pre-processing steps of inflated value handling and quantile normalisation \textit{(right)}. The numerical values for the log-likelihood change when applying the transformation since this changes the density scale.}
    \label{fig:ablation_losses}
\end{figure}

\subsection{Metric Details}\label{app:metrics}

\subsubsection{\shape}\label{app:metrics_shape}

The \shape metric, implemented in \texttt{SDMetrics} \citep{sdmetrics}, captures how well synthetic data models the univariate marginal distributions of the real data. The way it is computed for each feature depends on that feature's type.

For numerical features, the \textit{Kolmogorov-Smirnov statistic} is used. This compares the two empirical cumulative distribution functions (CDFs) of the univariate marginals in both the real and synthetic data. Denoting these by $\widehat{F}_x$ and $\widehat{G}_x$ for feature $x$ respectively, the Kolmogorov-Smirnov statistic is defined for feature $x$ by \citep{massey_kolmogorov_1951}
\begin{equation}\label{eq:ks_statistic}
    D_{KS}\left(\widehat{F}_x, \widehat{G}_x\right) = \max_y \left|\widehat{F}_x(y) - \widehat{G}_x(y) \right| \in [0,1],
\end{equation}
i.e. the maximum of the set of distances between the CDFs. This difference is subtracted from $1$ to give a similarity score to the feature, so for numerical feature $x$, its corresponding score is given by
\begin{equation}\label{eq:numerical_shape_score}
    s_{\mathrm{numerical}}(x) = 1 - D_{KS}\left(\widehat{F}_x, \widehat{G}_x\right).
\end{equation}
This score is also in $[0,1]$, where $1$ is the maximum.

For categorical features, the \textit{total variation distance} (TVD) is used. This computes the difference between the empirical probabilities as follows:
\begin{equation}\label{eq:tvd_1d}
    \mathrm{TVD}(R_x, S_x) = \frac{1}{2}\sum_{\alpha\in x} |R_\alpha - S_\alpha| \in [0,1],
\end{equation}
where the sum over $\alpha$ ranges over the possible categories of $x$, and $R_\alpha$, $S_\alpha$ denote the empirical probabilities of $x=\alpha$ in the real and synthetic datasets respectively. This distance is again subtracted from $1$ to yield a similarity score:
\begin{equation}\label{eq:categorical_shape_score}
    s_{\mathrm{categorical}}(x) = 1 - \mathrm{TVD}(R_x, S_x)
\end{equation}
Again, this score is also in $[0,1]$, where $1$ is the maximum.

The overall \shape score is then given by the average scores of all columns (features) in the dataset.

\subsubsection{\alphaprecision and \betarecall}\label{app:metrics_ap_br}

\alphaprecision and \betarecall are proposed by \citealt{faithful}, extending the work by \citealt{sajjadi_assessing_2018} on assessing the precision and recall of generative models. These are sample-level metrics quantifying the sample's `fidelity' and `diversity'. In this case, they use fidelity to refer to sample quality (`realism' of samples), similarly to before, and diversity to refer to how well the samples cover the variability of the real data. Since these are sample-level metrics, they can be averaged over the entire synthetic dataset to provide an overall score.

They work by embedding the real and synthetic data into hyperspheres via a learned one-class neural network, where most samples are concentrated in the centre (so that the supports of the real and generated data are hyperspherical). Then, they use minimum volume sets covering a proportion ($\alpha$ or $\beta$) of the (real or synthetic) data, which in the hypersphere embedding space are given by hyperspheres of certain radii. For a given proportion $\alpha$, its $\alpha$-support describes the minimum volume subset of its support containing probability mass $\alpha$.

For \alphaprecision, we want to know whether the synthetic sample falls within the $\alpha$-support of the real data. This tells us whether a synthetic sample is typical (i.e. whether it could come from the real dataset). For \betarecall, we want to know whether the real samples fall within the $\beta$-support of the synthetic data. This tells us whether the synthetic data covers the full variability of real samples. To compute these, we train classifiers to predict whether a given sample is contained within the $\alpha$- or $\beta$-support.

For a full technical description of these metrics, see \citealt{faithful}.

\textbf{One important consideration is that the actual implementation for these metrics can sometimes differ in recent related works to the approach described above; this is described in further detail in \cref{app:ap_br_discrepancy}.}

\subsection{Baseline Details}\label{app:baseline_details}

For comparison, we use the codebase provided by \citealt{tabdiff} for \tabdiff,\footnote{\url{https://github.com/MinkaiXu/TabDiff}} and the codebase provided by \citealt{tabsyn} for all other mentioned methods.\footnote{\url{https://github.com/amazon-science/tabsyn}} We also use their provided default hyperparameters.

\textbf{Other Baselines:} We were unable to reproduce previous results for the methods GOGGLE \citep{goggle} and TabDDPM \citep{tabddpm} due to environment and training issues. In particular, TabDDPM has previously been criticised for such issues on some datasets like Diabetes \citep{tabdiff}, but we experienced issues across all datasets. Moreover, we were unable to compare against the newer flow-based method TabbyFlow \citep{tabbyflow}, since at the original time of writing there appeared to be errors in the sampling code released online.\footnote{E.g. \url{https://github.com/andresguzco/ef-vfm/issues/1}} {Since then, the repo has been updated, but we found further issues in trying to get the codebase to run.\footnote{See \url{https://github.com/andresguzco/ef-vfm/issues/4}.}}

\subsubsection{\tabpfn Further Details}\label{app:tabpfn_details}

\tabpfn is a transformer-based model pre-trained on a large amount of synthetic data generated by Structural Causal Models \citep{pearl_causality_2009}. It shifts the discriminative paradigm to an in-context learning (ICL) approach, and is a `Prior-data Fitted Network (PFN)' performing approximate Bayesian inference \citep{muller2022transformers}. When using \tabpfn for discriminative tasks, the dataset is passed to the model, embedded, and then the model outputs predictions directly in a single forward pass (see e.g. \citealt{hollmann2023tabpfn}). In particular, no training or fine-tuning is required.

For \tabpfn, we installed the version of \texttt{tabpfn-extensions} corresponding to \href{https://github.com/PriorLabs/tabpfn-extensions/tree/7f84343b59ffb03fdecd4fb6a8576d0eea09db57}{commit \texttt{7f84343} on GitHub}. We used the default version of \tabpfn, which is currently v3 \citep{grinsztajn2026tabpfn3technicalreport}. Additionally, we follow the package's \href{https://github.com/PriorLabs/tabpfn-extensions/blob/7f84343b59ffb03fdecd4fb6a8576d0eea09db57/src/tabpfn_extensions/unsupervised/unsupervised.py#L882}{default value of three feature ordering permutations} for our data generation experiments, following concurrent work \citep{tugnoli2026improvingtabpfnssyntheticdata}. While other approaches to using \tabpfn for TDG exist \citep{ma2023tabpfgen}, we choose to use \texttt{tabpfn-extensions} as the `most official' implementation.

When no features have yet been sampled, \tabpfn conditions on random Gaussian noise so as to approximate the first feature's marginal distribution. After that, features are autoregressively generated based on the current feature ordering permutation \citep{tugnoli2026improvingtabpfnssyntheticdata}.

The documentation of \tabpfn specifies that minimal pre-processing should be applied. Hence we apply only an ordinal encoding to categorical features. Moreover, we specify the categorical features to \tabpfn as those with fewer than 160 unique values (the maximum category count of \tabpfn v3). This is a reasonable heuristic given the number of samples our datasets have ($>1000$, see \cref{tbl:exp-dataset}).

Diabetes has a feature which exceeds this maximum number of categories, and so we were not able to generate data for this dataset. Moreover, Magic contained one failed run, and News contained three failed runs, so the standard deviations for these datasets are reported over four runs and two runs respectively (by failed, we mean a run which failed more than 3 times, which we set as our `patience'). The error in each case related to generating a NaN for one of the numerical features.

It is possible that this instability during generation is caused by some underlying bug in the implementation, as it appears that other users have previously run into a similar issue (\url{https://github.com/PriorLabs/tabpfn-extensions/issues/120}). Since this is package of community extensions, the documentation specifies that the functionality is less rigorously tested than the main \tabpfn library, so this remains a plausible explanation.

\subsection{\ours Further Details}\label{app:tabpc_implementation}

Hyperparameter configurations for \ours on each dataset can be found in \cref{tbl:pc_hyperparameters}.

\begin{table}[!t]
    \centering
    \caption{\ours hyperparameters (number of sum and input units, batch size, and learning rate) and number of shallow mixture components for each dataset. Values were found by evaluating the BPD of trained instances of \ours on the respective validation set.}
    \label{tbl:pc_hyperparameters}

    \begin{tabular}{
    l %
    S[table-format=4, table-alignment = center] %
    S[table-format=3, table-alignment = center] %
    S[table-format=1.2, table-alignment = center] %
    S[table-format=6, table-alignment = center] %
    }
        \toprule
        \multirow{2}{*}{\textbf{Dataset}} & \multicolumn{3}{c}{\textbf{\ours}} & \textbf{Shallow Mixture} \\
         & \textbf{\# Units} & \textbf{Batch Size} & \textbf{Learning Rate} & \textbf{\# Components ($K$)} \\ 
        \midrule
        Adult  &  4096   &  512   & 0.25 & 20000  \\
        Beijing &  4096  &   512  & 0.25 & 50000  \\
        Default &  512  &  512   & 0.10 & 20000  \\
        Diabetes &  2048  &  512  & 0.25 & 20000  \\
        Magic   &    2048 &   256 & 0.10 & 10000  \\
        News    &  1024   &  512  & 0.10 & 10000  \\
        Shoppers   & 2048   & 512 & 0.10 & 20000  \\ 
        \bottomrule
    \end{tabular}
\end{table}

When training \ours (alongside the hyperparameters mentioned above), we use a learning rate scheduler which decreases the learning rate by a factor of 0.85 if the validation log-likelihood has not decreased in the last epoch. Recall that we use the RAdam optimiser \citep{liu_variance_2021} for training. We also use the validation set (recall that this is a $10\%$ split of the training set) to perform early stopping, with a patience of 10 epochs. Columns with fewer than 50 unique values are treated as categorical, and otherwise treated as continuous. Quantised numerical columns are dequantised by adding uniform noise in the range $[-q/2, q/2]$, where $q$ is the quantisation step. After sampling, these are re-quantised to their original grid.

\subsection{Computational Resources}

The following hardware was used to train the models and evaluate the results:
\begin{itemize}
    \item \textbf{GPU:} NVIDIA RTX A$6000$ ($49$ GiB)
    \item \textbf{Processor:} AMD EPYC $7452$ $32$-Core Processor
    \item \textbf{Memory:} $512$ GiB
\end{itemize}

\section{Experimental Results}\label{app:full_results_top}

\subsection{Tables}\label{app:tables}

The uncertainties in each table represent the standard deviation of measurements over 5 model seeds. {For \tabpfn, these are over 5 random generations with the exception of Magic and News, where \tabpfn experienced crashed multiple times during these runs (see \cref{app:tabpfn_details}). Hence, these standard deviations are over four and two datasets respectively.}

In \cref{tab:mle_updated}, Beijing and News are associated with regression tasks, with trained models evaluated by root mean squared error (RMSE). The other datasets are associated with classification tasks, evaluated by classifier area under curve (AUC).

Where ranks are assigned, missing entries are given the worst possible rank. The missing entries for \great correspond to datasets for which it either failed to train (Diabetes) or generate (News). On some occasions, \great was able to generate datasets, but would generate categories which did not appear in the training data, thereby throwing an error when evaluating metrics. \stasy technically finished training on Diabetes, but obtains quite poor results. 

Entries containing $<0.0001$ mean that the five model seeds each achieved a score less than this precision threshold.

\input{tables_new_protocol/shape}

\input{tables_new_protocol/trend}

\input{tables_new_protocol/wnmis}

\input{tables_new_protocol/lr_detection}

\input{tables_new_protocol/xgb_detection}

\input{tables_new_protocol/alpha_precision_oc}

\input{tables_new_protocol/beta_recall_oc}

\input{tables_new_protocol/mle}

\input{tables_new_protocol/training_time}

\input{tables_new_protocol/sampling_time}

\input{tables_new_protocol/num_parameters}

\subsection{CDDs}\label{app:cdds}

In the generated CDDs, the value in the scale represents the average rank of the method (based on the means of the observations for each dataset). Methods are then connected by a solid line if they cannot be statistically distinguished from one another (i.e. are in the same \textit{clique}).

These CDDs are produced using the \texttt{aeon} package \citep{aeon24jmlr}. To compute the cliques, \texttt{aeon} uses a one-sided Wilcoxon sign rank test with the Holm correction.\footnote{Further details can be found at \url{https://www.aeon-toolkit.org/en/stable/api_reference/auto_generated/aeon.visualisation.plot_critical_difference.html}.}

It is important to note that this implementation requires observations for all datasets. Therefore, methods which cannot be trained or generate successfully, or whose metric computations fail across some datasets (such as \great and \stasy) will not appear in the corresponding CDD. {For this reason, since \tabpfn did not work for Diabetes, it does not appear in any CDDs.}

\begin{figure}[!t]
    \centering
    \includegraphics[width=0.45\linewidth]{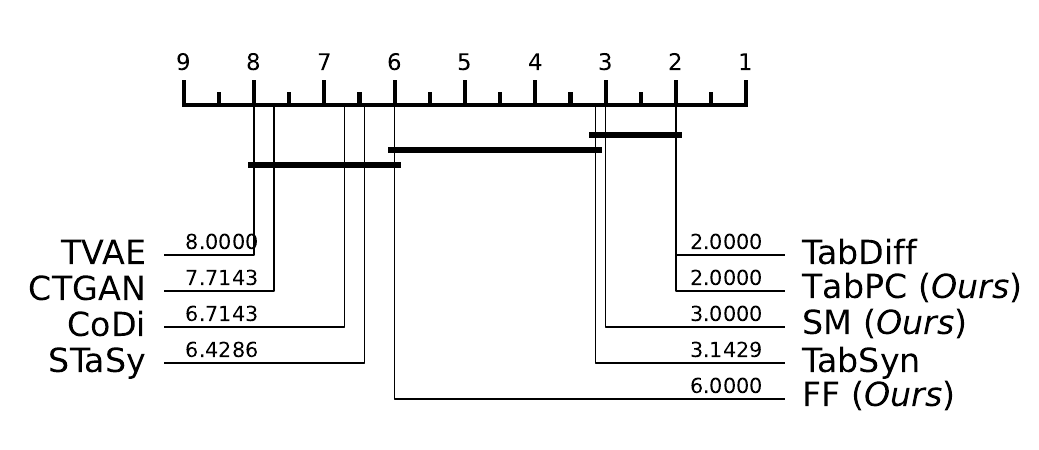}
    \includegraphics[width=0.45\linewidth]{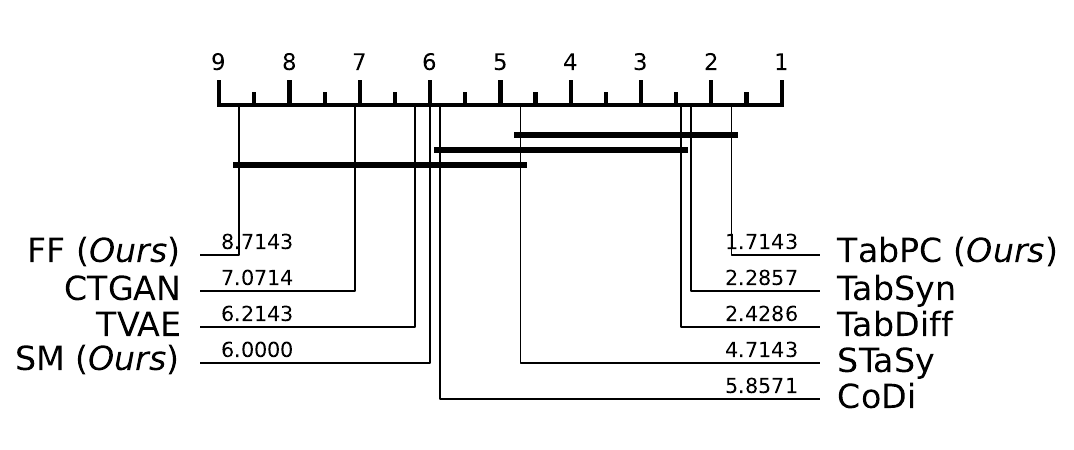}
    \caption{\textbf{\ours falls in the top clique of both CDDs}, and it is clear that \textbf{\wnmis (right) is able to successfully identify the trivial \FF model} where \trend (left) is not able to.}
    \label{fig:trend_wNMIS_cdd}
\end{figure}

\begin{figure}[!t]
    \centering
    \includegraphics[width=0.45\linewidth]{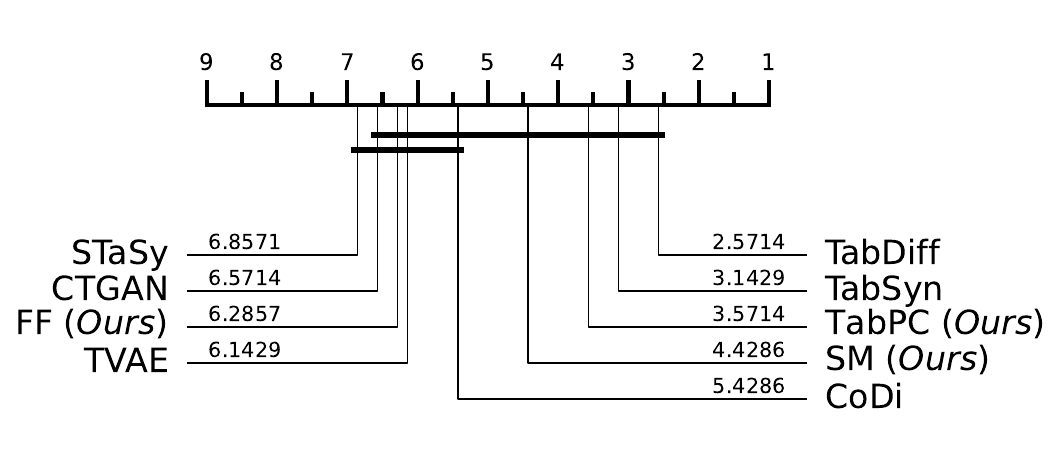}
    \includegraphics[width=0.45\linewidth]{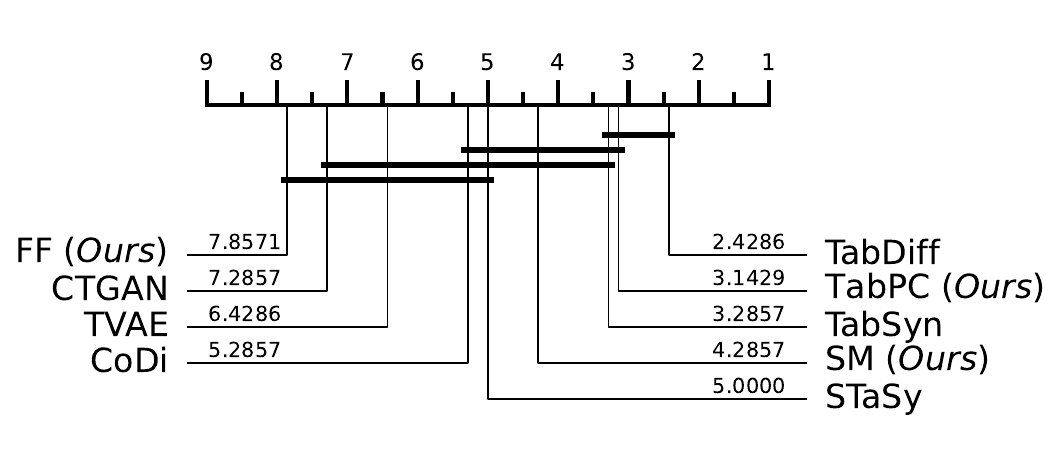}
    \caption{\textbf{\ours is in the top clique for both \alphaprecision (left) and \betarecall (right).} This suggests that its synthetic samples are competitive with the fidelity and diversity of samples generated by \tabdiff and \tabsyn.}
    \label{fig:ap_br_cdd}
\end{figure}

\begin{figure}[!t]
    \centering
    \includegraphics[width=0.45\linewidth]{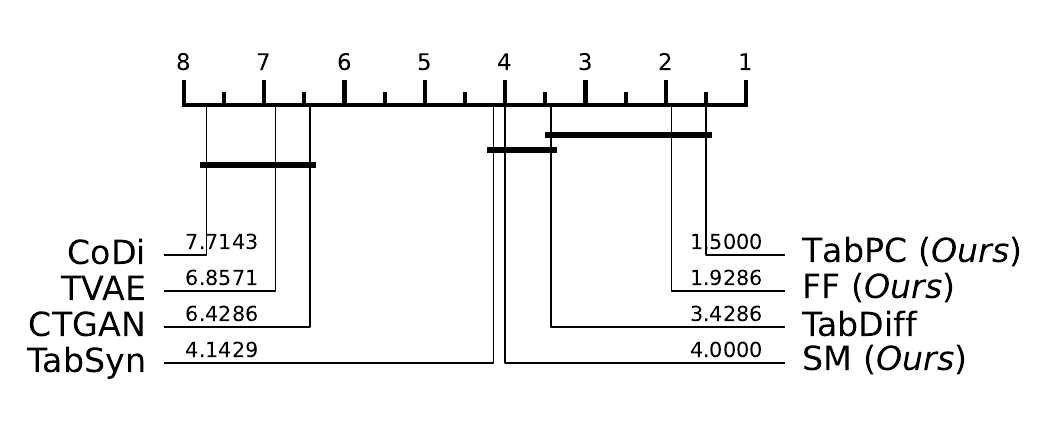}
    \includegraphics[width=0.45\linewidth]{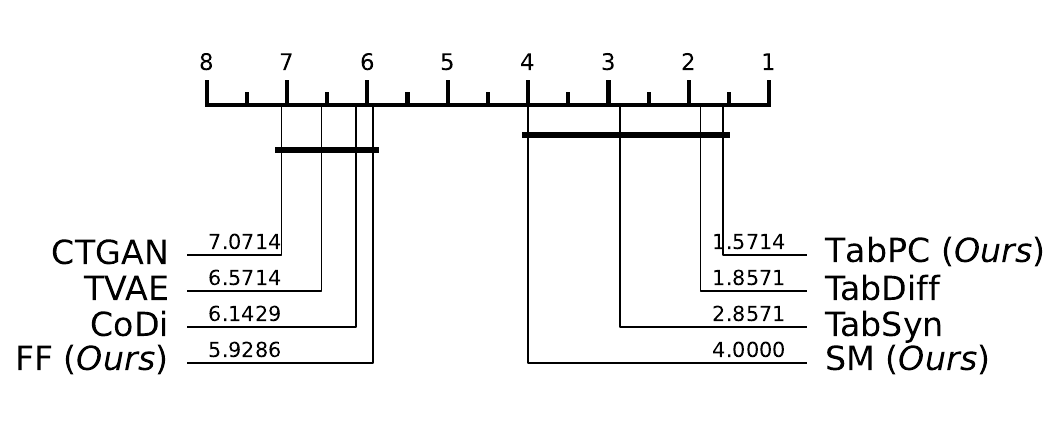}
    \caption{\textbf{\ctwost (XGB) (right) is able to clearly identify the trivial \FF model} where the metric using LR (left) is not. All lower performing models do poorly on \ctwost (XGB) and belong in the same clique, whereas the diffusion-based \sota, \ours and \SM fall within the top clique and hence have comparable performance.}
    \label{fig:lr_xgb_c2st_cdd}
\end{figure}

\begin{figure}[!t]
    \centering
    \includegraphics[width=0.45\linewidth]{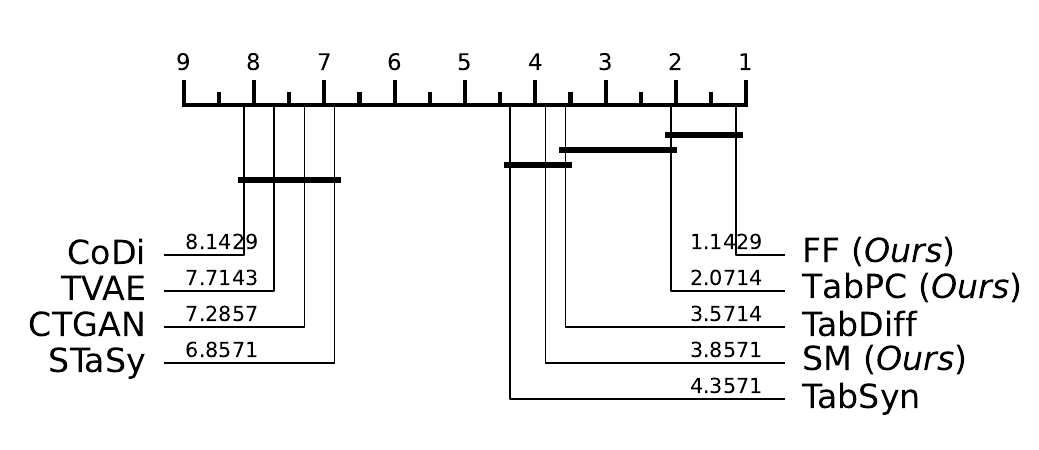}
    \caption{\textbf{\ours and \FF form the top clique for \shape,} showing that they accurately recover the univariate marginals. This is somewhat expected for \FF due to the pre-processing and the fact that we fit all marginals. The lower performing models again form a bottom clique (\codi, \tvae, \ctgan, \stasy).}
    \label{fig:shape_cdd}
\end{figure}

\subsection{Shallow Mixture Results}\label{app:sm_model_results}

\textbf{RQ)} How much does increasing PC complexity help? That is, how close to TDG \sota can we get with just a shallow mixture?

The \SM models have one to two orders of magnitude fewer parameters than \ours, and typically one order fewer than the diffusion-based \sota (except on Diabetes). For this reason, they represent a good intermediate level of complexity between \FF and \ours --- expressive enough to increase their performance significantly, with a more comparable parameter count to other models. Recall that, as tractable models, PCs require more parameters in order to compensate for having non-linearities only in their inputs.

It is clear from the results that the \SM models are a tier below \ours and the diffusion-based \sota. In the CDDs of \cref{app:cdds}, they often appear in fourth place after these models. In numerical terms, such as on \ctwost (XGB), while not generally competitive with \sota models they offer a significant step up from \FF, even beating \ours on Beijing. Where \FF posts results which are very close to zero, \SM make a step in the direction of \sota performance.

Their training times are also typically faster than for \ours, as to be expected, but for some datasets it can be lower (in particular, Default, where the \sota version of \ours is smaller than for other datasets and hence ends up being faster).

Given that simply mixing together many \FF distributions is a naive way of constructing a generative model, it is interesting that we can achieve reasonable results with the use of \SM.

\subsection{Utility Metric Results}
\label{app:mle}

MLE is designed to measure how useful synthetic data is for the purpose of training new machine learning models. This focuses on synthetic data's use as a proxy for real data. Here, we follow the protocol and implementation of \citealt{tabdiff}. 

To compute MLE for a given dataset, we train a discriminative machine learning model (in this case, an XGBoost regressor / classifier depending on the dataset task) on that dataset, and evaluate it on hold-out test data from the real distribution.

As mentioned, we follow the same protocol as \citealt{tabdiff}, and reuse their implementation. Specifically, we split the given dataset into training and validation sets with a ratio of $8:1$, learn the discriminative model on this training set, and use this validation set to select the optimal hyperparameters. Then the performance on the test data is evaluated, which is the value reported for MLE. It is important to compare this value against the value obtained by using the `real' training data,
which can be seen in the row `Real' in \cref{tab:mle_updated}. As seen in the table, training with real data yields the best results across all datasets, but some results for \sota models come close to matching this performance.

We observe that the data generated by \ours is reasonably competitive with \sota models. Downstream discriminative models trained on data from \ours sometimes outperform that of \sota diffusion models, and are sometimes outperformed. We note, as do previous works \citep{tabdiff}, that methods which typically generate data of low fidelity are sometimes able to achieve good scores, so this is not the most convincing metric. However, it provides some indication of how useful the generated data could be for downstream ML tasks.

{\tabpfn generates synthetic data which performs well under this metric, generally comparable or better to \sota models. This is perhaps not entirely surprising given its connection to predictive tasks, but is interesting in the context that its fidelity scores are generally lower than these models.}

\textbf{\textit{We strongly remark that one practice to avoid when reporting MLE is averaging over the numerical results, as is done in \citealt{tabbyflow}}}. Since the objective for each dataset is different (i.e. we want to minimise RMSE for regression tasks, and maximise AUROC for classification tasks), it does not make sense to average in this way as the two metrics are ``incomparable'' \citep{javaloy2025copa}.

\subsection{Privacy Metric Results}
\label{app:dcr}

It is complex to evaluate the privacy protection of synthetic data. Several recent works \citep{tabsyn, tabdiff, tabbyflow} rely on a simple distance-based metric called DCR (Distance to Closest Record).

As a privacy metric, distance to closest record (DCR) is motivated by the idea that if synthetic data and training data are too close, according to some distance metric, then there may be some information leakage.

The original DCR score, as used for example in \citealt{tabdiff, tabsyn}, represents the probability that a generated data sample is closer to the training data than to some hold-out test set. In this setup, values closer to $50\%$ are desirable, as they indicate more even distance between the synthetic data and both the training and hold-out sets.

One issue with implementing this is that it requires retraining models with a modified data split (50/50 train and test, so that the training and hold-out sets are balanced). As another issue,\textbf{\textit{ using DCR as a proxy metric for privacy has recently been strongly criticised and described as ``flawed by design''  in \citealt{yao_dcr_2025}, which recommends moving away from DCR}}. For this reason, we report these results here in \cref{tab:dcr_updated} for comparison only.

Here, we also compute an alternative formulation to previous works \citep{tabdiff, tabsyn}. This is because their original formulation requires splitting the training data in half and retraining a new model. (Results for the original DCR formulation are also reported for reference in \cref{tab:dcr_updated}{, excluding \tabpfn for which this metric was not computed due to the different data split.})

We instead use a formulation from \citealt{palacios_contrastive_2025} which compares the distributions of distances from the synthetic data and a test set to the training data. If there are significantly more synthetic entries closer to the training data than expected (i.e. compared to the test data), then this indicates a potential privacy risk.

We call this metric `quantile DCR', and denote it by DCR-002 and DCR-005 in the following tables, with the number corresponding to the quantile (either 2\% or 5\% respectively). We observe the corresponding results for quantile DCR in \cref{tab:dcr fraction<0.02 test quantile_updated} and \cref{tab:dcr fraction<0.05 test quantile_updated}. No privacy leaks for \ours are suggested according to the DCR-002 and DCR-005 metrics, which compare the sizes of the 2\% and 5\% quantiles of distances respectively. In this respect, \ours offers a similar risk of privacy leaks as TabDiff and TabSyn, which also show no indication of privacy leaks under these metrics.

\input{tables_new_protocol/dcr}
\input{tables_new_protocol/dcr002}
\input{tables_new_protocol/dcr005}

However, we stress again that, according to \citealt{yao_dcr_2025}, distance-based privacy metrics do not even give a strong indication of dataset privacy --- stronger conclusions would require a much more involved and rigorous investigation. Since our primary aim is to highlight the performance of \ours as a fast and high-fidelity TDG method, we leave such an investigation to future work.

\section{Likelihood vs C2ST (XGB) Experiment}\label{app:ll_expts}

\subsection{Bits-Per-Dimension (BPD)}\label{app:ll_bpd}

Bits-per-dimension offers a log-likelihood normalised by the number of features, making values more comparable across datasets. It represents the average number of bits required to encode a single dimension.

For a dataset $\mathcal{D} = \{\bm{x}_n\}_{n=1}^N$ consisting of datapoints $\bm{x} \in X_1 \times \ldots \times X_D$, we define the BPD by
\begin{equation}\label{eq:bpd}
    \mathrm{BPD}(\mathcal{D}) = \frac{\mathrm{NLL}(\mathcal{D}) / N}{\log 2 \cdot D},
\end{equation}
where $\mathrm{NLL}(\mathcal{D})$ denotes the negative log-likelihood of dataset $\mathcal{D}$ under the given model.

\subsection{Likelihood vs C2ST (XGB) Plots}\label{app:ll_plots}

Detailed plots for all datasets can be found in \cref{fig:ll_plots}. As mentioned previously, each marker in the plot corresponds to a different hyperparameter configuration. These hyperparameters are number of (sum and input) units, batch size, and learning rate. Number of units are selected to be powers of 2. The minimal grid for each dataset tests: (i) number of units $128$, $512$, $2048$; (ii) batch size $64$, $256$, $512$ (iii) learning rates $0.1$, $0.25$, and $0.5$. Where possible, we also test higher numbers of units. Early iterations tested lower learning rates ($0.01$, $0.001$) which proved to be much less effective, and so these do not appear for all datasets.

Regression lines are fit with Huber loss using the \texttt{HuberRegressor} implementation of \texttt{scikit-learn}. Further details can be found at \url{https://scikit-learn.org/stable/modules/generated/sklearn.linear_model.HuberRegressor.html}.

\begin{figure}
    \centering
    \includegraphics[width=0.31\linewidth]{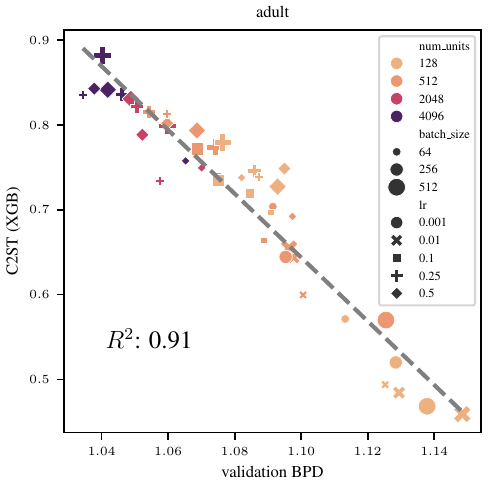}
    \includegraphics[width=0.31\linewidth]{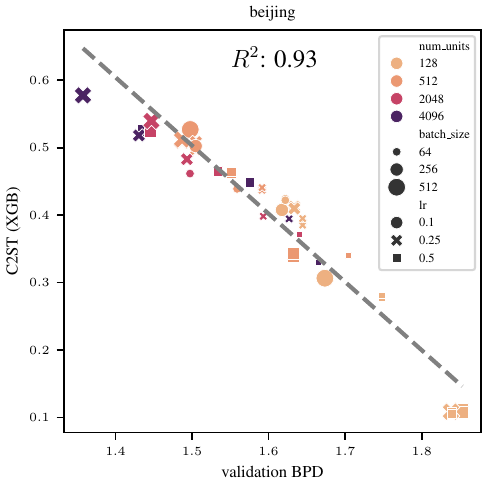}
    \includegraphics[width=0.31\linewidth]{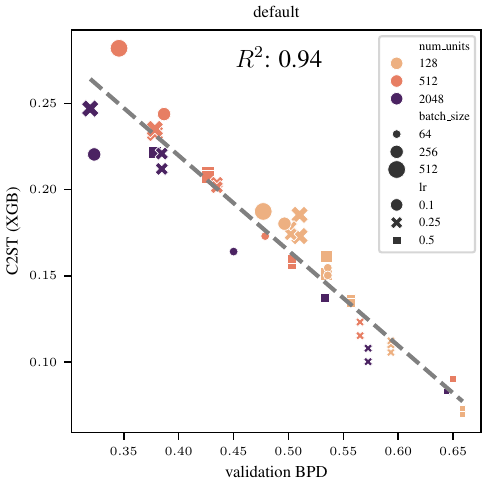}
    \includegraphics[width=0.31\linewidth]{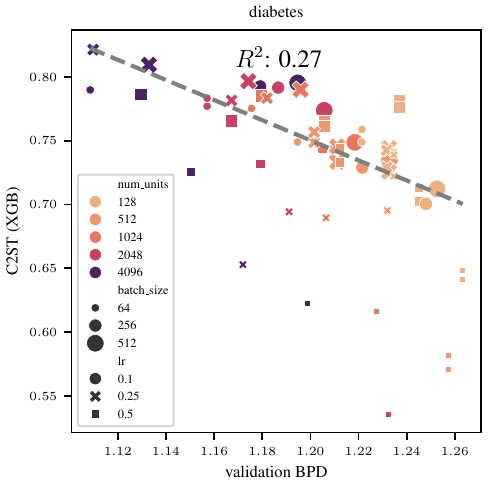}
    \includegraphics[width=0.31\linewidth]{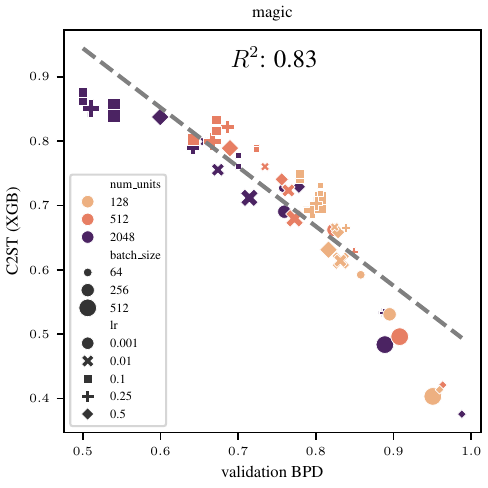}
    \includegraphics[width=0.31\linewidth]{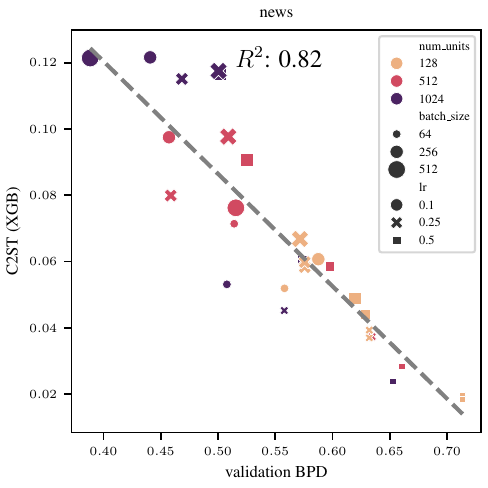}
    \includegraphics[width=0.31\linewidth]{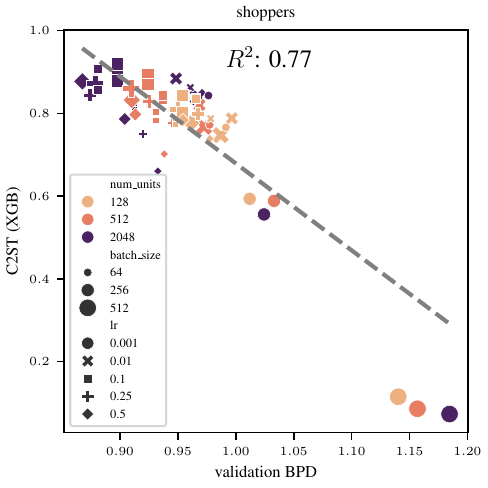}
    \caption{\textbf{Validation set bits-per-dimension (BPD) and downstream sample quality are correlated} across all datasets. In the image domain, PCs typically struggle to generate high-quality image samples while achieving low BPDs, and so we highlight this correlation here in the tabular domain.}
    \label{fig:ll_plots}
\end{figure}

\subsection{BPD Tables}

\cref{tab:bpd_values} displays BPD values across all datasets and splits.

\input{tables_new_protocol/bpd_values}

\subsection{Number of Units $K$ vs \ctwost (XGB) Plots}

To better highlight the relationship between the number of units used in \ours $K$ and the downstream \ctwost (XGB), here we provide additional plots studying this, which can be found in \cref{fig:k_vs_c2st_plots}. Note that these simply replot the information in \cref{fig:ll_plots} with $K$ as the $x$-axis, but it helps to visualise the effect.

\begin{figure}
    \centering
    \includegraphics[width=0.31\linewidth]{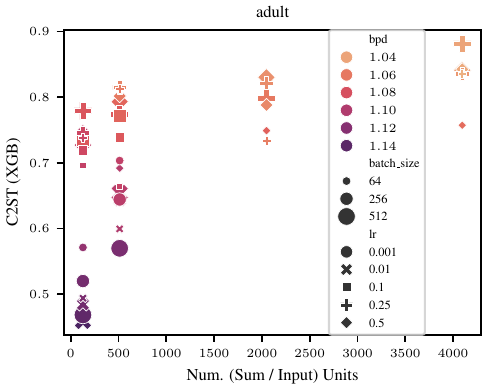}
    \includegraphics[width=0.31\linewidth]{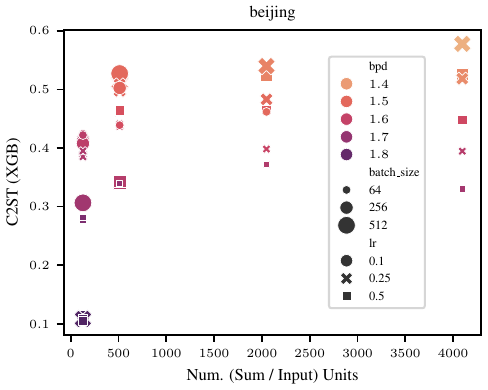}
    \includegraphics[width=0.31\linewidth]{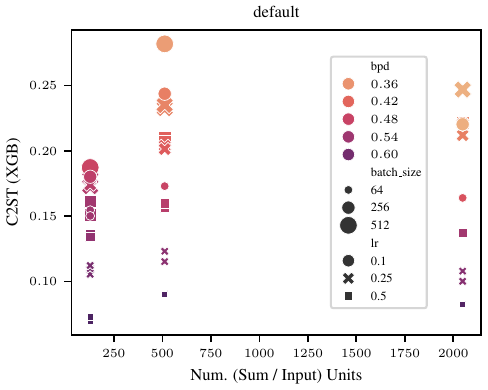}
    \includegraphics[width=0.31\linewidth]{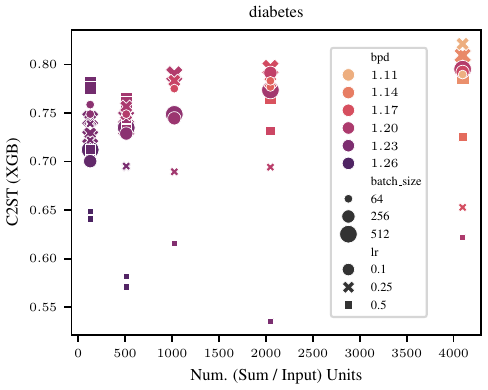}
    \includegraphics[width=0.31\linewidth]{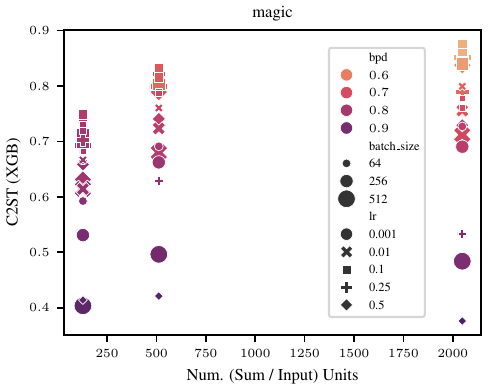}
    \includegraphics[width=0.31\linewidth]{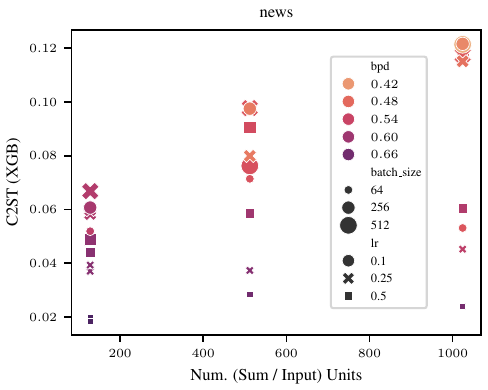}
    \includegraphics[width=0.31\linewidth]{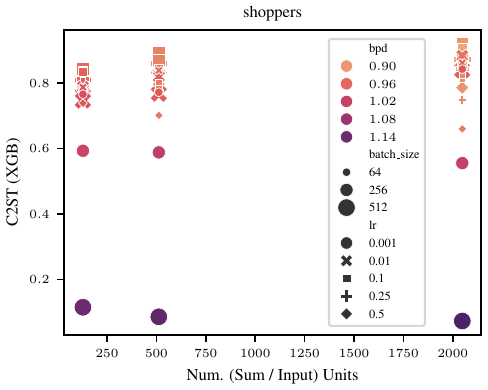}
    \caption{{\textbf{In general, increasing the number of sum and input units $K$ corresponds to increased \ctwost (XGB), as expected.} Quoted BPD values are for the validation set. This replots the data from \cref{fig:ll_plots} to focus on the effect of increasing $K$.}}
    \label{fig:k_vs_c2st_plots}
\end{figure}

\section{Conditional Sampling Experiment}

Full-sized plots for this experiment when conditioning on the test set are found in \cref{fig:cond_sampling_full_size}.

\begin{figure}
    \centering
    \includegraphics[width=0.45\linewidth]{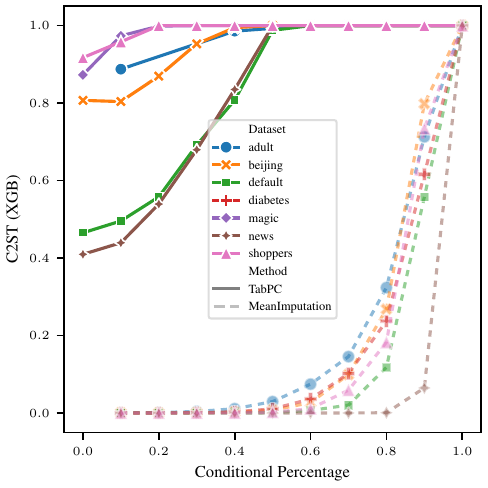}
    \includegraphics[width=0.45\linewidth]{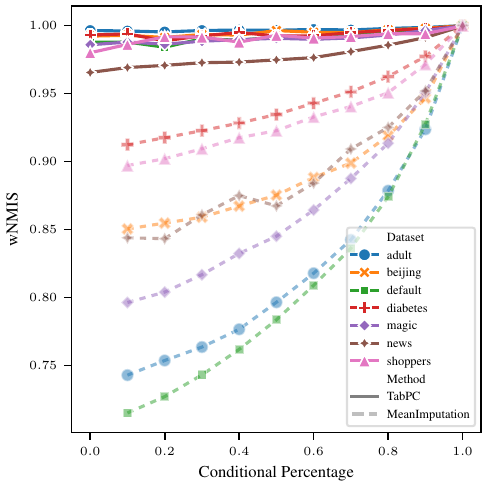}
    \caption{
    \textbf{\ours can realistically impute unobserved features when conditioned on any features}, and it can do this exactly and efficiently without retraining as a tractable probabilistic model. Realism is measured by \ctwost (XGB). The x-axis denotes the percentage of values on which we condition to generate the samples; 0\% conditioning corresponds to unconditional generation, up to 100\% conditioning which corresponds to copying the observed data. Dashed translucent lines denote the mean (or mode for categorical features) imputation baseline, which imputes missing values based on the observed column mean (or mode). By injecting increasing amounts of information about the training set via conditioning, we can generate highly realistic samples which can almost perfectly fool the XGBoost classifier.
    }
    \label{fig:cond_sampling_full_size}
\end{figure}

\section{Evaluation of Promoted Metrics}\label{app:metric_eval}

\subsection{\wnmis vs \trend}\label{app:wNMIS_vs_trend}

Critical difference diagrams (CDDs) for the two metrics can be seen in \cref{fig:trend_wNMIS_cdd}. We note first that, despite the \FF model being of average rank $6.0$ for \trend, it is in the same clique as the recent diffusion-based model \tabsyn, which tends to be one of the top performing models. The CDD for \wnmis instead shows that \FF is relegated to last place, as should be expected since it cannot model any correlations.

Aside from this, both metrics assign methods to reasonable cliques -- the most recent and performant methods, \tabdiff and \tabsyn form the top clique, and the more simplistic or earlier models, \ctgan, \tvae, \stasy and \codi, are in a lower clique. Moreover, the diffusion-based approaches (\stasy and \codi) can form a bridge between the lower clique and upper clique (meaning that they can be connected to methods from both), whereas the GAN or VAE-based methods do not, and remain firmly in the lower clique. This is reasonable since these GAN / VAE-methods are now the oldest.

Full observations for each metric can be seen in \cref{tab:density-Trend_updated} and \cref{tab:nmi_l1_weighted complement_updated}.

We also note that, at the top end of performance, we achieve scores which are very close to $1.0$. \wnmis therefore does not alleviate the issue of metric saturation.

However, \wnmis provides a more principled approach overall to quantifying how well synthetic data models correlations. This is shown by its ability to separate out the simplistic \FF baseline, whereas \trend assigns it almost perfect scores.

\subsection{\ctwost (XGB) vs \ctwost (LR)}\label{app:xgb_vs_lr}

Using XGBoost as our classifier for \ctwost, we can now clearly separate out the \FF model from the top-performing models. This can be seen in both \cref{fig:pc-global} and \cref{fig:lr_xgb_c2st_cdd}. In fact, in terms of \ctwost (LR), the simplistic \FF model even beats the sophisticated diffusion-based \tabdiff and \tabsyn, likely because it is designed to direct match the empirical means. This provides some empirical support to our theoretical result in \cref{app:lr_c2st_proof}, and underscores the insufficiency of \ctwost (LR) for the evaluation of synthetic data fidelity.

Relative to older baselines, \tabsyn and \tabdiff have made some progress in being able to fool XGBoost across most datasets. However, the News dataset, which is the most difficult to model, is still lacking across all methods. Moreover, no values are as close to $1.0$ as with LR, suggesting that there is still more to be done in improving synthetic data fidelity.

\subsection{\wnmis vs \nmis}

In this section, we investigate the advantages of \textit{weighting} the normalised mutual information similarities.

The unweighted \nmis is defined as

\begin{equation}\label{eq:NMIS}
    \mathrm{NMIS} \coloneqq \frac{1}{\#\{x_i,x_j \, : \, i<j\}} \cdot \sum_{\{x_i,x_j \, : \, i<j\}}\mathrm{NMIS}(x_i,x_j) \in [0,1],
\end{equation}
where $\mathrm{NMIS}(x_i,x_j)$ is identically defined as in \cref{eq:wNMIS}. Each pair therefore contributes equally to the score, as opposed to \cref{eq:wNMIS} which emphasises the contribution of pairs with high NMI in the real data and / or high NMI in the synthetic data. The motivation of this is that we want the score to better represent how well the model's generated data captures inter-pair dependencies \textit{where they exist}; in principle, by having many independent feature pairs, the score could otherwise be pulled up.

Results for unweighted \nmis can be found in \cref{tab:nmi l1 complement_updated}. One observation is that all values for \nmis are significantly higher than for \wnmis, making it more difficult to distinguish between model performances.
Moreover, the weighting provides a more principled approach to penalising errors \textit{only where dependencies exist}, as opposed to being affected by the presence of independent columns. It is for these reasons that we use \wnmis and not \nmis in the main text.

We note that the \texttt{SDMetrics} library has concurrently been updated to include a threshold in the \trend score computation (\url{https://github.com/sdv-dev/SDMetrics/releases/tag/v0.27.0}). That is, column pair scores are only included in the average if the association metric (either correlation or Cramer's V) is above this threshold. This helps to mitigate some of the observed issues we note in the main paper. However, we note several issues with this. 
\begin{itemize}
    \item These thresholds are rather arbitrarily set.\footnote{See \url{https://github.com/sdv-dev/SDMetrics/blob/7b5c44d4576ca4ff939fb221e32c8ae43d2946db/sdmetrics/reports/single_table/quality_report.py\#L16-L17}.} The weighting in \wnmis provides a more principled approach. Moreover, the weighting also results in the score decreasing when we have synthetic correlations but not real correlations between a pair of features. This is also an error in the synthetic data, but would be obfuscated by applying this thresholding technique (since then the error is not included in the trend average).
    \item Correlation and contingency similarities are not necessarily commensurable. Simply averaging over all column pair scores means that overall scores may vary based solely on the relative proportions of continuous, discrete, and mixed pairs. For \wnmis in contrast, by estimating the mutual information for feature pairs, we make scores commensurable, and so scores should be more comparable across different proportions of feature pair types.
\end{itemize}

\input{tables_new_protocol/nmis}

\subsection{Robustness of \wnmis to Discretisation}\label{app:wnmis_discretisation}

In this section we consider the sensitivity of \wnmis to the number of bins used to discretise the continuous feature(s). To do this, we compute the \wnmis score of \ours for each dataset using a different number of bins to discretise continuous features. Results for this ablation can be found in \cref{tab:wnmis_ablation_discretisation}.

\input{tables_new_protocol/wnmis_discretisation}

\section{$\alpha$-Precision and $\beta$-Recall Implementation Discrepancy}\label{app:ap_br_discrepancy}

After publication, we realised that our \alphaprecision and \betarecall results were being computed using a ``naive'' implementation internal to the evaluation package \href{https://github.com/vanderschaarlab/synthcity}{Synthcity}. This is the result of the evaluation code for these two metrics being passed down through several different codebases: first from the \href{https://github.com/vanderschaarlab/GOGGLE/blob/6fd497a2ff38a93a0f65e6906d51bf0362057957/src/goggle/GoggleModel.py/#L223}{repository of GOGGLE}, to that of \href{https://github.com/amazon-science/tabsyn/blob/cb5ac0f74ec36ee88e7a974a393dfbef50d42da7/eval/eval_quality.py/#L143}{\tabsyn}, and finally into \href{https://github.com/MinkaiXu/TabDiff/blob/5ecdb3356261aea72716cc9a779f31d7ad083bf4/eval/eval_quality.py/#L100}{\tabdiff}, from which we inherited our repository's structure and \href{https://github.com/april-tools/tabpc/blob/3f926684885a2a7c511cfbfe1aae201aa0c880c8/scripts/alpha_precision_beta_recall.py/#L146}{this particular piece of evaluation code}.

Based on the Synthcity code, it appears that the main difference between this \href{https://github.com/vanderschaarlab/synthcity/blob/23f322fe381326ed01c41b13d469a06e38cce545/src/synthcity/metrics/eval_statistical.py#L738-L750}{\textit{``naive''} implementation} and the \href{https://github.com/vanderschaarlab/synthcity/blob/23f322fe381326ed01c41b13d469a06e38cce545/src/synthcity/metrics/eval_statistical.py#L718-L736}{\textit{``original''} implementation} described by \citealt{faithful} (which is the approach outlined in \cref{app:metrics_ap_br}) is that data-points are not embedded in a hypersphere in the way described there; instead, they are normalised via a min-max scaling to the interval $[0,1]$. The comparisons between sets used to compute \alphaprecision and \betarecall are then made in this normalised space, instead of the hypersphere.

We are unsure why this implementation was used originally in GOGGLE / \tabsyn, and it seems that \href{https://github.com/amazon-science/tabsyn/issues/27}{other people have also noticed this discrepancy in the past}, without the intention behind this decision being made clear. It appears as though this issue of using the ``naive'' implementation has been replicated across multiple different works without being spotted (in particular, at least for the repositories linked above which are the most comparable recent works included as baselines).

We have therefore updated all figures and tables in the main paper to instead use the numbers computed under this ``original'' implementation, as opposed to the ``naive'' implementation. Importantly, this means that our numbers for these metrics are no longer backwards compatible with works which report numbers computed with the ``naive'' implementation, such as those mentioned above. For compatibility, we retain our previously computed metrics under new tables, \cref{tab:alpha precision_updated} and \cref{tab:beta recall_updated}. We also present their corresponding CDDs in \cref{fig:ap_br_naive_cdd}.

We note in particular that the previous version of \cref{fig:pc-global} displayed a very similar trend to the current version, with the exception that the \alphaprecision scores for the News dataset are now significantly lower (the same also holds for the \betarecall scores for News).

\input{tables_new_protocol/alpha_precision}

\input{tables_new_protocol/beta_recall}

\begin{figure}[!t]
    \centering
    \includegraphics[width=0.45\linewidth]{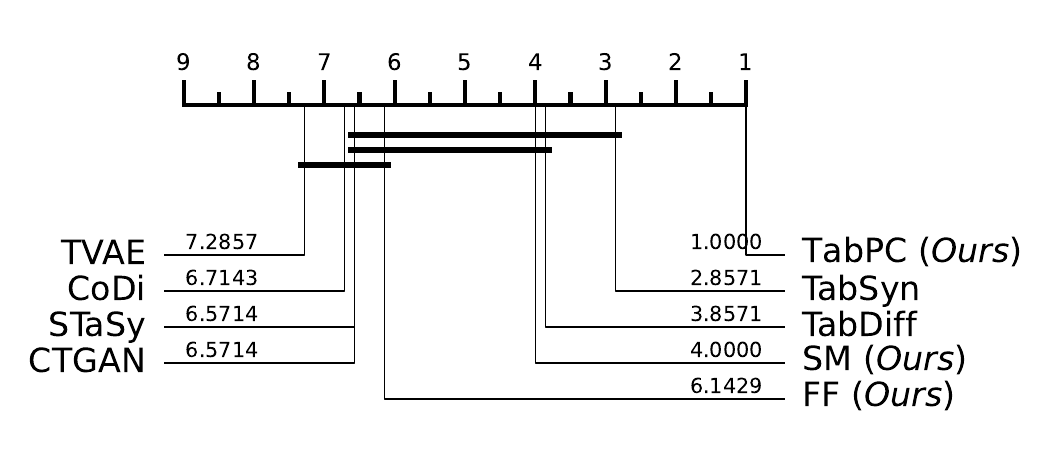}
    \includegraphics[width=0.45\linewidth]{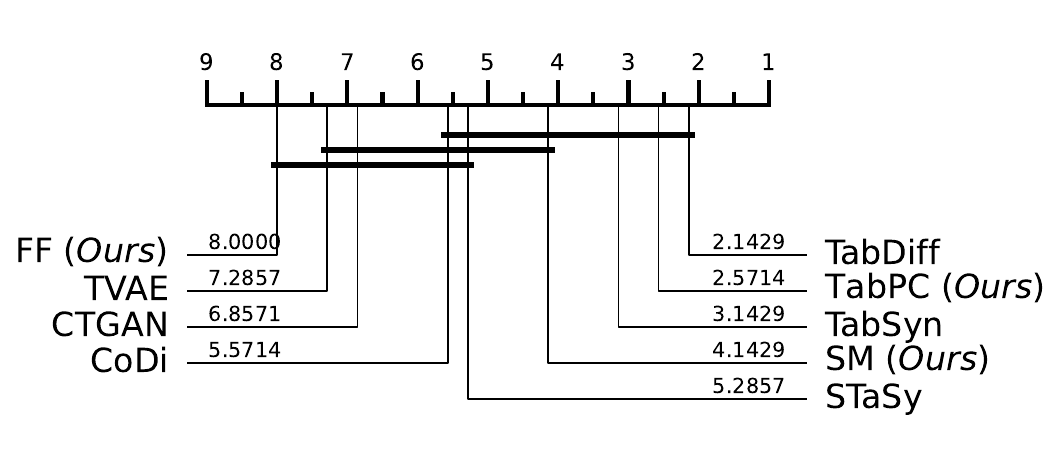}
    \caption{We present the corresponding CDDs for the ``naive'' implementation of \alphaprecision and \betarecall for backwards compatibility with prior works. \textbf{\ours is statistically higher performing than all other methods for \alphaprecision (naive; left)}, and falls in the top clique for \betarecall (naive; right).}
    \label{fig:ap_br_naive_cdd}
\end{figure}

\end{document}

%% file: tables_new_protocol/ablation.tex
\begin{table}
\centering
\caption{Ablation results for pre-processing components. Base denotes no pre-processing, IV denotes inflated value handling, QN denotes quantile normalisation, and IV + QN denotes having both of these components. We observe in particular that quantile normalisation is an important component of model performance over the base model---all datasets except Diabetes show a benefit when enabling quantile normalisation.}
\label{tab:ablation_c2st_xgboost}
\begin{tabular}{llS[table-format=1.4, table-alignment = right]}
\toprule
\textbf{Dataset} & \textbf{Pre-processing} & \textbf{C2ST (XGB)} \\
\midrule
Adult & Base & 0.0080 \\
Adult & IV & 0.6239 \\
Adult & QN & 0.7544 \\
Adult & IV + QN & \textbf{0.8910} \\
\midrule
Beijing & Base & 0.0068 \\
Beijing & IV & 0.0072 \\
Beijing & QN & \textbf{0.6088} \\
Beijing & IV + QN & 0.5744 \\
\midrule
Default & Base & 0.0103 \\
Default & IV & 0.0200 \\
Default & QN & 0.2719 \\
Default & IV + QN & \textbf{0.2799} \\
\midrule
Diabetes & Base & \textbf{0.7968} \\
Diabetes & IV & 0.7915 \\
Diabetes & QN & 0.7884 \\
Diabetes & IV + QN & 0.7889 \\
\midrule
Magic & Base & 0.7721 \\
Magic & IV & 0.7743 \\
Magic & QN & 0.8585 \\
Magic & IV + QN & \textbf{0.8627} \\
\midrule
News & Base & 0.0000 \\
News & IV & 0.0000 \\
News & QN & \textbf{0.1588} \\
News & IV + QN & 0.1132 \\
\midrule
Shoppers & Base & 0.0022 \\
Shoppers & IV & 0.5822 \\
Shoppers & QN & 0.8194 \\
Shoppers & IV + QN & \textbf{0.8866} \\
\bottomrule
\end{tabular}
\end{table}

%% file: tables_new_protocol/shape.tex
\begin{table}[htbp]
\centering
\caption{\textbf{\ours reports strong results on the \shape metric, exceeding the performance of \sota diffusion-based models in six out of the seven datasets.} We note that the \FF model in fact obtains higher scores even than \ours in all except one dataset, likely because matching the marginals is essentially how it is trained.}
\label{tab:density-Shape_updated}
\resizebox{\textwidth}{!}{\begin{tabular}{
l
r@{}l@{}r
r@{}l@{}r
r@{}l@{}r
r@{}l@{}r
r@{}l@{}r
r@{}l@{}r
r@{}l@{}r
r
}
    \toprule[0.8pt]
     \textbf{Method} & \multicolumn{3}{c}{\textbf{Adult}}
 & \multicolumn{3}{c}{\textbf{Beijing}}
 & \multicolumn{3}{c}{\textbf{Default}}
 & \multicolumn{3}{c}{\textbf{Diabetes}}
 & \multicolumn{3}{c}{\textbf{Magic}}
 & \multicolumn{3}{c}{\textbf{News}}
 & \multicolumn{3}{c}{\textbf{Shoppers}}
 & \textbf{Avg. Rank} \\
    \midrule 
    CTGAN & 0.8087 & \footnotesize{$\pm 0.0191$} & ~~(8)
 & 0.7999 & \footnotesize{$\pm 0.0221$} & ~~(10)
 & 0.8308 & \footnotesize{$\pm 0.0111$} & ~~(9)
 & 0.8957 & \footnotesize{$\pm 0.0042$} & ~~(6)
 & 0.9443 & \footnotesize{$\pm 0.0065$} & ~~(8)
 & 0.8622 & \footnotesize{$\pm 0.0036$} & ~~(8)
 & 0.7406 & \footnotesize{$\pm 0.0174$} & ~~(10)
 & 8.43 \\
    TVAE & 0.7619 & \footnotesize{$\pm 0.0135$} & ~~(10)
 & 0.7216 & \footnotesize{$\pm 0.0196$} & ~~(11)
 & 0.9127 & \footnotesize{$\pm 0.0023$} & ~~(8)
 & 0.7618 & \footnotesize{$\pm 0.0471$} & ~~(8)
 & 0.9540 & \footnotesize{$\pm 0.0067$} & ~~(7)
 & 0.8169 & \footnotesize{$\pm 0.0083$} & ~~(9)
 & 0.7508 & \footnotesize{$\pm 0.0187$} & ~~(9)
 & 8.86 \\
    GReaT & 0.4250 & \footnotesize{$\pm 0.0002$} & ~~(11)
 & 0.9339 & \footnotesize{$\pm 0.0010$} & ~~(7)
 & 0.8047 & \footnotesize{$\pm 0.0015$} & ~~(10)
 & \multicolumn{2}{c}{*} & ~~(10) & 0.8498 & \footnotesize{$\pm 0.0009$} & ~~(11)
 & \multicolumn{2}{c}{*} & ~~(11) & 0.8578 & \footnotesize{$\pm 0.0006$} & ~~(8)
 & 9.71 \\
    STaSy & 0.8915 & \footnotesize{$\pm 0.0172$} & ~~(7)
 & 0.8916 & \footnotesize{$\pm 0.0204$} & ~~(8)
 & 0.9285 & \footnotesize{$\pm 0.0172$} & ~~(7)
 & 0.3712 & \footnotesize{$\pm{0.0000}$} & ~~(9) & 0.8817 & \footnotesize{$\pm 0.0200$} & ~~(10)
 & 0.9016 & \footnotesize{$\pm 0.0233$} & ~~(7)
 & 0.8651 & \footnotesize{$\pm 0.0291$} & ~~(7)
 & 7.86 \\
    CoDi & 0.7662 & \footnotesize{$\pm 0.0245$} & ~~(9)
 & 0.8115 & \footnotesize{$\pm 0.0404$} & ~~(9)
 & 0.7790 & \footnotesize{$\pm 0.0300$} & ~~(11)
 & 0.7868 & \footnotesize{$\pm{0.0000}$} & ~~(7) & 0.9050 & \footnotesize{$\pm 0.0088$} & ~~(9)
 & 0.7173 & \footnotesize{$\pm 0.0146$} & ~~(10)
 & 0.6671 & \footnotesize{$\pm 0.0138$} & ~~(11)
 & 9.43 \\
    TabSyn & 0.9921 & \footnotesize{$\pm 0.0010$} & ~~(5)
 & 0.9739 & \footnotesize{$\pm 0.0126$} & ~~(6)
 & 0.9868 & \footnotesize{$\pm 0.0043$} & ~~(5)
 & 0.9822 & \footnotesize{$\pm 0.0008$} & ~~(5)
 & 0.9902 & \footnotesize{$\pm 0.0019$} & ~~(3)
 & 0.9766 & \footnotesize{$\pm 0.0122$} & ~~(4)
 & 0.9852 & \footnotesize{$\pm 0.0018$} & ~~(3)
 & 4.43 \\
    TabDiff & 0.9932 & \footnotesize{$\pm 0.0008$} & ~~(3)
 & 0.9895 & \footnotesize{$\pm 0.0004$} & ~~(3)
 & 0.9884 & \footnotesize{$\pm 0.0016$} & ~~(4)
 & 0.9855 & \footnotesize{$\pm 0.0070$} & ~~(4)
 & 0.9920 & \footnotesize{$\pm 0.0008$} & ~~(2)
 & 0.9725 & \footnotesize{$\pm 0.0047$} & ~~(5)
 & 0.9851 & \footnotesize{$\pm 0.0028$} & ~~(4)
 & 3.57 \\
    TabPFN & 0.9734 & \footnotesize{$\pm 0.0006$} & ~~(6)
 & 0.9869 & \footnotesize{$\pm 0.0004$} & ~~(4)
 & 0.9369 & \footnotesize{$\pm 0.0005$} & ~~(6)
 & \multicolumn{2}{c}{*} & ~~(10) & 0.9819 & \footnotesize{$\pm 0.0014$} & ~~(6)
 & 0.9224 & \footnotesize{$\pm 0.0001$} & ~~(6)
 & 0.8846 & \footnotesize{$\pm 0.0009$} & ~~(6)
 & 6.29 \\
    \midrule
    FF & 0.9961 & \footnotesize{$\pm 0.0006$} & ~~(1)
 & 0.9949 & \footnotesize{$\pm 0.0003$} & ~~(1)
 & 0.9954 & \footnotesize{$\pm 0.0003$} & ~~(1)
 & 0.9966 & \footnotesize{$\pm 0.0001$} & ~~(1)
 & 0.9938 & \footnotesize{$\pm 0.0007$} & ~~(1)
 & 0.9907 & \footnotesize{$\pm 0.0001$} & ~~(2)
 & 0.9934 & \footnotesize{$\pm 0.0007$} & ~~(1)
 & \textbf{1.14} \\
    SM & 0.9924 & \footnotesize{$\pm 0.0008$} & ~~(4)
 & 0.9811 & \footnotesize{$\pm 0.0046$} & ~~(5)
 & 0.9907 & \footnotesize{$\pm 0.0006$} & ~~(3)
 & 0.9940 & \footnotesize{$\pm 0.0013$} & ~~(3)
 & 0.9840 & \footnotesize{$\pm 0.0012$} & ~~(5)
 & 0.9880 & \footnotesize{$\pm 0.0012$} & ~~(3)
 & 0.9712 & \footnotesize{$\pm 0.0105$} & ~~(5)
 & 4.00 \\
    TabPC & 0.9942 & \footnotesize{$\pm 0.0007$} & ~~(2)
 & 0.9943 & \footnotesize{$\pm 0.0005$} & ~~(2)
 & 0.9946 & \footnotesize{$\pm 0.0004$} & ~~(2)
 & 0.9942 & \footnotesize{$\pm 0.0004$} & ~~(2)
 & 0.9902 & \footnotesize{$\pm 0.0013$} & ~~(3)
 & 0.9936 & \footnotesize{$\pm 0.0009$} & ~~(1)
 & 0.9910 & \footnotesize{$\pm 0.0014$} & ~~(2)
 & 2.00 \\
    \bottomrule[1.0pt]
\end{tabular}}
\end{table}

%% file: tables_new_protocol/trend.tex
\begin{table}[htbp]
\centering
\caption{As we have described in \cref{sec:trend}, \textbf{\trend is problematic} in that it is easily `fooled' by the trivial \FF model below. For example, on Diabetes, the \FF model is able to beat diffusion-based \tabsyn.}
\label{tab:density-Trend_updated}
\resizebox{\textwidth}{!}{\begin{tabular}{
l
r@{}l@{}r
r@{}l@{}r
r@{}l@{}r
r@{}l@{}r
r@{}l@{}r
r@{}l@{}r
r@{}l@{}r
r
}
    \toprule[0.8pt]
     \textbf{Method} & \multicolumn{3}{c}{\textbf{Adult}}
 & \multicolumn{3}{c}{\textbf{Beijing}}
 & \multicolumn{3}{c}{\textbf{Default}}
 & \multicolumn{3}{c}{\textbf{Diabetes}}
 & \multicolumn{3}{c}{\textbf{Magic}}
 & \multicolumn{3}{c}{\textbf{News}}
 & \multicolumn{3}{c}{\textbf{Shoppers}}
 & \textbf{Avg. Rank} \\
    \midrule 
    CTGAN & 0.7581 & \footnotesize{$\pm 0.0205$} & ~~(9)
 & 0.7430 & \footnotesize{$\pm 0.0311$} & ~~(10)
 & 0.7024 & \footnotesize{$\pm 0.0285$} & ~~(11)
 & 0.8095 & \footnotesize{$\pm 0.0170$} & ~~(6)
 & 0.9445 & \footnotesize{$\pm 0.0071$} & ~~(6)
 & 0.9485 & \footnotesize{$\pm 0.0011$} & ~~(9)
 & 0.7554 & \footnotesize{$\pm 0.0189$} & ~~(11)
 & 8.86 \\
    TVAE & 0.6664 & \footnotesize{$\pm 0.0330$} & ~~(10)
 & 0.7022 & \footnotesize{$\pm 0.0356$} & ~~(11)
 & 0.8135 & \footnotesize{$\pm 0.0287$} & ~~(8)
 & 0.5936 & \footnotesize{$\pm 0.0925$} & ~~(8)
 & 0.9531 & \footnotesize{$\pm 0.0120$} & ~~(5)
 & 0.9386 & \footnotesize{$\pm 0.0064$} & ~~(10)
 & 0.7928 & \footnotesize{$\pm 0.0256$} & ~~(10)
 & 8.86 \\
    GReaT & 0.1907 & \footnotesize{$\pm 0.0023$} & ~~(11)
 & 0.9185 & \footnotesize{$\pm 0.0272$} & ~~(7)
 & 0.7765 & \footnotesize{$\pm 0.0187$} & ~~(9)
 & \multicolumn{2}{c}{*} & ~~(10) & 0.9076 & \footnotesize{$\pm 0.0094$} & ~~(9)
 & \multicolumn{2}{c}{*} & ~~(11) & 0.8892 & \footnotesize{$\pm 0.0047$} & ~~(6)
 & 9.00 \\
    STaSy & 0.8526 & \footnotesize{$\pm 0.0204$} & ~~(7)
 & 0.8804 & \footnotesize{$\pm 0.0203$} & ~~(9)
 & 0.9297 & \footnotesize{$\pm 0.0221$} & ~~(5)
 & 0.1480 & \footnotesize{$\pm{0.0000}$} & ~~(9) & 0.9399 & \footnotesize{$\pm 0.0158$} & ~~(7)
 & 0.9699 & \footnotesize{$\pm 0.0043$} & ~~(5)
 & 0.8745 & \footnotesize{$\pm 0.0259$} & ~~(7)
 & 7.00 \\
    CoDi & 0.7714 & \footnotesize{$\pm 0.0163$} & ~~(8)
 & 0.9258 & \footnotesize{$\pm 0.0101$} & ~~(5)
 & 0.8376 & \footnotesize{$\pm 0.0119$} & ~~(7)
 & 0.6859 & \footnotesize{$\pm{0.0000}$} & ~~(7) & 0.9397 & \footnotesize{$\pm 0.0050$} & ~~(8)
 & 0.9571 & \footnotesize{$\pm 0.0008$} & ~~(6)
 & 0.8059 & \footnotesize{$\pm 0.0095$} & ~~(9)
 & 7.14 \\
    TabSyn & 0.9795 & \footnotesize{$\pm 0.0030$} & ~~(4)
 & 0.9550 & \footnotesize{$\pm 0.0157$} & ~~(4)
 & 0.9712 & \footnotesize{$\pm 0.0094$} & ~~(3)
 & 0.9603 & \footnotesize{$\pm 0.0011$} & ~~(5)
 & 0.9918 & \footnotesize{$\pm 0.0014$} & ~~(1)
 & 0.9833 & \footnotesize{$\pm 0.0044$} & ~~(2)
 & 0.9771 & \footnotesize{$\pm 0.0013$} & ~~(3)
 & 3.14 \\
    TabDiff & 0.9846 & \footnotesize{$\pm 0.0007$} & ~~(2)
 & 0.9741 & \footnotesize{$\pm 0.0018$} & ~~(2)
 & 0.9726 & \footnotesize{$\pm 0.0068$} & ~~(2)
 & 0.9686 & \footnotesize{$\pm 0.0083$} & ~~(3)
 & 0.9916 & \footnotesize{$\pm 0.0021$} & ~~(2)
 & 0.9836 & \footnotesize{$\pm 0.0026$} & ~~(1)
 & 0.9809 & \footnotesize{$\pm 0.0017$} & ~~(2)
 & \textbf{2.00} \\
    TabPFN & 0.9152 & \footnotesize{$\pm 0.0201$} & ~~(6)
 & 0.9236 & \footnotesize{$\pm 0.0363$} & ~~(6)
 & 0.7699 & \footnotesize{$\pm 0.0542$} & ~~(10)
 & \multicolumn{2}{c}{*} & ~~(10) & 0.8936 & \footnotesize{$\pm 0.0139$} & ~~(10)
 & 0.9494 & \footnotesize{$\pm 0.0030$} & ~~(8)
 & 0.8099 & \footnotesize{$\pm 0.0233$} & ~~(8)
 & 8.29 \\
    \midrule
    FF & 0.9251 & \footnotesize{$\pm 0.0004$} & ~~(5)
 & 0.9101 & \footnotesize{$\pm 0.0010$} & ~~(8)
 & 0.8875 & \footnotesize{$\pm 0.0013$} & ~~(6)
 & 0.9685 & \footnotesize{$\pm 0.0017$} & ~~(4)
 & 0.8696 & \footnotesize{$\pm 0.0005$} & ~~(11)
 & 0.9548 & \footnotesize{$\pm 0.0006$} & ~~(7)
 & 0.9367 & \footnotesize{$\pm 0.0014$} & ~~(5)
 & 6.57 \\
    SM & 0.9814 & \footnotesize{$\pm 0.0012$} & ~~(3)
 & 0.9646 & \footnotesize{$\pm 0.0062$} & ~~(3)
 & 0.9771 & \footnotesize{$\pm 0.0013$} & ~~(1)
 & 0.9741 & \footnotesize{$\pm 0.0016$} & ~~(2)
 & 0.9789 & \footnotesize{$\pm 0.0010$} & ~~(4)
 & 0.9758 & \footnotesize{$\pm 0.0007$} & ~~(4)
 & 0.9503 & \footnotesize{$\pm 0.0091$} & ~~(4)
 & 3.00 \\
    TabPC & 0.9856 & \footnotesize{$\pm 0.0012$} & ~~(1)
 & 0.9781 & \footnotesize{$\pm 0.0025$} & ~~(1)
 & 0.9496 & \footnotesize{$\pm 0.0189$} & ~~(4)
 & 0.9810 & \footnotesize{$\pm 0.0019$} & ~~(1)
 & 0.9830 & \footnotesize{$\pm 0.0051$} & ~~(3)
 & 0.9815 & \footnotesize{$\pm 0.0020$} & ~~(3)
 & 0.9822 & \footnotesize{$\pm 0.0023$} & ~~(1)
 & \textbf{2.00} \\
    \bottomrule[1.0pt]
\end{tabular}}
\end{table}

%% file: tables_new_protocol/wnmis.tex
\begin{table}[htbp]
\centering
\caption{\textbf{\ours has the highest average rank of all methods on \wnmis}, exceeding the mean performance of the diffusion-based \sota in four out of the seven datasets.}
\label{tab:nmi_l1_weighted complement_updated}
\resizebox{\textwidth}{!}{\begin{tabular}{
l
r@{}l@{}r
r@{}l@{}r
r@{}l@{}r
r@{}l@{}r
r@{}l@{}r
r@{}l@{}r
r@{}l@{}r
r
}
    \toprule[0.8pt]
     \textbf{Method} & \multicolumn{3}{c}{\textbf{Adult}}
 & \multicolumn{3}{c}{\textbf{Beijing}}
 & \multicolumn{3}{c}{\textbf{Default}}
 & \multicolumn{3}{c}{\textbf{Diabetes}}
 & \multicolumn{3}{c}{\textbf{Magic}}
 & \multicolumn{3}{c}{\textbf{News}}
 & \multicolumn{3}{c}{\textbf{Shoppers}}
 & \textbf{Avg. Rank} \\
    \midrule 
    CTGAN & 0.8423 & \footnotesize{$\pm 0.0284$} & ~~(9)
 & 0.8734 & \footnotesize{$\pm 0.0399$} & ~~(9)
 & 0.8274 & \footnotesize{$\pm 0.0110$} & ~~(10)
 & 0.9455 & \footnotesize{$\pm 0.0028$} & ~~(4)
 & 0.9198 & \footnotesize{$\pm 0.0051$} & ~~(10)
 & 0.8679 & \footnotesize{$\pm 0.0020$} & ~~(8)
 & 0.8921 & \footnotesize{$\pm 0.0306$} & ~~(10)
 & 8.57 \\
    TVAE & 0.9127 & \footnotesize{$\pm 0.0070$} & ~~(8)
 & 0.8734 & \footnotesize{$\pm 0.0314$} & ~~(9)
 & 0.9599 & \footnotesize{$\pm 0.0099$} & ~~(5)
 & 0.8704 & \footnotesize{$\pm 0.0448$} & ~~(9)
 & 0.9719 & \footnotesize{$\pm 0.0024$} & ~~(8)
 & 0.8793 & \footnotesize{$\pm 0.0047$} & ~~(7)
 & 0.9392 & \footnotesize{$\pm 0.0110$} & ~~(7)
 & 7.57 \\
    GReaT & 0.9561 & \footnotesize{$\pm 0.0005$} & ~~(6)
 & 0.9837 & \footnotesize{$\pm 0.0016$} & ~~(4)
 & 0.8709 & \footnotesize{$\pm 0.0016$} & ~~(9)
 & \multicolumn{2}{c}{*} & ~~(10) & 0.9723 & \footnotesize{$\pm 0.0008$} & ~~(7)
 & \multicolumn{2}{c}{*} & ~~(11) & 0.9577 & \footnotesize{$\pm 0.0027$} & ~~(6)
 & 7.57 \\
    STaSy & 0.9735 & \footnotesize{$\pm 0.0049$} & ~~(5)
 & 0.9781 & \footnotesize{$\pm 0.0067$} & ~~(7)
 & 0.9504 & \footnotesize{$\pm 0.0172$} & ~~(7)
 & 0.8917 & \footnotesize{$\pm{0.0000}$} & ~~(8) & 0.9796 & \footnotesize{$\pm 0.0120$} & ~~(5)
 & 0.9392 & \footnotesize{$\pm 0.0063$} & ~~(2)
 & 0.9602 & \footnotesize{$\pm 0.0031$} & ~~(5)
 & 5.57 \\
    CoDi & 0.7898 & \footnotesize{$\pm 0.0073$} & ~~(10)
 & 0.9812 & \footnotesize{$\pm 0.0077$} & ~~(6)
 & 0.8927 & \footnotesize{$\pm 0.0049$} & ~~(8)
 & 0.9336 & \footnotesize{$\pm{0.0000}$} & ~~(6) & 0.9741 & \footnotesize{$\pm 0.0035$} & ~~(6)
 & 0.9232 & \footnotesize{$\pm 0.0036$} & ~~(5)
 & 0.9171 & \footnotesize{$\pm 0.0086$} & ~~(9)
 & 7.14 \\
    TabSyn & 0.9860 & \footnotesize{$\pm 0.0026$} & ~~(4)
 & 0.9833 & \footnotesize{$\pm 0.0076$} & ~~(5)
 & 0.9846 & \footnotesize{$\pm 0.0029$} & ~~(3)
 & 0.9797 & \footnotesize{$\pm 0.0006$} & ~~(2)
 & 0.9971 & \footnotesize{$\pm 0.0003$} & ~~(1)
 & 0.9441 & \footnotesize{$\pm 0.0122$} & ~~(1)
 & 0.9832 & \footnotesize{$\pm 0.0042$} & ~~(3)
 & 2.71 \\
    TabDiff & 0.9889 & \footnotesize{$\pm 0.0009$} & ~~(2)
 & 0.9927 & \footnotesize{$\pm 0.0004$} & ~~(2)
 & 0.9881 & \footnotesize{$\pm 0.0016$} & ~~(2)
 & 0.9687 & \footnotesize{$\pm 0.0309$} & ~~(3)
 & 0.9953 & \footnotesize{$\pm 0.0016$} & ~~(2)
 & 0.9127 & \footnotesize{$\pm 0.0262$} & ~~(6)
 & 0.9844 & \footnotesize{$\pm 0.0011$} & ~~(2)
 & 2.71 \\
    TabPFN & 0.9885 & \footnotesize{$\pm 0.0007$} & ~~(3)
 & 0.9971 & \footnotesize{$\pm 0.0005$} & ~~(1)
 & 0.9648 & \footnotesize{$\pm 0.0012$} & ~~(4)
 & \multicolumn{2}{c}{*} & ~~(10) & 0.9941 & \footnotesize{$\pm 0.0010$} & ~~(3)
 & 0.9249 & \footnotesize{$\pm 0.0002$} & ~~(4)
 & 0.9614 & \footnotesize{$\pm 0.0009$} & ~~(4)
 & 4.14 \\
    \midrule
    FF & 0.7189 & \footnotesize{$\pm 0.0002$} & ~~(11)
 & 0.7918 & \footnotesize{$\pm 0.0001$} & ~~(11)
 & 0.6942 & \footnotesize{$\pm 0.0001$} & ~~(11)
 & 0.9082 & \footnotesize{$\pm 0.0002$} & ~~(7)
 & 0.7684 & \footnotesize{$\pm{0.0000}$} & ~~(11) & 0.7421 & \footnotesize{$\pm 0.0001$} & ~~(10)
 & 0.8661 & \footnotesize{$\pm 0.0002$} & ~~(11)
 & 10.29 \\
    SM & 0.9540 & \footnotesize{$\pm 0.0071$} & ~~(7)
 & 0.9696 & \footnotesize{$\pm 0.0032$} & ~~(8)
 & 0.9573 & \footnotesize{$\pm 0.0016$} & ~~(6)
 & 0.9342 & \footnotesize{$\pm 0.0088$} & ~~(5)
 & 0.9432 & \footnotesize{$\pm 0.0037$} & ~~(9)
 & 0.8175 & \footnotesize{$\pm 0.0081$} & ~~(9)
 & 0.9266 & \footnotesize{$\pm 0.0048$} & ~~(8)
 & 7.43 \\
    TabPC & 0.9933 & \footnotesize{$\pm 0.0049$} & ~~(1)
 & 0.9925 & \footnotesize{$\pm 0.0010$} & ~~(3)
 & 0.9894 & \footnotesize{$\pm 0.0012$} & ~~(1)
 & 0.9919 & \footnotesize{$\pm 0.0024$} & ~~(1)
 & 0.9925 & \footnotesize{$\pm 0.0012$} & ~~(4)
 & 0.9388 & \footnotesize{$\pm 0.0048$} & ~~(3)
 & 0.9927 & \footnotesize{$\pm 0.0009$} & ~~(1)
 & \textbf{2.00} \\
    \bottomrule[1.0pt]
\end{tabular}}
\end{table}

%% file: tables_new_protocol/lr_detection.tex
\begin{table}[htbp]
\centering
\caption{\textbf{\ctwost (LR) is flawed and should not be considered a strong indicator of model performance.} This is evidenced by the performance of the \FF model, which is able to achieve perfect or almost-perfect scores across a range of datasets.}
\label{tab:logistic regression(C2ST)_updated}
\resizebox{\textwidth}{!}{\begin{tabular}{
l
r@{}l@{}r
r@{}l@{}r
r@{}l@{}r
r@{}l@{}r
r@{}l@{}r
r@{}l@{}r
r@{}l@{}r
r
}
    \toprule[0.8pt]
     \textbf{Method} & \multicolumn{3}{c}{\textbf{Adult}}
 & \multicolumn{3}{c}{\textbf{Beijing}}
 & \multicolumn{3}{c}{\textbf{Default}}
 & \multicolumn{3}{c}{\textbf{Diabetes}}
 & \multicolumn{3}{c}{\textbf{Magic}}
 & \multicolumn{3}{c}{\textbf{News}}
 & \multicolumn{3}{c}{\textbf{Shoppers}}
 & \textbf{Avg. Rank} \\
    \midrule 
    CTGAN & 0.5881 & \footnotesize{$\pm 0.0928$} & ~~(7)
 & 0.5534 & \footnotesize{$\pm 0.1899$} & ~~(10)
 & 0.4129 & \footnotesize{$\pm 0.0875$} & ~~(10)
 & 0.5293 & \footnotesize{$\pm 0.0693$} & ~~(6)
 & 0.7966 & \footnotesize{$\pm 0.0422$} & ~~(8)
 & 0.7486 & \footnotesize{$\pm 0.0428$} & ~~(7)
 & 0.5255 & \footnotesize{$\pm 0.0319$} & ~~(7)
 & 7.86 \\
    TVAE & 0.2286 & \footnotesize{$\pm 0.0416$} & ~~(9)
 & 0.3707 & \footnotesize{$\pm 0.0498$} & ~~(11)
 & 0.6662 & \footnotesize{$\pm 0.0182$} & ~~(7)
 & 0.0177 & \footnotesize{$\pm 0.0259$} & ~~(7)
 & 0.8765 & \footnotesize{$\pm 0.0261$} & ~~(7)
 & 0.4355 & \footnotesize{$\pm 0.0321$} & ~~(9)
 & 0.2728 & \footnotesize{$\pm 0.0675$} & ~~(10)
 & 8.57 \\
    GReaT & \multicolumn{2}{c}{*} & ~~(11) & 0.7645 & \footnotesize{$\pm 0.0040$} & ~~(7)
 & 0.4815 & \footnotesize{$\pm 0.0019$} & ~~(9)
 & \multicolumn{2}{c}{*} & ~~(9) & 0.4628 & \footnotesize{$\pm 0.0027$} & ~~(11)
 & \multicolumn{2}{c}{*} & ~~(11) & 0.4402 & \footnotesize{$\pm 0.0047$} & ~~(8)
 & 9.43 \\
    STaSy & 0.4537 & \footnotesize{$\pm 0.0268$} & ~~(8)
 & 0.6531 & \footnotesize{$\pm 0.0414$} & ~~(8)
 & 0.6303 & \footnotesize{$\pm 0.0628$} & ~~(8)
 & \multicolumn{2}{c}{*} & ~~(9) & 0.5496 & \footnotesize{$\pm 0.0707$} & ~~(10)
 & 0.4448 & \footnotesize{$\pm 0.1707$} & ~~(8)
 & 0.3860 & \footnotesize{$\pm 0.0979$} & ~~(9)
 & 8.57 \\
    CoDi & 0.1986 & \footnotesize{$\pm 0.0286$} & ~~(10)
 & 0.6345 & \footnotesize{$\pm 0.2901$} & ~~(9)
 & 0.3287 & \footnotesize{$\pm 0.0602$} & ~~(11)
 & 0.0023 & \footnotesize{$\pm{0.0000}$} & ~~(8) & 0.7425 & \footnotesize{$\pm 0.0238$} & ~~(9)
 & 0.1133 & \footnotesize{$\pm 0.0860$} & ~~(10)
 & 0.2169 & \footnotesize{$\pm 0.0164$} & ~~(11)
 & 9.71 \\
    TabSyn & 0.9866 & \footnotesize{$\pm 0.0081$} & ~~(5)
 & 0.9226 & \footnotesize{$\pm 0.0303$} & ~~(5)
 & 0.9463 & \footnotesize{$\pm 0.0588$} & ~~(5)
 & 0.6617 & \footnotesize{$\pm 0.0397$} & ~~(5)
 & 0.9940 & \footnotesize{$\pm 0.0051$} & ~~(3)
 & 0.9381 & \footnotesize{$\pm 0.0629$} & ~~(4)
 & 0.9787 & \footnotesize{$\pm 0.0141$} & ~~(3)
 & 4.29 \\
    TabDiff & 0.9899 & \footnotesize{$\pm 0.0054$} & ~~(3)
 & 0.9770 & \footnotesize{$\pm 0.0027$} & ~~(4)
 & 0.9699 & \footnotesize{$\pm 0.0095$} & ~~(4)
 & 0.9290 & \footnotesize{$\pm 0.0651$} & ~~(4)
 & 0.9964 & \footnotesize{$\pm 0.0035$} & ~~(1)
 & 0.9057 & \footnotesize{$\pm 0.0588$} & ~~(5)
 & 0.9756 & \footnotesize{$\pm 0.0170$} & ~~(4)
 & 3.57 \\
    TabPFN & 0.8864 & \footnotesize{$\pm 0.0027$} & ~~(6)
 & 0.9961 & \footnotesize{$\pm 0.0029$} & ~~(3)
 & 0.8820 & \footnotesize{$\pm 0.0068$} & ~~(6)
 & \multicolumn{2}{c}{*} & ~~(9) & 0.9722 & \footnotesize{$\pm 0.0040$} & ~~(5)
 & 0.8380 & \footnotesize{$\pm 0.0009$} & ~~(6)
 & 0.7170 & \footnotesize{$\pm 0.0067$} & ~~(6)
 & 5.86 \\
    \midrule
    FF & 1.0000 & \footnotesize{$\pm{0.0000}$} & ~~(1) & 0.9979 & \footnotesize{$\pm 0.0029$} & ~~(2)
 & 0.9949 & \footnotesize{$\pm 0.0045$} & ~~(2)
 & 1.0000 & \footnotesize{$\pm{0.0000}$} & ~~(1) & 0.9892 & \footnotesize{$\pm 0.0072$} & ~~(4)
 & 0.9795 & \footnotesize{$\pm 0.0132$} & ~~(2)
 & 1.0000 & \footnotesize{$\pm{0.0000}$} & ~~(1) & 1.86 \\
    SM & 0.9879 & \footnotesize{$\pm 0.0041$} & ~~(4)
 & 0.9133 & \footnotesize{$\pm 0.0119$} & ~~(6)
 & 0.9826 & \footnotesize{$\pm 0.0054$} & ~~(3)
 & 0.9993 & \footnotesize{$\pm 0.0016$} & ~~(3)
 & 0.9551 & \footnotesize{$\pm 0.0056$} & ~~(6)
 & 0.9765 & \footnotesize{$\pm 0.0128$} & ~~(3)
 & 0.9106 & \footnotesize{$\pm 0.0501$} & ~~(5)
 & 4.29 \\
    TabPC & 0.9959 & \footnotesize{$\pm 0.0083$} & ~~(2)
 & 0.9989 & \footnotesize{$\pm 0.0015$} & ~~(1)
 & 0.9999 & \footnotesize{$\pm 0.0003$} & ~~(1)
 & 1.0000 & \footnotesize{$\pm{0.0000}$} & ~~(1) & 0.9953 & \footnotesize{$\pm 0.0067$} & ~~(2)
 & 0.9960 & \footnotesize{$\pm 0.0033$} & ~~(1)
 & 0.9969 & \footnotesize{$\pm 0.0056$} & ~~(2)
 & \textbf{1.43} \\
    \bottomrule[1.0pt]
\end{tabular}}
\end{table}

%% file: tables_new_protocol/xgb_detection.tex
\begin{table}[htbp]
\centering
\caption{\textbf{\ours offers competitive performance on \ctwost (XGB),} {slightly exceeding} the average rank across all datasets of \tabdiff. In Diabetes and Shoppers, \ours also greatly exceeds the performance of the current \sota, but suffers compared to \tabdiff on Beijing and Default. We note that all datasets struggle on the complex dataset News, which contains many numerical features and skewed distributions. {We use the default package parameters for our XGBoost classifier.}}
\label{tab:xgboost(C2ST)_updated}
\resizebox{\textwidth}{!}{\begin{tabular}{
l
r@{}l@{}r
r@{}l@{}r
r@{}l@{}r
r@{}l@{}r
r@{}l@{}r
r@{}l@{}r
r@{}l@{}r
r
}
    \toprule[0.8pt]
     \textbf{Method} & \multicolumn{3}{c}{\textbf{Adult}}
 & \multicolumn{3}{c}{\textbf{Beijing}}
 & \multicolumn{3}{c}{\textbf{Default}}
 & \multicolumn{3}{c}{\textbf{Diabetes}}
 & \multicolumn{3}{c}{\textbf{Magic}}
 & \multicolumn{3}{c}{\textbf{News}}
 & \multicolumn{3}{c}{\textbf{Shoppers}}
 & \textbf{Avg. Rank} \\
    \midrule 
    CTGAN & \multicolumn{2}{c}{<0.0001} & ~~(8) & \multicolumn{2}{c}{<0.0001} & ~~(10) & 0.0002 & \footnotesize{$\pm 0.0001$} & ~~(10)
 & \multicolumn{2}{c}{<0.0001} & ~~(6) & 0.1034 & \footnotesize{$\pm 0.0057$} & ~~(10)
 & \multicolumn{2}{c}{<0.0001} & ~~(6) & \multicolumn{2}{c}{<0.0001} & ~~(10) & 8.57 \\
    TVAE & \multicolumn{2}{c}{<0.0001} & ~~(8) & \multicolumn{2}{c}{<0.0001} & ~~(10) & 0.0020 & \footnotesize{$\pm 0.0002$} & ~~(8)
 & \multicolumn{2}{c}{<0.0001} & ~~(6) & 0.2981 & \footnotesize{$\pm 0.0276$} & ~~(8)
 & \multicolumn{2}{c}{<0.0001} & ~~(6) & \multicolumn{2}{c}{<0.0001} & ~~(10) & 8.00 \\
    GReaT & \multicolumn{2}{c}{*} & ~~(11) & 0.6016 & \footnotesize{$\pm 0.0028$} & ~~(3)
 & 0.2425 & \footnotesize{$\pm 0.0012$} & ~~(5)
 & \multicolumn{2}{c}{*} & ~~(9) & 0.2702 & \footnotesize{$\pm 0.0027$} & ~~(9)
 & \multicolumn{2}{c}{*} & ~~(11) & 0.2924 & \footnotesize{$\pm 0.0034$} & ~~(4)
 & 7.43 \\
    STaSy & 0.3425 & \footnotesize{$\pm 0.0272$} & ~~(6)
 & 0.3668 & \footnotesize{$\pm 0.0243$} & ~~(6)
 & 0.3571 & \footnotesize{$\pm 0.0370$} & ~~(3)
 & \multicolumn{2}{c}{*} & ~~(9) & 0.5221 & \footnotesize{$\pm 0.0790$} & ~~(6)
 & 0.1917 & \footnotesize{$\pm 0.0525$} & ~~(1)
 & 0.2277 & \footnotesize{$\pm 0.0806$} & ~~(6)
 & 5.29 \\
    CoDi & <0.0001 & \footnotesize{$\pm 0.0001$} & ~~(8)
 & 0.0002 & \footnotesize{$\pm 0.0001$} & ~~(9)
 & 0.0014 & \footnotesize{$\pm 0.0002$} & ~~(9)
 & \multicolumn{2}{c}{<0.0001} & ~~(6) & 0.4201 & \footnotesize{$\pm 0.0430$} & ~~(7)
 & \multicolumn{2}{c}{<0.0001} & ~~(6) & 0.0002 & \footnotesize{$\pm 0.0001$} & ~~(9)
 & 7.71 \\
    TabSyn & 0.7942 & \footnotesize{$\pm 0.0341$} & ~~(4)
 & 0.5266 & \footnotesize{$\pm 0.1397$} & ~~(5)
 & 0.4024 & \footnotesize{$\pm 0.0515$} & ~~(2)
 & 0.5222 & \footnotesize{$\pm 0.0362$} & ~~(3)
 & 0.7602 & \footnotesize{$\pm 0.0310$} & ~~(4)
 & 0.1208 & \footnotesize{$\pm 0.0590$} & ~~(3)
 & 0.6554 & \footnotesize{$\pm 0.0196$} & ~~(3)
 & 3.43 \\
    TabDiff & 0.8409 & \footnotesize{$\pm 0.0067$} & ~~(2)
 & 0.7723 & \footnotesize{$\pm 0.0138$} & ~~(1)
 & 0.5914 & \footnotesize{$\pm 0.0226$} & ~~(1)
 & 0.5618 & \footnotesize{$\pm 0.2842$} & ~~(2)
 & 0.7732 & \footnotesize{$\pm 0.0278$} & ~~(3)
 & 0.0296 & \footnotesize{$\pm 0.0252$} & ~~(4)
 & 0.7361 & \footnotesize{$\pm 0.0162$} & ~~(2)
 & 2.14 \\
    TabPFN & 0.8257 & \footnotesize{$\pm 0.0042$} & ~~(3)
 & 0.2289 & \footnotesize{$\pm 0.0048$} & ~~(7)
 & 0.0798 & \footnotesize{$\pm 0.0003$} & ~~(7)
 & \multicolumn{2}{c}{*} & ~~(9) & 0.8572 & \footnotesize{$\pm 0.0048$} & ~~(2)
 & 0.0002 & \footnotesize{$\pm 0.0001$} & ~~(5)
 & 0.0021 & \footnotesize{$\pm 0.0002$} & ~~(8)
 & 5.86 \\
    \midrule
    FF & 0.0160 & \footnotesize{$\pm 0.0007$} & ~~(7)
 & 0.0172 & \footnotesize{$\pm 0.0006$} & ~~(8)
 & 0.0002 & \footnotesize{$\pm 0.0001$} & ~~(10)
 & 0.0503 & \footnotesize{$\pm 0.0009$} & ~~(5)
 & 0.0052 & \footnotesize{$\pm 0.0002$} & ~~(11)
 & \multicolumn{2}{c}{<0.0001} & ~~(6) & 0.0211 & \footnotesize{$\pm 0.0013$} & ~~(7)
 & 7.71 \\
    SM & 0.7727 & \footnotesize{$\pm 0.0243$} & ~~(5)
 & 0.7332 & \footnotesize{$\pm 0.0148$} & ~~(2)
 & 0.1053 & \footnotesize{$\pm 0.0127$} & ~~(6)
 & 0.1329 & \footnotesize{$\pm 0.0328$} & ~~(4)
 & 0.6258 & \footnotesize{$\pm 0.0228$} & ~~(5)
 & <0.0001 & \footnotesize{$\pm 0.0001$} & ~~(6)
 & 0.2901 & \footnotesize{$\pm 0.0251$} & ~~(5)
 & 4.71 \\
    TabPC & 0.8554 & \footnotesize{$\pm 0.0328$} & ~~(1)
 & 0.5704 & \footnotesize{$\pm 0.0262$} & ~~(4)
 & 0.2617 & \footnotesize{$\pm 0.0052$} & ~~(4)
 & 0.7915 & \footnotesize{$\pm 0.0134$} & ~~(1)
 & 0.8602 & \footnotesize{$\pm 0.0142$} & ~~(1)
 & 0.1286 & \footnotesize{$\pm 0.0182$} & ~~(2)
 & 0.8936 & \footnotesize{$\pm 0.0100$} & ~~(1)
 & \textbf{2.00} \\
    \bottomrule[1.0pt]
\end{tabular}}
\end{table}

%% file: tables_new_protocol/alpha_precision_oc.tex
\begin{table}[htbp]
\centering
\caption{\textbf{\ours offers competitive performance on \alphaprecision compared to diffusion-based \sota}, a measure of sample fidelity, though its average rank is slightly lower than top-performing \tabdiff. However, the error bars for certain datasets like Adult and Beijing overlap, and hence ranking by mean does not tell the full story. Note that the following numbers are \textbf{not} backwards compatible with recent works like \tabsyn and \tabdiff due to the implementation being different; \cref{app:ap_br_discrepancy} has further details.}
\label{tab:alpha precision oc_updated}
\resizebox{\textwidth}{!}{\begin{tabular}{
l
r@{}l@{}r
r@{}l@{}r
r@{}l@{}r
r@{}l@{}r
r@{}l@{}r
r@{}l@{}r
r@{}l@{}r
r
}
    \toprule[0.8pt]
     \textbf{Method} & \multicolumn{3}{c}{\textbf{Adult}}
 & \multicolumn{3}{c}{\textbf{Beijing}}
 & \multicolumn{3}{c}{\textbf{Default}}
 & \multicolumn{3}{c}{\textbf{Diabetes}}
 & \multicolumn{3}{c}{\textbf{Magic}}
 & \multicolumn{3}{c}{\textbf{News}}
 & \multicolumn{3}{c}{\textbf{Shoppers}}
 & \textbf{Avg. Rank} \\
    \midrule 
    CTGAN & 0.8951 & \footnotesize{$\pm 0.0313$} & ~~(8)
 & 0.5669 & \footnotesize{$\pm 0.1178$} & ~~(9)
 & 0.9313 & \footnotesize{$\pm 0.0372$} & ~~(10)
 & 0.8381 & \footnotesize{$\pm 0.0976$} & ~~(6)
 & 0.8060 & \footnotesize{$\pm 0.0238$} & ~~(10)
 & 0.2107 & \footnotesize{$\pm 0.0023$} & ~~(3)
 & 0.7908 & \footnotesize{$\pm 0.0899$} & ~~(8)
 & 7.71 \\
    TVAE & 0.5219 & \footnotesize{$\pm 0.0182$} & ~~(11)
 & 0.9795 & \footnotesize{$\pm 0.0137$} & ~~(2)
 & 0.8898 & \footnotesize{$\pm 0.0145$} & ~~(11)
 & 0.7829 & \footnotesize{$\pm 0.0952$} & ~~(7)
 & 0.9580 & \footnotesize{$\pm 0.0350$} & ~~(5)
 & 0.1927 & \footnotesize{$\pm 0.0018$} & ~~(10)
 & 0.9220 & \footnotesize{$\pm 0.0139$} & ~~(4)
 & 7.14 \\
    GReaT & 0.9494 & \footnotesize{$\pm 0.0002$} & ~~(7)
 & \multicolumn{2}{c}{*} & ~~(11) & 0.9694 & \footnotesize{$\pm 0.0015$} & ~~(4)
 & \multicolumn{2}{c}{*} & ~~(10) & 0.9576 & \footnotesize{$\pm 0.0033$} & ~~(6)
 & \multicolumn{2}{c}{*} & ~~(11) & 0.7115 & \footnotesize{$\pm 0.0044$} & ~~(9)
 & 8.29 \\
    STaSy & 0.8573 & \footnotesize{$\pm 0.0824$} & ~~(9)
 & 0.9074 & \footnotesize{$\pm 0.0325$} & ~~(8)
 & 0.9489 & \footnotesize{$\pm 0.0358$} & ~~(7)
 & \multicolumn{2}{c}{<0.0001} & ~~(9) & 0.9353 & \footnotesize{$\pm 0.0547$} & ~~(8)
 & 0.1943 & \footnotesize{$\pm 0.0034$} & ~~(9)
 & 0.8010 & \footnotesize{$\pm 0.1559$} & ~~(7)
 & 8.14 \\
    CoDi & 0.8538 & \footnotesize{$\pm 0.0172$} & ~~(10)
 & 0.9664 & \footnotesize{$\pm 0.0227$} & ~~(3)
 & 0.9402 & \footnotesize{$\pm 0.0083$} & ~~(8)
 & 0.2804 & \footnotesize{$\pm{0.0000}$} & ~~(8) & 0.9890 & \footnotesize{$\pm 0.0055$} & ~~(2)
 & 0.2018 & \footnotesize{$\pm 0.0053$} & ~~(5)
 & 0.6690 & \footnotesize{$\pm 0.0133$} & ~~(10)
 & 6.57 \\
    TabSyn & 0.9948 & \footnotesize{$\pm 0.0027$} & ~~(2)
 & 0.9375 & \footnotesize{$\pm 0.0609$} & ~~(6)
 & 0.9747 & \footnotesize{$\pm 0.0033$} & ~~(3)
 & 0.9799 & \footnotesize{$\pm 0.0047$} & ~~(2)
 & 0.9863 & \footnotesize{$\pm 0.0047$} & ~~(3)
 & 0.1958 & \footnotesize{$\pm 0.0022$} & ~~(7)
 & 0.9886 & \footnotesize{$\pm 0.0049$} & ~~(1)
 & 3.43 \\
    TabDiff & 0.9955 & \footnotesize{$\pm 0.0011$} & ~~(1)
 & 0.9477 & \footnotesize{$\pm 0.0029$} & ~~(4)
 & 0.9891 & \footnotesize{$\pm 0.0027$} & ~~(1)
 & 0.9680 & \footnotesize{$\pm 0.0445$} & ~~(3)
 & 0.9820 & \footnotesize{$\pm 0.0052$} & ~~(4)
 & 0.2068 & \footnotesize{$\pm 0.0077$} & ~~(4)
 & 0.9649 & \footnotesize{$\pm 0.0050$} & ~~(3)
 & \textbf{2.86} \\
    TabPFN & 0.9781 & \footnotesize{$\pm 0.0031$} & ~~(6)
 & 0.9883 & \footnotesize{$\pm 0.0032$} & ~~(1)
 & 0.9682 & \footnotesize{$\pm 0.0028$} & ~~(6)
 & \multicolumn{2}{c}{*} & ~~(10) & 0.9941 & \footnotesize{$\pm 0.0026$} & ~~(1)
 & 0.1950 & \footnotesize{$\pm 0.0002$} & ~~(8)
 & 0.9168 & \footnotesize{$\pm 0.0040$} & ~~(5)
 & 5.29 \\
    \midrule
    FF & 0.9927 & \footnotesize{$\pm 0.0026$} & ~~(4)
 & 0.4089 & \footnotesize{$\pm 0.0021$} & ~~(10)
 & 0.9348 & \footnotesize{$\pm 0.0008$} & ~~(9)
 & 0.9333 & \footnotesize{$\pm 0.0029$} & ~~(5)
 & 0.4475 & \footnotesize{$\pm 0.0038$} & ~~(11)
 & 0.3023 & \footnotesize{$\pm 0.0018$} & ~~(1)
 & 0.5133 & \footnotesize{$\pm 0.0076$} & ~~(11)
 & 7.29 \\
    SM & 0.9931 & \footnotesize{$\pm 0.0016$} & ~~(3)
 & 0.9322 & \footnotesize{$\pm 0.0147$} & ~~(7)
 & 0.9687 & \footnotesize{$\pm 0.0064$} & ~~(5)
 & 0.9378 & \footnotesize{$\pm 0.0022$} & ~~(4)
 & 0.9181 & \footnotesize{$\pm 0.0062$} & ~~(9)
 & 0.2339 & \footnotesize{$\pm 0.0059$} & ~~(2)
 & 0.8142 & \footnotesize{$\pm 0.0305$} & ~~(6)
 & 5.14 \\
    TabPC & 0.9912 & \footnotesize{$\pm 0.0038$} & ~~(5)
 & 0.9419 & \footnotesize{$\pm 0.0060$} & ~~(5)
 & 0.9757 & \footnotesize{$\pm 0.0060$} & ~~(2)
 & 0.9833 & \footnotesize{$\pm 0.0075$} & ~~(1)
 & 0.9567 & \footnotesize{$\pm 0.0054$} & ~~(7)
 & 0.1959 & \footnotesize{$\pm 0.0003$} & ~~(6)
 & 0.9875 & \footnotesize{$\pm 0.0049$} & ~~(2)
 & 4.00 \\
    \bottomrule[1.0pt]
\end{tabular}}
\end{table}

%% file: tables_new_protocol/beta_recall_oc.tex
\begin{table}[htbp]
\centering
\caption{\textbf{\ours also offers competitive performance on \betarecall compared to \sota methods}, a measure of sample diversity. One interesting observation is that \tabpfn performs relatively higher than it does in other metrics here. As with \alphaprecision, error bars can often overlap. Note that the following numbers are \textbf{not} backwards compatible with recent works like \tabsyn and \tabdiff due to the implementation being different; \cref{app:ap_br_discrepancy} has further details.}
\label{tab:beta recall oc_updated}
\resizebox{\textwidth}{!}{\begin{tabular}{
l
r@{}l@{}r
r@{}l@{}r
r@{}l@{}r
r@{}l@{}r
r@{}l@{}r
r@{}l@{}r
r@{}l@{}r
r
}
    \toprule[0.8pt]
     \textbf{Method} & \multicolumn{3}{c}{\textbf{Adult}}
 & \multicolumn{3}{c}{\textbf{Beijing}}
 & \multicolumn{3}{c}{\textbf{Default}}
 & \multicolumn{3}{c}{\textbf{Diabetes}}
 & \multicolumn{3}{c}{\textbf{Magic}}
 & \multicolumn{3}{c}{\textbf{News}}
 & \multicolumn{3}{c}{\textbf{Shoppers}}
 & \textbf{Avg. Rank} \\
    \midrule 
    CTGAN & 0.4497 & \footnotesize{$\pm 0.0486$} & ~~(11)
 & 0.1670 & \footnotesize{$\pm 0.0551$} & ~~(9)
 & 0.4598 & \footnotesize{$\pm 0.0277$} & ~~(11)
 & 0.3055 & \footnotesize{$\pm 0.0107$} & ~~(6)
 & 0.3249 & \footnotesize{$\pm 0.0039$} & ~~(10)
 & 0.2077 & \footnotesize{$\pm 0.0001$} & ~~(5)
 & 0.2895 & \footnotesize{$\pm 0.0190$} & ~~(9)
 & 8.71 \\
    TVAE & 0.4547 & \footnotesize{$\pm 0.0136$} & ~~(10)
 & 0.2579 & \footnotesize{$\pm 0.0225$} & ~~(8)
 & 0.4696 & \footnotesize{$\pm 0.0114$} & ~~(8)
 & 0.1463 & \footnotesize{$\pm 0.0909$} & ~~(7)
 & 0.4556 & \footnotesize{$\pm 0.0096$} & ~~(7)
 & 0.2076 & \footnotesize{$\pm 0.0010$} & ~~(6)
 & 0.4103 & \footnotesize{$\pm 0.0096$} & ~~(7)
 & 7.57 \\
    GReaT & 0.5191 & \footnotesize{$\pm 0.0008$} & ~~(1)
 & \multicolumn{2}{c}{*} & ~~(11) & 0.4675 & \footnotesize{$\pm 0.0025$} & ~~(10)
 & \multicolumn{2}{c}{*} & ~~(10) & 0.4439 & \footnotesize{$\pm 0.0068$} & ~~(9)
 & \multicolumn{2}{c}{*} & ~~(11) & 0.4560 & \footnotesize{$\pm 0.0026$} & ~~(4)
 & 8.00 \\
    STaSy & 0.5020 & \footnotesize{$\pm 0.0102$} & ~~(5)
 & 0.3859 & \footnotesize{$\pm 0.0137$} & ~~(6)
 & 0.4924 & \footnotesize{$\pm 0.0048$} & ~~(5)
 & \multicolumn{2}{c}{<0.0001} & ~~(9) & 0.4540 & \footnotesize{$\pm 0.0269$} & ~~(8)
 & 0.2083 & \footnotesize{$\pm 0.0009$} & ~~(1)
 & 0.3752 & \footnotesize{$\pm 0.0583$} & ~~(8)
 & 6.00 \\
    CoDi & 0.5164 & \footnotesize{$\pm 0.0037$} & ~~(2)
 & 0.3696 & \footnotesize{$\pm 0.0178$} & ~~(7)
 & 0.4841 & \footnotesize{$\pm 0.0068$} & ~~(7)
 & 0.0388 & \footnotesize{$\pm{0.0000}$} & ~~(8) & 0.4821 & \footnotesize{$\pm 0.0091$} & ~~(5)
 & 0.2078 & \footnotesize{$\pm 0.0004$} & ~~(4)
 & 0.2158 & \footnotesize{$\pm 0.0332$} & ~~(11)
 & 6.29 \\
    TabSyn & 0.5003 & \footnotesize{$\pm 0.0041$} & ~~(7)
 & 0.4125 & \footnotesize{$\pm 0.0336$} & ~~(4)
 & 0.4940 & \footnotesize{$\pm 0.0050$} & ~~(3)
 & 0.4765 & \footnotesize{$\pm 0.0033$} & ~~(2)
 & 0.4873 & \footnotesize{$\pm 0.0048$} & ~~(2)
 & 0.2075 & \footnotesize{$\pm 0.0005$} & ~~(7)
 & 0.4952 & \footnotesize{$\pm 0.0059$} & ~~(3)
 & 4.00 \\
    TabDiff & 0.5031 & \footnotesize{$\pm 0.0026$} & ~~(4)
 & 0.4373 & \footnotesize{$\pm 0.0046$} & ~~(2)
 & 0.5018 & \footnotesize{$\pm 0.0015$} & ~~(1)
 & 0.4705 & \footnotesize{$\pm 0.0632$} & ~~(3)
 & 0.4873 & \footnotesize{$\pm 0.0025$} & ~~(2)
 & 0.2075 & \footnotesize{$\pm 0.0001$} & ~~(7)
 & 0.4998 & \footnotesize{$\pm 0.0090$} & ~~(2)
 & \textbf{3.00} \\
    TabPFN & 0.5079 & \footnotesize{$\pm 0.0027$} & ~~(3)
 & 0.5204 & \footnotesize{$\pm 0.0033$} & ~~(1)
 & 0.4953 & \footnotesize{$\pm 0.0034$} & ~~(2)
 & \multicolumn{2}{c}{*} & ~~(10) & 0.4965 & \footnotesize{$\pm 0.0016$} & ~~(1)
 & 0.2080 & \footnotesize{$\pm 0.0002$} & ~~(3)
 & 0.4337 & \footnotesize{$\pm 0.0098$} & ~~(5)
 & 3.57 \\
    \midrule
    FF & 0.4975 & \footnotesize{$\pm 0.0030$} & ~~(9)
 & 0.0961 & \footnotesize{$\pm 0.0030$} & ~~(10)
 & 0.4689 & \footnotesize{$\pm 0.0018$} & ~~(9)
 & 0.3670 & \footnotesize{$\pm 0.0015$} & ~~(5)
 & 0.1318 & \footnotesize{$\pm 0.0026$} & ~~(11)
 & 0.2046 & \footnotesize{$\pm 0.0001$} & ~~(10)
 & 0.2536 & \footnotesize{$\pm 0.0064$} & ~~(10)
 & 9.14 \\
    SM & 0.5018 & \footnotesize{$\pm 0.0042$} & ~~(6)
 & 0.4214 & \footnotesize{$\pm 0.0059$} & ~~(3)
 & 0.4925 & \footnotesize{$\pm 0.0029$} & ~~(4)
 & 0.4029 & \footnotesize{$\pm 0.0123$} & ~~(4)
 & 0.4616 & \footnotesize{$\pm 0.0065$} & ~~(6)
 & 0.2073 & \footnotesize{$\pm 0.0001$} & ~~(9)
 & 0.4235 & \footnotesize{$\pm 0.0138$} & ~~(6)
 & 5.43 \\
    TabPC & 0.4983 & \footnotesize{$\pm 0.0027$} & ~~(8)
 & 0.3886 & \footnotesize{$\pm 0.0030$} & ~~(5)
 & 0.4914 & \footnotesize{$\pm 0.0045$} & ~~(6)
 & 0.4804 & \footnotesize{$\pm 0.0027$} & ~~(1)
 & 0.4839 & \footnotesize{$\pm 0.0031$} & ~~(4)
 & 0.2081 & \footnotesize{$\pm 0.0001$} & ~~(2)
 & 0.5049 & \footnotesize{$\pm 0.0035$} & ~~(1)
 & 3.86 \\
    \bottomrule[1.0pt]
\end{tabular}}
\end{table}

%% file: tables_new_protocol/mle.tex
\begin{table}[htbp]
\centering
\caption{\textbf{\ours generates synthetic data which is generally competitive with \sota for training new machine learning models} (in terms of \textbf{\textit{machine learning efficacy; MLE}}). Values for Beijing and News are root mean squared errors (RMSEs), whereas values for other datasets are classifier area under receiver operating characteristic curves (AUROCs). Metric direction of improvement is indicated by the arrow next to the dataset name. Values should be compared to models which used the real training data, which can be found in the first row.}
\label{tab:mle_updated}
\resizebox{\textwidth}{!}{\begin{tabular}{
l
r@{}l
r@{}l
r@{}l
r@{}l
r@{}l
r@{}l
r@{}l
}
    \toprule[0.8pt]
     \textbf{Method}
     & \multicolumn{2}{c}{\textbf{Adult $(\uparrow)$}}
 & \multicolumn{2}{c}{\textbf{Beijing $(\downarrow)$}}
 & \multicolumn{2}{c}{\textbf{Default $(\uparrow)$}}
 & \multicolumn{2}{c}{\textbf{Diabetes $(\uparrow)$}}
 & \multicolumn{2}{c}{\textbf{Magic $(\uparrow)$}}
 & \multicolumn{2}{c}{\textbf{News $(\downarrow)$}}
 & \multicolumn{2}{c}{\textbf{Shoppers $(\uparrow)$}}
 \\
    \midrule 
      \textit{Real Data} & 0.9271 & \footnotesize{$\pm 0.0006$}
 & 0.4367 & \footnotesize{$\pm 0.0116$}
 & 0.7697 & \footnotesize{$\pm 0.0008$}
 & 0.7040 & \footnotesize{$\pm 0.0016$}
 & 0.9478 & \footnotesize{$\pm 0.0024$}
 & 0.8403 & \footnotesize{$\pm 0.0041$}
 & 0.9254 & \footnotesize{$\pm 0.0010$} \\
    \midrule 
    CTGAN & 0.8623 & \footnotesize{$\pm 0.0078$}
 & 1.0337 & \footnotesize{$\pm 0.1615$}
 & 0.7268 & \footnotesize{$\pm 0.0158$}
 & 0.5908 & \footnotesize{$\pm 0.0094$}
 & 0.8823 & \footnotesize{$\pm 0.0097$}
 & 0.8603 & \footnotesize{$\pm 0.0179$}
 & 0.8501 & \footnotesize{$\pm 0.0168$}
 \\
    TVAE & 0.8710 & \footnotesize{$\pm 0.0152$}
 & 1.0378 & \footnotesize{$\pm 0.0376$}
 & 0.7467 & \footnotesize{$\pm 0.0043$}
 & 0.5798 & \footnotesize{$\pm 0.0221$}
 & 0.9181 & \footnotesize{$\pm 0.0067$}
 & 0.9831 & \footnotesize{$\pm 0.0309$}
 & 0.9032 & \footnotesize{$\pm 0.0095$}
 \\
    GReaT & 0.8341 & \footnotesize{$\pm 0.0051$}
 & 0.6216 & \footnotesize{$\pm 0.0142$}
 & 0.7584 & \footnotesize{$\pm 0.0055$}
 & \multicolumn{2}{c}{*} & 0.9114 & \footnotesize{$\pm 0.0053$}
 & \multicolumn{2}{c}{*} & 0.9067 & \footnotesize{$\pm 0.0039$}
 \\
    STaSy & 0.9058 & \footnotesize{$\pm 0.0010$}
 & 0.6761 & \footnotesize{$\pm 0.0417$}
 & 0.7534 & \footnotesize{$\pm 0.0075$}
 & \multicolumn{2}{c}{*} & 0.9320 & \footnotesize{$\pm 0.0038$}
 & 0.9222 & \footnotesize{$\pm 0.1028$}
 & 0.9103 & \footnotesize{$\pm 0.0057$}
 \\
    CoDi & 0.8048 & \footnotesize{$\pm 0.0353$}
 & 0.8082 & \footnotesize{$\pm 0.0580$}
 & 0.5030 & \footnotesize{$\pm 0.0107$}
 & 0.4778 & \footnotesize{$\pm{0.0000}$} & 0.9318 & \footnotesize{$\pm 0.0048$}
 & 2.1437 & \footnotesize{$\pm 0.7155$}
 & 0.8640 & \footnotesize{$\pm 0.0214$}
 \\
    TabSyn & 0.9104 & \footnotesize{$\pm 0.0014$}
 & 0.6260 & \footnotesize{$\pm 0.0496$}
 & 0.7590 & \footnotesize{$\pm 0.0058$}
 & 0.6862 & \footnotesize{$\pm 0.0032$}
 & 0.9370 & \footnotesize{$\pm 0.0022$}
 & \textbf{0.8577} & \footnotesize{$\pm 0.0312$}
 & \textbf{0.9144} & \footnotesize{$\pm 0.0071$}
 \\
    TabDiff & 0.9130 & \footnotesize{$\pm 0.0013$}
 & 0.5642 & \footnotesize{$\pm 0.0123$}
 & \textbf{0.7625} & \footnotesize{$\pm 0.0064$}
 & 0.6806 & \footnotesize{$\pm 0.0242$}
 & 0.9358 & \footnotesize{$\pm 0.0050$}
 & 0.8784 & \footnotesize{$\pm 0.0068$}
 & \textbf{0.9170} & \footnotesize{$\pm 0.0043$}
 \\
    TabPFN & 0.9121 & \footnotesize{$\pm 0.0010$}
 & \textbf{0.4780} & \footnotesize{$\pm 0.0150$}
 & \textbf{0.7626} & \footnotesize{$\pm 0.0030$}
 & \multicolumn{2}{c}{*} & \textbf{0.9413} & \footnotesize{$\pm 0.0028$}
 & \textbf{0.8524} & \footnotesize{$\pm 0.0340$}
 & \textbf{0.9155} & \footnotesize{$\pm 0.0115$}
 \\
    \midrule
    FF & 0.5159 & \footnotesize{$\pm 0.0237$}
 & 1.1062 & \footnotesize{$\pm 0.0385$}
 & 0.4895 & \footnotesize{$\pm 0.0159$}
 & 0.4960 & \footnotesize{$\pm 0.0049$}
 & 0.5124 & \footnotesize{$\pm 0.0450$}
 & 0.9225 & \footnotesize{$\pm 0.0146$}
 & 0.4487 & \footnotesize{$\pm 0.0611$}
 \\
    SM & 0.9022 & \footnotesize{$\pm 0.0055$}
 & 0.5584 & \footnotesize{$\pm 0.0133$}
 & 0.7321 & \footnotesize{$\pm 0.0027$}
 & 0.5741 & \footnotesize{$\pm 0.0103$}
 & 0.9284 & \footnotesize{$\pm 0.0043$}
 & 0.9097 & \footnotesize{$\pm 0.0129$}
 & 0.8626 & \footnotesize{$\pm 0.0136$}
 \\
    TabPC & \textbf{0.9162} & \footnotesize{$\pm 0.0026$}
 & 0.6558 & \footnotesize{$\pm 0.0278$}
 & 0.7442 & \footnotesize{$\pm 0.0045$}
 & \textbf{0.6873} & \footnotesize{$\pm 0.0037$}
 & 0.9324 & \footnotesize{$\pm 0.0027$}
 & 0.8864 & \footnotesize{$\pm 0.0099$}
 & 0.8970 & \footnotesize{$\pm 0.0038$}
 \\
    \bottomrule[1.0pt]
\end{tabular}}
\end{table}

%% file: tables_new_protocol/training_time.tex
\begin{table}[htbp]
\centering
\caption{\textbf{\ours offers competitive performance with \textit{training times} one or two orders of magnitude faster than \sota.} Even early, less performant models treated as a simple baseline like \ctgan and \tvae take longer to train than all circuit-based methods while achieving significantly worse results. {No training times are recorded for \tabpfn as it is a pre-trained foundation model.}}
\label{tab:training time_updated}
\resizebox{\textwidth}{!}{\begin{tabular}{
l
r@{}l@{}r
r@{}l@{}r
r@{}l@{}r
r@{}l@{}r
r@{}l@{}r
r@{}l@{}r
r@{}l@{}r
r
}
    \toprule[0.8pt]
     \textbf{Method} & \multicolumn{3}{c}{\textbf{Adult}}
 & \multicolumn{3}{c}{\textbf{Beijing}}
 & \multicolumn{3}{c}{\textbf{Default}}
 & \multicolumn{3}{c}{\textbf{Diabetes}}
 & \multicolumn{3}{c}{\textbf{Magic}}
 & \multicolumn{3}{c}{\textbf{News}}
 & \multicolumn{3}{c}{\textbf{Shoppers}}
 & \textbf{Avg. Rank} \\
    \midrule 
    CTGAN & 3798.4 & \footnotesize{$\pm 173.8$} & ~~(6)
 & 4126.4 & \footnotesize{$\pm 315.6$} & ~~(6)
 & 3836.4 & \footnotesize{$\pm 235.4$} & ~~(6)
 & 13872.4 & \footnotesize{$\pm 621.6$} & ~~(6)
 & 1797.2 & \footnotesize{$\pm 160.7$} & ~~(5)
 & 8221.2 & \footnotesize{$\pm 761.1$} & ~~(7)
 & 1453.0 & \footnotesize{$\pm 68.6$} & ~~(5)
 & 5.86 \\
    TVAE & 1309.6 & \footnotesize{$\pm 102.3$} & ~~(4)
 & 1482.4 & \footnotesize{$\pm 68.8$} & ~~(4)
 & 1705.0 & \footnotesize{$\pm 102.2$} & ~~(4)
 & 3863.8 & \footnotesize{$\pm 194.3$} & ~~(4)
 & 834.6 & \footnotesize{$\pm 85.1$} & ~~(4)
 & 4588.0 & \footnotesize{$\pm 125.7$} & ~~(5)
 & 579.2 & \footnotesize{$\pm 72.7$} & ~~(4)
 & 4.14 \\
    GReaT & 16714.2 & \footnotesize{$\pm 573.1$} & ~~(9)
 & 13739.2 & \footnotesize{$\pm 371.1$} & ~~(8)
 & 26138.4 & \footnotesize{$\pm 691.2$} & ~~(10)
 & 20472.0 & \footnotesize{$\pm{0.0000}$} & ~~(7) & 6808.0 & \footnotesize{$\pm 244.4$} & ~~(8)
 & 63508.5 & \footnotesize{$\pm 39867.5$} & ~~(10)
 & 5238.4 & \footnotesize{$\pm 2630.3$} & ~~(8)
 & 8.57 \\
    STaSy & 6526.5 & \footnotesize{$\pm 263.0$} & ~~(7)
 & 7034.0 & \footnotesize{$\pm 299.6$} & ~~(7)
 & 6300.4 & \footnotesize{$\pm 758.6$} & ~~(7)
 & 21701.6 & \footnotesize{$\pm 605.7$} & ~~(8)
 & 4949.2 & \footnotesize{$\pm 355.9$} & ~~(7)
 & 7079.8 & \footnotesize{$\pm 693.4$} & ~~(6)
 & 4385.0 & \footnotesize{$\pm 402.8$} & ~~(7)
 & 7.00 \\
    CoDi & 25186.2 & \footnotesize{$\pm 593.2$} & ~~(10)
 & 27023.8 & \footnotesize{$\pm 396.6$} & ~~(10)
 & 20233.8 & \footnotesize{$\pm 314.5$} & ~~(9)
 & 1027557.0 & \footnotesize{$\pm{0.0000}$} & ~~(10) & 9617.8 & \footnotesize{$\pm 112.0$} & ~~(9)
 & 20495.5 & \footnotesize{$\pm 394.9$} & ~~(9)
 & 8238.2 & \footnotesize{$\pm 242.1$} & ~~(9)
 & 9.43 \\
    TabSyn & 3082.5 & \footnotesize{$\pm 516.0$} & ~~(5)
 & 3321.2 & \footnotesize{$\pm 327.2$} & ~~(5)
 & 3438.5 & \footnotesize{$\pm 578.1$} & ~~(5)
 & 6506.2 & \footnotesize{$\pm 389.9$} & ~~(5)
 & 2270.8 & \footnotesize{$\pm 369.8$} & ~~(6)
 & 3746.5 & \footnotesize{$\pm 395.9$} & ~~(4)
 & 2441.2 & \footnotesize{$\pm 192.9$} & ~~(6)
 & 5.14 \\
    TabDiff & 11604.6 & \footnotesize{$\pm 2638.3$} & ~~(8)
 & 14418.4 & \footnotesize{$\pm 3527.0$} & ~~(9)
 & 13899.0 & \footnotesize{$\pm 2724.0$} & ~~(8)
 & 32069.2 & \footnotesize{$\pm 2849.5$} & ~~(9)
 & 11643.4 & \footnotesize{$\pm 3487.7$} & ~~(10)
 & 16040.0 & \footnotesize{$\pm 2865.3$} & ~~(8)
 & 10320.8 & \footnotesize{$\pm 2256.3$} & ~~(10)
 & 8.86 \\
    \midrule
    FF & 1.1 & \footnotesize{$\pm 0.3$} & ~~(1)
 & 1.7 & \footnotesize{$\pm 0.4$} & ~~(1)
 & 2.5 & \footnotesize{$\pm 0.3$} & ~~(1)
 & 2.4 & \footnotesize{$\pm 0.4$} & ~~(1)
 & 0.5 & \footnotesize{$\pm 0.0$} & ~~(1)
 & 7.1 & \footnotesize{$\pm 0.8$} & ~~(1)
 & 0.5 & \footnotesize{$\pm 0.0$} & ~~(1)
 & \textbf{1.00} \\
    SM & 59.7 & \footnotesize{$\pm 6.9$} & ~~(2)
 & 106.3 & \footnotesize{$\pm 5.8$} & ~~(2)
 & 163.9 & \footnotesize{$\pm 4.3$} & ~~(3)
 & 293.4 & \footnotesize{$\pm 7.8$} & ~~(3)
 & 15.4 & \footnotesize{$\pm 0.7$} & ~~(2)
 & 268.4 & \footnotesize{$\pm 41.3$} & ~~(2)
 & 47.6 & \footnotesize{$\pm 0.8$} & ~~(2)
 & 2.29 \\
    TabPC & 272.5 & \footnotesize{$\pm 18.8$} & ~~(3)
 & 581.6 & \footnotesize{$\pm 40.3$} & ~~(3)
 & 78.3 & \footnotesize{$\pm 16.2$} & ~~(2)
 & 247.9 & \footnotesize{$\pm 22.3$} & ~~(2)
 & 38.9 & \footnotesize{$\pm 1.9$} & ~~(3)
 & 404.0 & \footnotesize{$\pm 232.2$} & ~~(3)
 & 50.6 & \footnotesize{$\pm 0.8$} & ~~(3)
 & 2.71 \\
    \bottomrule[1.0pt]
\end{tabular}}
\end{table}

%% file: tables_new_protocol/sampling_time.tex
\begin{table}[htbp]
\centering
\caption{{\textbf{\ours is the best ranked \sota method in terms of dataset \textit{sampling time} (i.e. time taken to generate one dataset),} beating \tabsyn and \tabdiff in six out of the seven datasets. In each case, the generated dataset is of the same size as the training dataset. Note the extremely high sampling time for \tabpfn, which must autoregressively sample features and average over multiple feature ordering permutations for robustness.}
}
\label{tab:sampling time_updated}
\resizebox{\textwidth}{!}{\begin{tabular}{
l
r@{}l@{}r
r@{}l@{}r
r@{}l@{}r
r@{}l@{}r
r@{}l@{}r
r@{}l@{}r
r@{}l@{}r
r
}
    \toprule[0.8pt]
     \textbf{Method} & \multicolumn{3}{c}{\textbf{Adult}}
 & \multicolumn{3}{c}{\textbf{Beijing}}
 & \multicolumn{3}{c}{\textbf{Default}}
 & \multicolumn{3}{c}{\textbf{Diabetes}}
 & \multicolumn{3}{c}{\textbf{Magic}}
 & \multicolumn{3}{c}{\textbf{News}}
 & \multicolumn{3}{c}{\textbf{Shoppers}}
 & \textbf{Avg. Rank} \\
    \midrule 
    CTGAN & 1.9 & \footnotesize{$\pm 0.8$} & ~~(4)
 & 2.4 & \footnotesize{$\pm 0.7$} & ~~(4)
 & 2.4 & \footnotesize{$\pm 0.3$} & ~~(5)
 & 6.3 & \footnotesize{$\pm 0.4$} & ~~(4)
 & 1.1 & \footnotesize{$\pm 0.2$} & ~~(5)
 & 4.1 & \footnotesize{$\pm 0.2$} & ~~(3)
 & 1.1 & \footnotesize{$\pm 0.3$} & ~~(5)
 & 4.29 \\
    TVAE & 1.2 & \footnotesize{$\pm 0.4$} & ~~(2)
 & 1.9 & \footnotesize{$\pm 0.8$} & ~~(3)
 & 1.9 & \footnotesize{$\pm 0.4$} & ~~(4)
 & 6.1 & \footnotesize{$\pm 2.8$} & ~~(3)
 & 0.7 & \footnotesize{$\pm 0.2$} & ~~(2)
 & 4.0 & \footnotesize{$\pm 2.1$} & ~~(2)
 & 0.6 & \footnotesize{$\pm 0.1$} & ~~(2)
 & 2.57 \\
    GReaT & 354.0 & \footnotesize{$\pm 21.5$} & ~~(10)
 & 232.4 & \footnotesize{$\pm 1.5$} & ~~(10)
 & 534.2 & \footnotesize{$\pm 5.8$} & ~~(10)
 & \multicolumn{2}{c}{*} & ~~(10) & 104.0 & \footnotesize{$\pm 2.3$} & ~~(10)
 & \multicolumn{2}{c}{*} & ~~(11) & 156.2 & \footnotesize{$\pm 104.1$} & ~~(10)
 & 10.14 \\
    STaSy & 48.9 & \footnotesize{$\pm 75.9$} & ~~(9)
 & 13.4 & \footnotesize{$\pm 0.5$} & ~~(9)
 & 16.9 & \footnotesize{$\pm 11.1$} & ~~(9)
 & 5081.2 & \footnotesize{$\pm{0.0000}$} & ~~(9) & 5.8 & \footnotesize{$\pm 3.6$} & ~~(9)
 & 18.9 & \footnotesize{$\pm 8.2$} & ~~(9)
 & 7.0 & \footnotesize{$\pm 3.4$} & ~~(9)
 & 9.00 \\
    CoDi & 8.8 & \footnotesize{$\pm 0.2$} & ~~(7)
 & 9.2 & \footnotesize{$\pm 0.3$} & ~~(8)
 & 6.9 & \footnotesize{$\pm 0.2$} & ~~(7)
 & 635.4 & \footnotesize{$\pm{0.0000}$} & ~~(8) & 3.5 & \footnotesize{$\pm 0.2$} & ~~(7)
 & 8.3 & \footnotesize{$\pm 0.3$} & ~~(6)
 & 3.0 & \footnotesize{$\pm 0.2$} & ~~(7)
 & 7.14 \\
    TabSyn & 5.0 & \footnotesize{$\pm 0.2$} & ~~(5)
 & 5.4 & \footnotesize{$\pm 0.4$} & ~~(6)
 & 5.2 & \footnotesize{$\pm 0.6$} & ~~(6)
 & 15.6 & \footnotesize{$\pm 1.7$} & ~~(6)
 & 2.6 & \footnotesize{$\pm 0.2$} & ~~(6)
 & 11.1 & \footnotesize{$\pm 1.7$} & ~~(7)
 & 2.0 & \footnotesize{$\pm 0.2$} & ~~(6)
 & 6.00 \\
    TabDiff & 10.8 & \footnotesize{$\pm 0.5$} & ~~(8)
 & 9.0 & \footnotesize{$\pm 0.4$} & ~~(7)
 & 9.6 & \footnotesize{$\pm 0.3$} & ~~(8)
 & 176.5 & \footnotesize{$\pm 1.4$} & ~~(7)
 & 4.4 & \footnotesize{$\pm 0.4$} & ~~(8)
 & 16.2 & \footnotesize{$\pm 0.4$} & ~~(8)
 & 6.0 & \footnotesize{$\pm 0.5$} & ~~(8)
 & 7.71 \\
    TabPFN & 571.3 & \footnotesize{$\pm 1.6$} & ~~(11)
 & 623.6 & \footnotesize{$\pm 9.3$} & ~~(11)
 & 1213.1 & \footnotesize{$\pm 37.1$} & ~~(11)
 & \multicolumn{2}{c}{*} & ~~(10) & 343.4 & \footnotesize{$\pm 16.8$} & ~~(11)
 & 3978.5 & \footnotesize{$\pm 60.9$} & ~~(10)
 & 254.1 & \footnotesize{$\pm 3.1$} & ~~(11)
 & 10.71 \\
    \midrule
    FF & 0.5 & \footnotesize{$\pm 0.2$} & ~~(1)
 & 0.6 & \footnotesize{$\pm 0.1$} & ~~(1)
 & 0.9 & \footnotesize{$\pm 0.1$} & ~~(1)
 & 0.8 & \footnotesize{$\pm 0.1$} & ~~(1)
 & 0.1 & \footnotesize{$\pm 0.0$} & ~~(1)
 & 2.5 & \footnotesize{$\pm 0.2$} & ~~(1)
 & 0.2 & \footnotesize{$\pm 0.0$} & ~~(1)
 & \textbf{1.00} \\
    SM & 1.3 & \footnotesize{$\pm 0.0$} & ~~(3)
 & 1.4 & \footnotesize{$\pm 0.1$} & ~~(2)
 & 1.8 & \footnotesize{$\pm 0.0$} & ~~(3)
 & 3.1 & \footnotesize{$\pm 0.0$} & ~~(2)
 & 1.0 & \footnotesize{$\pm 0.0$} & ~~(4)
 & 4.4 & \footnotesize{$\pm 0.1$} & ~~(4)
 & 1.0 & \footnotesize{$\pm 0.0$} & ~~(3)
 & 3.00 \\
    TabPC & 5.9 & \footnotesize{$\pm 0.2$} & ~~(6)
 & 4.3 & \footnotesize{$\pm 0.0$} & ~~(5)
 & 1.5 & \footnotesize{$\pm 0.0$} & ~~(2)
 & 7.7 & \footnotesize{$\pm 0.0$} & ~~(5)
 & 0.7 & \footnotesize{$\pm 0.0$} & ~~(3)
 & 6.6 & \footnotesize{$\pm 0.1$} & ~~(5)
 & 1.1 & \footnotesize{$\pm 0.0$} & ~~(4)
 & 4.29 \\
    \bottomrule[1.0pt]
\end{tabular}}
\end{table}

%% file: tables_new_protocol/num_parameters.tex
\begin{table}[htbp]
\centering
\caption{Parameter counts for all methods and datasets. For \ours, our choice of overparameterisation yields PCs that can have even an order of magnitude higher number of parameters compared to other DGMs. 
This observation is explained by PCs' tractability: since the only non-linearities are in the inputs, to attain similar expressivity to non-tractable DGMs, PCs must compensate by having more parameters. This does not come at the cost of an increased training time. {\tabpfn is omitted since it is a pre-trained foundation model.}
}
\label{tab:num parameters_updated}
\begin{tabular}{
l
r@{}l@{}l
r@{}l@{}l
r@{}l@{}l
r@{}l@{}l
r@{}l@{}l
r@{}l@{}l
r@{}l@{}l
}
    \toprule[0.8pt]
     \textbf{Method} & \multicolumn{3}{c}{\textbf{Adult}}
 & \multicolumn{3}{c}{\textbf{Beijing}}
 & \multicolumn{3}{c}{\textbf{Default}}
 & \multicolumn{3}{c}{\textbf{Diabetes}}
 & \multicolumn{3}{c}{\textbf{Magic}}
 & \multicolumn{3}{c}{\textbf{News}}
 & \multicolumn{3}{c}{\textbf{Shoppers}}
 \\
    \midrule 
    CTGAN & $9.60$ & $\times$ & $10^{6}$
 & $9.58$ & $\times$ & $10^{6}$
 & $9.66$ & $\times$ & $10^{6}$
 & $1.18$ & $\times$ & $10^{7}$
 & $9.56$ & $\times$ & $10^{6}$
 & $9.89$ & $\times$ & $10^{6}$
 & $9.60$ & $\times$ & $10^{6}$
 \\
    TVAE & $1.07$ & $\times$ & $10^{7}$
 & $1.06$ & $\times$ & $10^{7}$
 & $1.07$ & $\times$ & $10^{7}$
 & $1.28$ & $\times$ & $10^{7}$
 & $1.06$ & $\times$ & $10^{7}$
 & $1.09$ & $\times$ & $10^{7}$
 & $1.07$ & $\times$ & $10^{7}$
 \\
    GReaT & $1.21$ & $\times$ & $10^{8}$
 & $1.21$ & $\times$ & $10^{8}$
 & $1.21$ & $\times$ & $10^{8}$
 & \multicolumn{3}{c}{*} & $1.21$ & $\times$ & $10^{8}$
 & \multicolumn{3}{c}{*} & $1.21$ & $\times$ & $10^{8}$
 \\
    STaSy & $4.26$ & $\times$ & $10^{7}$
 & $4.15$ & $\times$ & $10^{7}$
 & $4.19$ & $\times$ & $10^{7}$
 & $1.49$ & $\times$ & $10^{8}$
 & $3.85$ & $\times$ & $10^{7}$
 & $4.05$ & $\times$ & $10^{7}$
 & $4.12$ & $\times$ & $10^{7}$
 \\
    CoDi & $1.19$ & $\times$ & $10^{7}$
 & $1.19$ & $\times$ & $10^{7}$
 & $1.19$ & $\times$ & $10^{7}$
 & $1.19$ & $\times$ & $10^{7}$
 & $1.19$ & $\times$ & $10^{7}$
 & $1.19$ & $\times$ & $10^{7}$
 & $1.19$ & $\times$ & $10^{7}$
 \\
    TabSyn & $1.06$ & $\times$ & $10^{7}$
 & $1.06$ & $\times$ & $10^{7}$
 & $1.07$ & $\times$ & $10^{7}$
 & $1.08$ & $\times$ & $10^{7}$
 & $1.06$ & $\times$ & $10^{7}$
 & $1.09$ & $\times$ & $10^{7}$
 & $1.07$ & $\times$ & $10^{7}$
 \\
    TabDiff & $2.12$ & $\times$ & $10^{7}$
 & $2.12$ & $\times$ & $10^{7}$
 & $2.14$ & $\times$ & $10^{7}$
 & $2.16$ & $\times$ & $10^{7}$
 & $2.12$ & $\times$ & $10^{7}$
 & $2.18$ & $\times$ & $10^{7}$
 & $2.13$ & $\times$ & $10^{7}$
 \\
    \midrule
    FF & $1.34$ & $\times$ & $10^{2}$
 & $1.53$ & $\times$ & $10^{2}$
 & $1.19$ & $\times$ & $10^{2}$
 & $2.35$ & $\times$ & $10^{3}$
 & $2.40$ & $\times$ & $10^{1}$
 & $3.60$ & $\times$ & $10^{2}$
 & $1.45$ & $\times$ & $10^{2}$
 \\
    SM & $2.70$ & $\times$ & $10^{6}$
 & $7.70$ & $\times$ & $10^{6}$
 & $2.40$ & $\times$ & $10^{6}$
 & $4.69$ & $\times$ & $10^{7}$
 & $2.50$ & $\times$ & $10^{5}$
 & $3.61$ & $\times$ & $10^{6}$
 & $2.92$ & $\times$ & $10^{6}$
 \\
    TabPC & $4.54$ & $\times$ & $10^{8}$
 & $3.19$ & $\times$ & $10^{8}$
 & $1.29$ & $\times$ & $10^{7}$
 & $2.31$ & $\times$ & $10^{8}$
 & $6.30$ & $\times$ & $10^{7}$
 & $1.23$ & $\times$ & $10^{8}$
 & $1.51$ & $\times$ & $10^{8}$
 \\
    \bottomrule[1.0pt]
\end{tabular}
\end{table}

%% file: tables_new_protocol/dcr.tex
\begin{table}[htbp]
\centering
\caption{According to DCR, \ours offers a similar privacy risk as \sota diffusion-based models. However, we recall the criticisms of \citealt{yao_dcr_2025} that DCR is a ``flawed by design'' metric. These numbers are reported for reference to compare with older works. For DCR, values closer to $0.5$ are preferable.}
\label{tab:dcr_updated}
\resizebox{\textwidth}{!}{\begin{tabular}{
l
r@{}l
r@{}l
r@{}l
r@{}l
r@{}l
r@{}l
r@{}l
}
    \toprule[0.8pt]
     \textbf{Method} & \multicolumn{2}{c}{\textbf{Adult}}
 & \multicolumn{2}{c}{\textbf{Beijing}}
 & \multicolumn{2}{c}{\textbf{Default}}
 & \multicolumn{2}{c}{\textbf{Diabetes}}
 & \multicolumn{2}{c}{\textbf{Magic}}
 & \multicolumn{2}{c}{\textbf{News}}
 & \multicolumn{2}{c}{\textbf{Shoppers}}
 \\
    \midrule 
    CTGAN & 0.4838 & \footnotesize{$\pm 0.0184$}
 & 0.4969 & \footnotesize{$\pm 0.0149$}
 & 0.4970 & \footnotesize{$\pm 0.0047$}
 & 0.5041 & \footnotesize{$\pm 0.0047$}
 & 0.4972 & \footnotesize{$\pm 0.0060$}
 & 0.4944 & \footnotesize{$\pm 0.0014$}
 & 0.4934 & \footnotesize{$\pm 0.0562$}
 \\
    TVAE & 0.5027 & \footnotesize{$\pm 0.0119$}
 & 0.4892 & \footnotesize{$\pm 0.0145$}
 & 0.5024 & \footnotesize{$\pm 0.0046$}
 & 0.3618 & \footnotesize{$\pm 0.2028$}
 & 0.4958 & \footnotesize{$\pm 0.0021$}
 & 0.5104 & \footnotesize{$\pm 0.0069$}
 & 0.4780 & \footnotesize{$\pm 0.0304$}
 \\
    GReaT & 0.5194 & \footnotesize{$\pm 0.0042$}
 & 0.5124 & \footnotesize{$\pm 0.0040$}
 & 0.4877 & \footnotesize{$\pm 0.0028$}
 & \multicolumn{2}{c}{*} & 0.5131 & \footnotesize{$\pm 0.0045$}
 & \multicolumn{2}{c}{*} & 0.4934 & \footnotesize{$\pm 0.0046$}
 \\
    STaSy & 0.5020 & \footnotesize{$\pm 0.0052$}
 & 0.5047 & \footnotesize{$\pm 0.0030$}
 & 0.5061 & \footnotesize{$\pm 0.0048$}
 & 0.9974 & \footnotesize{$\pm{0.0000}$} & 0.5103 & \footnotesize{$\pm 0.0053$}
 & 0.5029 & \footnotesize{$\pm 0.0104$}
 & 0.4866 & \footnotesize{$\pm 0.0329$}
 \\
    CoDi & 0.5021 & \footnotesize{$\pm 0.0055$}
 & 0.5057 & \footnotesize{$\pm 0.0061$}
 & 0.5053 & \footnotesize{$\pm 0.0066$}
 & 0.5048 & \footnotesize{$\pm{0.0000}$} & 0.5180 & \footnotesize{$\pm 0.0035$}
 & 0.4976 & \footnotesize{$\pm 0.0021$}
 & 0.5057 & \footnotesize{$\pm 0.0060$}
 \\
    TabSyn & 0.5069 & \footnotesize{$\pm 0.0012$}
 & 0.5022 & \footnotesize{$\pm 0.0016$}
 & 0.5084 & \footnotesize{$\pm 0.0020$}
 & 0.5071 & \footnotesize{$\pm 0.0018$}
 & 0.5045 & \footnotesize{$\pm 0.0029$}
 & 0.4953 & \footnotesize{$\pm 0.0021$}
 & 0.5069 & \footnotesize{$\pm 0.0044$}
 \\
    TabDiff & 0.5488 & \footnotesize{$\pm 0.0071$}
 & 0.5139 & \footnotesize{$\pm 0.0025$}
 & 0.5221 & \footnotesize{$\pm 0.0021$}
 & 0.5181 & \footnotesize{$\pm 0.0079$}
 & 0.5011 & \footnotesize{$\pm 0.0032$}
 & 0.5091 & \footnotesize{$\pm 0.0026$}
 & 0.5120 & \footnotesize{$\pm 0.0028$}
 \\
    \midrule
    FF & 0.5034 & \footnotesize{$\pm 0.0011$}
 & 0.4978 & \footnotesize{$\pm 0.0011$}
 & 0.5127 & \footnotesize{$\pm 0.0018$}
 & 0.5057 & \footnotesize{$\pm 0.0028$}
 & 0.4876 & \footnotesize{$\pm 0.0024$}
 & 0.4852 & \footnotesize{$\pm 0.0024$}
 & 0.5021 & \footnotesize{$\pm 0.0043$}
 \\
    SM & 0.5498 & \footnotesize{$\pm 0.0014$}
 & 0.5545 & \footnotesize{$\pm 0.0041$}
 & 0.5133 & \footnotesize{$\pm 0.0025$}
 & 0.5067 & \footnotesize{$\pm 0.0022$}
 & 0.5348 & \footnotesize{$\pm 0.0023$}
 & 0.4942 & \footnotesize{$\pm 0.0025$}
 & 0.5091 & \footnotesize{$\pm 0.0048$}
 \\
    TabPC & 0.5431 & \footnotesize{$\pm 0.0047$}
 & 0.5162 & \footnotesize{$\pm 0.0020$}
 & 0.5072 & \footnotesize{$\pm 0.0031$}
 & 0.5208 & \footnotesize{$\pm 0.0025$}
 & 0.5295 & \footnotesize{$\pm 0.0028$}
 & 0.4943 & \footnotesize{$\pm 0.0025$}
 & 0.5128 & \footnotesize{$\pm 0.0024$}
 \\
    \bottomrule[1.0pt]
\end{tabular}}
\end{table}

%% file: tables_new_protocol/dcr002.tex
\begin{table}[htbp]
\centering
\caption{According to DCR-002, \ours may offer a similar risk of privacy leakage as \sota diffusion-based models. For DCR-002, numbers equal to or lower than $2$ may suggest a lower risk of privacy leakage.}
\label{tab:dcr fraction<0.02 test quantile_updated}
\resizebox{\textwidth}{!}{\begin{tabular}{
l
r@{}l
r@{}l
r@{}l
r@{}l
r@{}l
r@{}l
r@{}l
}
    \toprule[0.8pt]
     \textbf{Method} & \multicolumn{2}{c}{\textbf{Adult}}
 & \multicolumn{2}{c}{\textbf{Beijing}}
 & \multicolumn{2}{c}{\textbf{Default}}
 & \multicolumn{2}{c}{\textbf{Diabetes}}
 & \multicolumn{2}{c}{\textbf{Magic}}
 & \multicolumn{2}{c}{\textbf{News}}
 & \multicolumn{2}{c}{\textbf{Shoppers}}
 \\
    \midrule 
    CTGAN & 0.0934 & \footnotesize{$\pm 0.1242$}
 & \multicolumn{2}{c}{<0.0001} & \multicolumn{2}{c}{<0.0001} & 0.0226 & \footnotesize{$\pm 0.0096$}
 & 0.0023 & \footnotesize{$\pm 0.0052$}
 & 0.0824 & \footnotesize{$\pm 0.0175$}
 & \multicolumn{2}{c}{<0.0001} \\
    TVAE & 5.7775 & \footnotesize{$\pm 1.6151$}
 & 0.0059 & \footnotesize{$\pm 0.0029$}
 & 0.0874 & \footnotesize{$\pm 0.0350$}
 & 14.1846 & \footnotesize{$\pm 9.6827$}
 & 0.0140 & \footnotesize{$\pm 0.0147$}
 & 0.2175 & \footnotesize{$\pm 0.0697$}
 & 0.0162 & \footnotesize{$\pm 0.0075$}
 \\
    GReaT & \multicolumn{2}{c}{<0.0001} & 1.4199 & \footnotesize{$\pm 0.0606$}
 & 12.5178 & \footnotesize{$\pm 0.1007$}
 & \multicolumn{2}{c}{*} & 0.3529 & \footnotesize{$\pm 0.0517$}
 & \multicolumn{2}{c}{*} & 10.6047 & \footnotesize{$\pm 0.2320$}
 \\
    STaSy & 0.9809 & \footnotesize{$\pm 0.2132$}
 & 0.4454 & \footnotesize{$\pm 0.1188$}
 & 0.4504 & \footnotesize{$\pm 0.1140$}
 & \multicolumn{2}{c}{<0.0001} & 0.2746 & \footnotesize{$\pm 0.1323$}
 & 0.2433 & \footnotesize{$\pm 0.1508$}
 & 0.7101 & \footnotesize{$\pm 0.9099$}
 \\
    CoDi & 0.0276 & \footnotesize{$\pm 0.0104$}
 & \multicolumn{2}{c}{<0.0001} & \multicolumn{2}{c}{<0.0001} & \multicolumn{2}{c}{<0.0001} & 0.0374 & \footnotesize{$\pm 0.0158$}
 & 0.0006 & \footnotesize{$\pm 0.0013$}
 & \multicolumn{2}{c}{<0.0001} \\
    TabSyn & 1.6842 & \footnotesize{$\pm 0.1084$}
 & 0.5732 & \footnotesize{$\pm 0.1172$}
 & 1.1444 & \footnotesize{$\pm 0.0626$}
 & 0.6296 & \footnotesize{$\pm 0.0252$}
 & 0.0584 & \footnotesize{$\pm 0.0160$}
 & 1.0325 & \footnotesize{$\pm 0.1639$}
 & 1.8564 & \footnotesize{$\pm 0.2962$}
 \\
    TabDiff & 1.8673 & \footnotesize{$\pm 0.0624$}
 & 0.6386 & \footnotesize{$\pm 0.0518$}
 & 1.4067 & \footnotesize{$\pm 0.0647$}
 & 1.0662 & \footnotesize{$\pm 0.5802$}
 & 0.0608 & \footnotesize{$\pm 0.0270$}
 & 0.8178 & \footnotesize{$\pm 0.1928$}
 & 2.2601 & \footnotesize{$\pm 0.2484$}
 \\
    TabPFN & 2.4932 & \footnotesize{$\pm 0.0595$}
 & 2.6370 & \footnotesize{$\pm 0.0725$}
 & 0.8437 & \footnotesize{$\pm 0.0848$}
 & \multicolumn{2}{c}{*} & 0.2059 & \footnotesize{$\pm 0.0250$}
 & 0.2522 & \footnotesize{$\pm 0.0040$}
 & 0.0180 & \footnotesize{$\pm 0.0169$}
 \\
    \midrule
    FF & 0.0424 & \footnotesize{$\pm 0.0120$}
 & 0.0841 & \footnotesize{$\pm 0.0101$}
 & \multicolumn{2}{c}{<0.0001} & 0.0111 & \footnotesize{$\pm 0.0036$}
 & \multicolumn{2}{c}{<0.0001} & \multicolumn{2}{c}{<0.0001} & 0.0018 & \footnotesize{$\pm 0.0040$}
 \\
    SM & 2.0159 & \footnotesize{$\pm 0.1441$}
 & 0.9340 & \footnotesize{$\pm 0.0701$}
 & 0.8207 & \footnotesize{$\pm 0.1759$}
 & 0.0694 & \footnotesize{$\pm 0.0398$}
 & 0.5398 & \footnotesize{$\pm 0.1367$}
 & 0.0695 & \footnotesize{$\pm 0.0597$}
 & 1.6419 & \footnotesize{$\pm 0.3882$}
 \\
    TabPC & 1.9029 & \footnotesize{$\pm 0.1342$}
 & 0.5311 & \footnotesize{$\pm 0.0419$}
 & 0.8719 & \footnotesize{$\pm 0.0937$}
 & 1.4344 & \footnotesize{$\pm 0.0495$}
 & 0.1718 & \footnotesize{$\pm 0.0479$}
 & 0.4316 & \footnotesize{$\pm 0.3200$}
 & 2.5773 & \footnotesize{$\pm 0.1746$}
 \\
    \bottomrule[1.0pt]
\end{tabular}}
\end{table}

%% file: tables_new_protocol/dcr005.tex
\begin{table}[htbp]
\centering
\caption{According to DCR-005, \ours may offer a similar risk of privacy leakage as \sota diffusion-based models. For DCR-005, numbers equal to or lower than $5$ may suggest a lower risk of privacy leakage.}
\label{tab:dcr fraction<0.05 test quantile_updated}
\resizebox{\textwidth}{!}{\begin{tabular}{
l
r@{}l
r@{}l
r@{}l
r@{}l
r@{}l
r@{}l
r@{}l
}
    \toprule[0.8pt]
     \textbf{Method} & \multicolumn{2}{c}{\textbf{Adult}}
 & \multicolumn{2}{c}{\textbf{Beijing}}
 & \multicolumn{2}{c}{\textbf{Default}}
 & \multicolumn{2}{c}{\textbf{Diabetes}}
 & \multicolumn{2}{c}{\textbf{Magic}}
 & \multicolumn{2}{c}{\textbf{News}}
 & \multicolumn{2}{c}{\textbf{Shoppers}}
 \\
    \midrule 
    CTGAN & 0.3225 & \footnotesize{$\pm 0.4189$}
 & 0.0117 & \footnotesize{$\pm 0.0058$}
 & 0.0222 & \footnotesize{$\pm 0.0191$}
 & 0.1209 & \footnotesize{$\pm 0.0437$}
 & 0.1122 & \footnotesize{$\pm 0.0263$}
 & 0.4686 & \footnotesize{$\pm 0.0401$}
 & 1.2183 & \footnotesize{$\pm 0.4743$}
 \\
    TVAE & 15.0886 & \footnotesize{$\pm 4.1617$}
 & 0.0804 & \footnotesize{$\pm 0.0051$}
 & 1.2956 & \footnotesize{$\pm 0.3023$}
 & 31.5980 & \footnotesize{$\pm 18.4872$}
 & 0.7420 & \footnotesize{$\pm 0.1228$}
 & 1.5768 & \footnotesize{$\pm 0.4036$}
 & 10.2550 & \footnotesize{$\pm 3.2313$}
 \\
    GReaT & \multicolumn{2}{c}{<0.0001} & 3.5747 & \footnotesize{$\pm 0.1059$}
 & 23.9711 & \footnotesize{$\pm 0.2405$}
 & \multicolumn{2}{c}{*} & 4.8373 & \footnotesize{$\pm 0.0413$}
 & \multicolumn{2}{c}{*} & 21.8780 & \footnotesize{$\pm 0.3721$}
 \\
    STaSy & 2.5134 & \footnotesize{$\pm 0.4673$}
 & 1.3539 & \footnotesize{$\pm 0.3471$}
 & 1.6000 & \footnotesize{$\pm 0.2372$}
 & \multicolumn{2}{c}{<0.0001} & 2.8533 & \footnotesize{$\pm 0.4710$}
 & 0.9827 & \footnotesize{$\pm 0.4453$}
 & 2.4403 & \footnotesize{$\pm 1.6136$}
 \\
    CoDi & 0.1480 & \footnotesize{$\pm 0.0270$}
 & 0.0037 & \footnotesize{$\pm 0.0040$}
 & \multicolumn{2}{c}{<0.0001} & \multicolumn{2}{c}{<0.0001} & 1.5096 & \footnotesize{$\pm 0.2736$}
 & 0.0241 & \footnotesize{$\pm 0.0117$}
 & 0.6759 & \footnotesize{$\pm 0.3872$}
 \\
    TabSyn & 4.3039 & \footnotesize{$\pm 0.2098$}
 & 1.6109 & \footnotesize{$\pm 0.3337$}
 & 3.1133 & \footnotesize{$\pm 0.1441$}
 & 2.0344 & \footnotesize{$\pm 0.0849$}
 & 1.5283 & \footnotesize{$\pm 0.1844$}
 & 2.6268 & \footnotesize{$\pm 0.3070$}
 & 4.9941 & \footnotesize{$\pm 0.4275$}
 \\
    TabDiff & 4.6829 & \footnotesize{$\pm 0.1926$}
 & 1.7179 & \footnotesize{$\pm 0.1245$}
 & 3.9593 & \footnotesize{$\pm 0.0926$}
 & 2.8962 & \footnotesize{$\pm 1.5304$}
 & 1.5377 & \footnotesize{$\pm 0.2004$}
 & 2.0561 & \footnotesize{$\pm 0.5072$}
 & 5.9187 & \footnotesize{$\pm 0.9103$}
 \\
    TabPFN & 6.3039 & \footnotesize{$\pm 0.1552$}
 & 6.6539 & \footnotesize{$\pm 0.1011$}
 & 4.0415 & \footnotesize{$\pm 0.2526$}
 & \multicolumn{2}{c}{*} & 2.8013 & \footnotesize{$\pm 0.0721$}
 & 1.5920 & \footnotesize{$\pm 0.0277$}
 & 2.5088 & \footnotesize{$\pm 0.1153$}
 \\
    \midrule
    FF & 0.1112 & \footnotesize{$\pm 0.0250$}
 & 0.2411 & \footnotesize{$\pm 0.0115$}
 & \multicolumn{2}{c}{<0.0001} & 0.0671 & \footnotesize{$\pm 0.0140$}
 & 0.0023 & \footnotesize{$\pm 0.0032$}
 & \multicolumn{2}{c}{<0.0001} & 0.2145 & \footnotesize{$\pm 0.0443$}
 \\
    SM & 5.0293 & \footnotesize{$\pm 0.1940$}
 & 2.3703 & \footnotesize{$\pm 0.1427$}
 & 2.3252 & \footnotesize{$\pm 0.4999$}
 & 0.3331 & \footnotesize{$\pm 0.1147$}
 & 5.6938 & \footnotesize{$\pm 0.6394$}
 & 0.3610 & \footnotesize{$\pm 0.1499$}
 & 4.4174 & \footnotesize{$\pm 0.5258$}
 \\
    TabPC & 4.8438 & \footnotesize{$\pm 0.2155$}
 & 1.4757 & \footnotesize{$\pm 0.1191$}
 & 2.7200 & \footnotesize{$\pm 0.1932$}
 & 3.9087 & \footnotesize{$\pm 0.1079$}
 & 3.0309 & \footnotesize{$\pm 0.3601$}
 & 1.2136 & \footnotesize{$\pm 0.6237$}
 & 6.2720 & \footnotesize{$\pm 0.1197$}
 \\
    \bottomrule[1.0pt]
\end{tabular}}
\end{table}

%% file: tables_new_protocol/bpd_values.tex
\begin{table}[!t]
    \centering
    \caption{\ours BPD values (see \cref{eq:bpd}) on all datasets and splits (using the hyperparameters in \cref{app:tabpc_implementation})}
    \label{tab:bpd_values}

    \begin{tabular}{
    l %
    S[table-format=1.4, table-alignment = center] %
    S[table-format=1.4, table-alignment = center] %
    S[table-format=1.4, table-alignment = center] %
    }
        \toprule
        \multirow{2}{*}{\textbf{Dataset}} & \multicolumn{3}{c}{\textbf{\ours BPD}} \\
         & \textbf{Training} & \textbf{Validation} & \textbf{Test}  \\ 
        \midrule
        Adult  &  1.1227   &  1.0476   & 1.0500 \\
        Beijing &  1.3615  &   1.3582  & 1.4655 \\
        Default &  0.3431  &  0.3458   & 0.5583 \\
        Diabetes &  1.2901  &  1.1650  & 1.2906 \\
        Magic   &    0.5294 &   0.5283 & 0.9391 \\
        News    &  0.3894   &  0.3884  & 0.5832 \\
        Shoppers   & 0.9056   & 0.8978 & 1.0208 \\ 
        \bottomrule
    \end{tabular}
\end{table}

%% file: tables_new_protocol/nmis.tex
\begin{table}[htbp]
\centering
\caption{\nmis results across all methods \textbf{show  more saturated scores than \wnmis}, since all values are above $0.9$, whereas for \wnmis, the weakest methods posted lower results. For a clear example of this, compare the \FF results here (where the lowest score is $0.8756$ on Magic) to their \wnmis scores (where the lowest is $0.6942$ on Default).}
\label{tab:nmi l1 complement_updated}
\resizebox{\textwidth}{!}{\begin{tabular}{
l
r@{}l@{}r
r@{}l@{}r
r@{}l@{}r
r@{}l@{}r
r@{}l@{}r
r@{}l@{}r
r@{}l@{}r
r
}
    \toprule[0.8pt]
     \textbf{Method} & \multicolumn{3}{c}{\textbf{Adult}}
 & \multicolumn{3}{c}{\textbf{Beijing}}
 & \multicolumn{3}{c}{\textbf{Default}}
 & \multicolumn{3}{c}{\textbf{Diabetes}}
 & \multicolumn{3}{c}{\textbf{Magic}}
 & \multicolumn{3}{c}{\textbf{News}}
 & \multicolumn{3}{c}{\textbf{Shoppers}}
 & \textbf{Avg. Rank} \\
    \midrule 
    CTGAN & 0.9737 & \footnotesize{$\pm 0.0035$} & ~~(8)
 & 0.9667 & \footnotesize{$\pm 0.0097$} & ~~(9)
 & 0.9537 & \footnotesize{$\pm 0.0050$} & ~~(10)
 & 0.9930 & \footnotesize{$\pm 0.0004$} & ~~(5)
 & 0.9594 & \footnotesize{$\pm 0.0037$} & ~~(10)
 & 0.9848 & \footnotesize{$\pm 0.0007$} & ~~(9)
 & 0.9615 & \footnotesize{$\pm 0.0079$} & ~~(11)
 & 8.86 \\
    TVAE & 0.9720 & \footnotesize{$\pm 0.0023$} & ~~(9)
 & 0.9305 & \footnotesize{$\pm 0.0137$} & ~~(11)
 & 0.9794 & \footnotesize{$\pm 0.0021$} & ~~(7)
 & 0.9813 & \footnotesize{$\pm 0.0081$} & ~~(9)
 & 0.9805 & \footnotesize{$\pm 0.0022$} & ~~(6)
 & 0.9863 & \footnotesize{$\pm 0.0006$} & ~~(7)
 & 0.9832 & \footnotesize{$\pm 0.0017$} & ~~(8)
 & 8.14 \\
    GReaT & 0.9889 & \footnotesize{$\pm 0.0001$} & ~~(7)
 & 0.9946 & \footnotesize{$\pm 0.0002$} & ~~(5)
 & 0.9617 & \footnotesize{$\pm 0.0005$} & ~~(9)
 & \multicolumn{2}{c}{*} & ~~(10) & 0.9776 & \footnotesize{$\pm 0.0005$} & ~~(8)
 & \multicolumn{2}{c}{*} & ~~(11) & 0.9896 & \footnotesize{$\pm 0.0006$} & ~~(5)
 & 7.86 \\
    STaSy & 0.9903 & \footnotesize{$\pm 0.0010$} & ~~(6)
 & 0.9929 & \footnotesize{$\pm 0.0017$} & ~~(6)
 & 0.9876 & \footnotesize{$\pm 0.0037$} & ~~(5)
 & 0.9907 & \footnotesize{$\pm{0.0000}$} & ~~(8) & 0.9840 & \footnotesize{$\pm 0.0087$} & ~~(5)
 & 0.9930 & \footnotesize{$\pm 0.0010$} & ~~(3)
 & 0.9909 & \footnotesize{$\pm 0.0005$} & ~~(4)
 & 5.29 \\
    CoDi & 0.9648 & \footnotesize{$\pm 0.0008$} & ~~(10)
 & 0.9915 & \footnotesize{$\pm 0.0031$} & ~~(8)
 & 0.9635 & \footnotesize{$\pm 0.0031$} & ~~(8)
 & 0.9927 & \footnotesize{$\pm{0.0000}$} & ~~(6) & 0.9795 & \footnotesize{$\pm 0.0021$} & ~~(7)
 & 0.9900 & \footnotesize{$\pm 0.0005$} & ~~(6)
 & 0.9773 & \footnotesize{$\pm 0.0023$} & ~~(9)
 & 7.71 \\
    TabSyn & 0.9949 & \footnotesize{$\pm 0.0006$} & ~~(4)
 & 0.9957 & \footnotesize{$\pm 0.0007$} & ~~(4)
 & 0.9955 & \footnotesize{$\pm 0.0005$} & ~~(2)
 & 0.9974 & \footnotesize{$\pm 0.0001$} & ~~(2)
 & 0.9975 & \footnotesize{$\pm 0.0002$} & ~~(1)
 & 0.9952 & \footnotesize{$\pm 0.0010$} & ~~(1)
 & 0.9958 & \footnotesize{$\pm 0.0006$} & ~~(3)
 & 2.43 \\
    TabDiff & 0.9965 & \footnotesize{$\pm 0.0002$} & ~~(2)
 & 0.9978 & \footnotesize{$\pm 0.0001$} & ~~(2)
 & 0.9967 & \footnotesize{$\pm 0.0003$} & ~~(1)
 & 0.9970 & \footnotesize{$\pm 0.0024$} & ~~(3)
 & 0.9965 & \footnotesize{$\pm 0.0009$} & ~~(2)
 & 0.9920 & \footnotesize{$\pm 0.0025$} & ~~(4)
 & 0.9965 & \footnotesize{$\pm 0.0001$} & ~~(2)
 & 2.29 \\
    TabPFN & 0.9959 & \footnotesize{$\pm 0.0002$} & ~~(3)
 & 0.9987 & \footnotesize{$\pm 0.0001$} & ~~(1)
 & 0.9863 & \footnotesize{$\pm 0.0004$} & ~~(6)
 & \multicolumn{2}{c}{*} & ~~(10) & 0.9946 & \footnotesize{$\pm 0.0006$} & ~~(4)
 & 0.9904 & \footnotesize{$\pm 0.0002$} & ~~(5)
 & 0.9894 & \footnotesize{$\pm 0.0002$} & ~~(6)
 & 5.00 \\
    \midrule
    FF & 0.9594 & \footnotesize{$\pm{0.0000}$} & ~~(11) & 0.9637 & \footnotesize{$\pm{0.0000}$} & ~~(10) & 0.9339 & \footnotesize{$\pm{0.0000}$} & ~~(11) & 0.9924 & \footnotesize{$\pm{0.0000}$} & ~~(7) & 0.8756 & \footnotesize{$\pm{0.0000}$} & ~~(11) & 0.9791 & \footnotesize{$\pm{0.0000}$} & ~~(10) & 0.9770 & \footnotesize{$\pm{0.0000}$} & ~~(10) & 10.00 \\
    SM & 0.9930 & \footnotesize{$\pm 0.0005$} & ~~(5)
 & 0.9927 & \footnotesize{$\pm 0.0005$} & ~~(7)
 & 0.9901 & \footnotesize{$\pm 0.0006$} & ~~(4)
 & 0.9934 & \footnotesize{$\pm 0.0008$} & ~~(4)
 & 0.9699 & \footnotesize{$\pm 0.0021$} & ~~(9)
 & 0.9853 & \footnotesize{$\pm 0.0006$} & ~~(8)
 & 0.9865 & \footnotesize{$\pm 0.0007$} & ~~(7)
 & 6.29 \\
    TabPC & 0.9979 & \footnotesize{$\pm 0.0004$} & ~~(1)
 & 0.9973 & \footnotesize{$\pm 0.0002$} & ~~(3)
 & 0.9949 & \footnotesize{$\pm 0.0003$} & ~~(3)
 & 0.9984 & \footnotesize{$\pm 0.0003$} & ~~(1)
 & 0.9955 & \footnotesize{$\pm 0.0007$} & ~~(3)
 & 0.9940 & \footnotesize{$\pm 0.0006$} & ~~(2)
 & 0.9975 & \footnotesize{$\pm 0.0003$} & ~~(1)
 & 2.00 \\
    \bottomrule[1.0pt]
\end{tabular}}
\end{table}

%% file: tables_new_protocol/wnmis_discretisation.tex
\begin{table}[htbp]
\centering
\caption{\textbf{\wnmis is robust to the number of discretisation bins.} Using \ours, for every considered dataset we find that the computation of \wnmis varies little with the number of bins chosen to discretise continuous features. Throughout the rest of the paper, we use 10 bins.}
\label{tab:wnmis_ablation_discretisation}
\resizebox{\textwidth}{!}{\begin{tabular}{
l
r@{}l
r@{}l
r@{}l
r@{}l
r@{}l
r@{}l
r@{}l
r
}
    \toprule[0.8pt]
     \textbf{Num. Bins} & \multicolumn{2}{c}{\textbf{Adult}}
 & \multicolumn{2}{c}{\textbf{Beijing}}
 & \multicolumn{2}{c}{\textbf{Default}}
 & \multicolumn{2}{c}{\textbf{Diabetes}}
 & \multicolumn{2}{c}{\textbf{Magic}}
 & \multicolumn{2}{c}{\textbf{News}}
 & \multicolumn{2}{c}{\textbf{Shoppers}} \\
    \midrule 
    5 & 0.9949 & \footnotesize{$\pm 0.0038$}
 & 0.9924 & \footnotesize{$\pm 0.0010$}
 & 0.9909 & \footnotesize{$\pm 0.0016$}
 & 0.9919 & \footnotesize{$\pm 0.0025$}
 & 0.9915 & \footnotesize{$\pm 0.0018$}
 & 0.9231 & \footnotesize{$\pm 0.0086$}
 & 0.9896 & \footnotesize{$\pm 0.0015$} \\
10 & 0.9933 & \footnotesize{$\pm 0.0049$}
 & 0.9925 & \footnotesize{$\pm 0.0010$}
 & 0.9894 & \footnotesize{$\pm 0.0012$}
 & 0.9919 & \footnotesize{$\pm 0.0024$}
 & 0.9925 & \footnotesize{$\pm 0.0012$}
 & 0.9388 & \footnotesize{$\pm 0.0048$}
 & 0.9927 & \footnotesize{$\pm 0.0009$} \\
    20 & 0.9924 & \footnotesize{$\pm 0.0051$}
 & 0.9924 & \footnotesize{$\pm 0.0008$}
 & 0.9872 & \footnotesize{$\pm 0.0011$}
 & 0.9921 & \footnotesize{$\pm 0.0024$}
 & 0.9929 & \footnotesize{$\pm 0.0006$}
 & 0.9459 & \footnotesize{$\pm 0.0045$}
 & 0.9949 & \footnotesize{$\pm 0.0007$} \\
 50 & 0.9923 & \footnotesize{$\pm 0.0050$}
 & 0.9932 & \footnotesize{$\pm 0.0008$}
 & 0.9854 & \footnotesize{$\pm 0.0009$}
 & 0.9923 & \footnotesize{$\pm 0.0023$}
 & 0.9939 & \footnotesize{$\pm 0.0006$}
 & 0.9512 & \footnotesize{$\pm 0.0027$}
 & 0.9958 & \footnotesize{$\pm 0.0004$} \\
    \bottomrule[1.0pt]
\end{tabular}}
\end{table}

%% file: tables_new_protocol/alpha_precision.tex
\begin{table}[htbp]
\centering
\caption{We present the ``naive'' implementation results for \alphaprecision for backwards compatibility with prior works. \textbf{\ours is the best ranked and best performing model in terms of \alphaprecision (naive)}, a measure of sample realism, attaining scores greater than $0.99$ in six out of seven datasets, and being ranked first for all datasets.}
\label{tab:alpha precision_updated}
\resizebox{\textwidth}{!}{\begin{tabular}{
l
r@{}l@{}r
r@{}l@{}r
r@{}l@{}r
r@{}l@{}r
r@{}l@{}r
r@{}l@{}r
r@{}l@{}r
r
}
    \toprule[0.8pt]
     \textbf{Method} & \multicolumn{3}{c}{\textbf{Adult}}
 & \multicolumn{3}{c}{\textbf{Beijing}}
 & \multicolumn{3}{c}{\textbf{Default}}
 & \multicolumn{3}{c}{\textbf{Diabetes}}
 & \multicolumn{3}{c}{\textbf{Magic}}
 & \multicolumn{3}{c}{\textbf{News}}
 & \multicolumn{3}{c}{\textbf{Shoppers}}
 & \textbf{Avg. Rank} \\
    \midrule 
    CTGAN & 0.7539 & \footnotesize{$\pm 0.0362$} & ~~(9)
 & 0.9494 & \footnotesize{$\pm 0.0365$} & ~~(7)
 & 0.6734 & \footnotesize{$\pm 0.0486$} & ~~(11)
 & 0.7983 & \footnotesize{$\pm 0.0178$} & ~~(6)
 & 0.9040 & \footnotesize{$\pm 0.0200$} & ~~(8)
 & 0.9750 & \footnotesize{$\pm 0.0071$} & ~~(2)
 & 0.7790 & \footnotesize{$\pm 0.0374$} & ~~(10)
 & 7.57 \\
    TVAE & 0.6233 & \footnotesize{$\pm 0.0595$} & ~~(10)
 & 0.8622 & \footnotesize{$\pm 0.0651$} & ~~(10)
 & 0.9087 & \footnotesize{$\pm 0.0392$} & ~~(7)
 & 0.1084 & \footnotesize{$\pm 0.0749$} & ~~(8)
 & 0.9684 & \footnotesize{$\pm 0.0104$} & ~~(5)
 & 0.9216 & \footnotesize{$\pm 0.0380$} & ~~(7)
 & 0.4797 & \footnotesize{$\pm 0.1287$} & ~~(11)
 & 8.29 \\
    GReaT & 0.5898 & \footnotesize{$\pm 0.0010$} & ~~(11)
 & \multicolumn{2}{c}{*} & ~~(11) & 0.8657 & \footnotesize{$\pm 0.0036$} & ~~(8)
 & \multicolumn{2}{c}{*} & ~~(10) & 0.8678 & \footnotesize{$\pm 0.0049$} & ~~(9)
 & \multicolumn{2}{c}{*} & ~~(11) & 0.7843 & \footnotesize{$\pm 0.0030$} & ~~(9)
 & 9.86 \\
    STaSy & 0.8872 & \footnotesize{$\pm 0.0809$} & ~~(7)
 & 0.9199 & \footnotesize{$\pm 0.0612$} & ~~(9)
 & 0.9385 & \footnotesize{$\pm 0.0282$} & ~~(5)
 & \multicolumn{2}{c}{<0.0001} & ~~(9) & 0.9362 & \footnotesize{$\pm 0.0501$} & ~~(7)
 & 0.9218 & \footnotesize{$\pm 0.0548$} & ~~(6)
 & 0.9009 & \footnotesize{$\pm 0.0694$} & ~~(8)
 & 7.29 \\
    CoDi & 0.8021 & \footnotesize{$\pm 0.0287$} & ~~(8)
 & 0.9736 & \footnotesize{$\pm 0.0089$} & ~~(5)
 & 0.8189 & \footnotesize{$\pm 0.0063$} & ~~(9)
 & 0.4335 & \footnotesize{$\pm{0.0000}$} & ~~(7) & 0.8627 & \footnotesize{$\pm 0.0055$} & ~~(10)
 & 0.9159 & \footnotesize{$\pm 0.0077$} & ~~(9)
 & 0.9186 & \footnotesize{$\pm 0.0271$} & ~~(7)
 & 7.86 \\
    TabSyn & 0.9902 & \footnotesize{$\pm 0.0042$} & ~~(3)
 & 0.9822 & \footnotesize{$\pm 0.0052$} & ~~(3)
 & 0.9889 & \footnotesize{$\pm 0.0033$} & ~~(2)
 & 0.9795 & \footnotesize{$\pm 0.0095$} & ~~(4)
 & 0.9938 & \footnotesize{$\pm 0.0021$} & ~~(3)
 & 0.9558 & \footnotesize{$\pm 0.0099$} & ~~(4)
 & 0.9898 & \footnotesize{$\pm 0.0019$} & ~~(2)
 & 3.00 \\
    TabDiff & 0.9861 & \footnotesize{$\pm 0.0040$} & ~~(4)
 & 0.9779 & \footnotesize{$\pm 0.0028$} & ~~(4)
 & 0.9873 & \footnotesize{$\pm 0.0032$} & ~~(3)
 & 0.9422 & \footnotesize{$\pm 0.0106$} & ~~(5)
 & 0.9939 & \footnotesize{$\pm 0.0022$} & ~~(2)
 & 0.9172 & \footnotesize{$\pm 0.0576$} & ~~(8)
 & 0.9851 & \footnotesize{$\pm 0.0142$} & ~~(3)
 & 4.14 \\
    TabPFN & 0.9037 & \footnotesize{$\pm 0.0031$} & ~~(6)
 & 0.9946 & \footnotesize{$\pm 0.0007$} & ~~(2)
 & 0.9287 & \footnotesize{$\pm 0.0033$} & ~~(6)
 & \multicolumn{2}{c}{*} & ~~(10) & 0.9747 & \footnotesize{$\pm 0.0043$} & ~~(4)
 & 0.9539 & \footnotesize{$\pm 0.0009$} & ~~(5)
 & 0.9200 & \footnotesize{$\pm 0.0059$} & ~~(6)
 & 5.57 \\
    \midrule
    FF & 0.9568 & \footnotesize{$\pm 0.0036$} & ~~(5)
 & 0.9531 & \footnotesize{$\pm 0.0009$} & ~~(6)
 & 0.6969 & \footnotesize{$\pm 0.0006$} & ~~(10)
 & 0.9816 & \footnotesize{$\pm 0.0015$} & ~~(3)
 & 0.8612 & \footnotesize{$\pm 0.0008$} & ~~(11)
 & 0.8378 & \footnotesize{$\pm 0.0028$} & ~~(10)
 & 0.9713 & \footnotesize{$\pm 0.0043$} & ~~(4)
 & 7.00 \\
    SM & 0.9915 & \footnotesize{$\pm 0.0023$} & ~~(2)
 & 0.9394 & \footnotesize{$\pm 0.0102$} & ~~(8)
 & 0.9699 & \footnotesize{$\pm 0.0048$} & ~~(4)
 & 0.9820 & \footnotesize{$\pm 0.0085$} & ~~(2)
 & 0.9654 & \footnotesize{$\pm 0.0055$} & ~~(6)
 & 0.9732 & \footnotesize{$\pm 0.0061$} & ~~(3)
 & 0.9616 & \footnotesize{$\pm 0.0231$} & ~~(5)
 & 4.29 \\
    TabPC & 0.9961 & \footnotesize{$\pm 0.0014$} & ~~(1)
 & 0.9951 & \footnotesize{$\pm 0.0011$} & ~~(1)
 & 0.9950 & \footnotesize{$\pm 0.0008$} & ~~(1)
 & 0.9953 & \footnotesize{$\pm 0.0023$} & ~~(1)
 & 0.9948 & \footnotesize{$\pm 0.0017$} & ~~(1)
 & 0.9824 & \footnotesize{$\pm 0.0064$} & ~~(1)
 & 0.9917 & \footnotesize{$\pm 0.0046$} & ~~(1)
 & \textbf{1.00} \\
    \bottomrule[1.0pt]
\end{tabular}}
\end{table}

%% file: tables_new_protocol/beta_recall.tex
\begin{table}[htbp]
\centering
\caption{We present the ``naive'' implementation results for \betarecall for backwards compatibility with prior works. \textbf{\ours has the second highest average rank in \betarecall (naive)}, a measure of sample diversity. \ours offers performance which generally competitive with \sota, but it sometimes falls slightly behind \tabdiff which is the highest ranked method for this metric. {Note the relatively high scores of \tabpfn, which suggests that it is able to generate a comparably diverse range of samples.}}
\label{tab:beta recall_updated}
\resizebox{\textwidth}{!}{\begin{tabular}{
l
r@{}l@{}r
r@{}l@{}r
r@{}l@{}r
r@{}l@{}r
r@{}l@{}r
r@{}l@{}r
r@{}l@{}r
r
}
    \toprule[0.8pt]
     \textbf{Method} & \multicolumn{3}{c}{\textbf{Adult}}
 & \multicolumn{3}{c}{\textbf{Beijing}}
 & \multicolumn{3}{c}{\textbf{Default}}
 & \multicolumn{3}{c}{\textbf{Diabetes}}
 & \multicolumn{3}{c}{\textbf{Magic}}
 & \multicolumn{3}{c}{\textbf{News}}
 & \multicolumn{3}{c}{\textbf{Shoppers}}
 & \textbf{Avg. Rank} \\
    \midrule 
    CTGAN & 0.1313 & \footnotesize{$\pm 0.0613$} & ~~(8)
 & 0.3850 & \footnotesize{$\pm 0.0203$} & ~~(9)
 & 0.1050 & \footnotesize{$\pm 0.0124$} & ~~(10)
 & 0.0948 & \footnotesize{$\pm 0.0050$} & ~~(5)
 & 0.1588 & \footnotesize{$\pm 0.0076$} & ~~(10)
 & 0.2339 & \footnotesize{$\pm 0.0130$} & ~~(8)
 & 0.2378 & \footnotesize{$\pm 0.0197$} & ~~(8)
 & 8.29 \\
    TVAE & 0.1221 & \footnotesize{$\pm 0.0133$} & ~~(9)
 & 0.0540 & \footnotesize{$\pm 0.0233$} & ~~(10)
 & 0.3039 & \footnotesize{$\pm 0.0121$} & ~~(8)
 & 0.0150 & \footnotesize{$\pm 0.0134$} & ~~(7)
 & 0.3631 & \footnotesize{$\pm 0.0124$} & ~~(9)
 & 0.2811 & \footnotesize{$\pm 0.0256$} & ~~(7)
 & 0.1909 & \footnotesize{$\pm 0.0488$} & ~~(11)
 & 8.71 \\
    GReaT & 0.4825 & \footnotesize{$\pm 0.0001$} & ~~(4)
 & \multicolumn{2}{c}{*} & ~~(11) & 0.4211 & \footnotesize{$\pm 0.0012$} & ~~(5)
 & \multicolumn{2}{c}{*} & ~~(10) & 0.3995 & \footnotesize{$\pm 0.0041$} & ~~(8)
 & \multicolumn{2}{c}{*} & ~~(11) & 0.4551 & \footnotesize{$\pm 0.0061$} & ~~(5)
 & 7.71 \\
    STaSy & 0.3454 & \footnotesize{$\pm 0.0170$} & ~~(7)
 & 0.4796 & \footnotesize{$\pm 0.0348$} & ~~(7)
 & 0.3883 & \footnotesize{$\pm 0.0185$} & ~~(6)
 & \multicolumn{2}{c}{<0.0001} & ~~(9) & 0.4387 & \footnotesize{$\pm 0.0272$} & ~~(7)
 & 0.3870 & \footnotesize{$\pm 0.0147$} & ~~(3)
 & 0.3518 & \footnotesize{$\pm 0.0537$} & ~~(7)
 & 6.57 \\
    CoDi & 0.0875 & \footnotesize{$\pm 0.0092$} & ~~(10)
 & 0.5377 & \footnotesize{$\pm 0.0081$} & ~~(5)
 & 0.1863 & \footnotesize{$\pm 0.0026$} & ~~(9)
 & 0.0138 & \footnotesize{$\pm{0.0000}$} & ~~(8) & 0.5100 & \footnotesize{$\pm 0.0081$} & ~~(1)
 & 0.3662 & \footnotesize{$\pm 0.0082$} & ~~(4)
 & 0.1915 & \footnotesize{$\pm 0.0063$} & ~~(10)
 & 6.71 \\
    TabSyn & 0.4796 & \footnotesize{$\pm 0.0059$} & ~~(5)
 & 0.5221 & \footnotesize{$\pm 0.0440$} & ~~(6)
 & 0.4598 & \footnotesize{$\pm 0.0163$} & ~~(3)
 & 0.3521 & \footnotesize{$\pm 0.0070$} & ~~(3)
 & 0.4775 & \footnotesize{$\pm 0.0041$} & ~~(6)
 & 0.4310 & \footnotesize{$\pm 0.0282$} & ~~(2)
 & 0.4785 & \footnotesize{$\pm 0.0047$} & ~~(3)
 & 4.00 \\
    TabDiff & 0.5255 & \footnotesize{$\pm 0.0092$} & ~~(1)
 & 0.5981 & \footnotesize{$\pm 0.0029$} & ~~(3)
 & 0.5154 & \footnotesize{$\pm 0.0046$} & ~~(1)
 & 0.3690 & \footnotesize{$\pm 0.1486$} & ~~(2)
 & 0.4799 & \footnotesize{$\pm 0.0091$} & ~~(5)
 & 0.3561 & \footnotesize{$\pm 0.0739$} & ~~(5)
 & 0.5107 & \footnotesize{$\pm 0.0249$} & ~~(1)
 & \textbf{2.57} \\
    TabPFN & 0.5133 & \footnotesize{$\pm 0.0018$} & ~~(2)
 & 0.7340 & \footnotesize{$\pm 0.0020$} & ~~(1)
 & 0.5032 & \footnotesize{$\pm 0.0019$} & ~~(2)
 & \multicolumn{2}{c}{*} & ~~(10) & 0.4997 & \footnotesize{$\pm 0.0070$} & ~~(3)
 & 0.4629 & \footnotesize{$\pm 0.0021$} & ~~(1)
 & 0.4678 & \footnotesize{$\pm 0.0058$} & ~~(4)
 & 3.29 \\
    \midrule
    FF & 0.0769 & \footnotesize{$\pm 0.0007$} & ~~(11)
 & 0.4385 & \footnotesize{$\pm 0.0029$} & ~~(8)
 & 0.0665 & \footnotesize{$\pm 0.0009$} & ~~(11)
 & 0.0943 & \footnotesize{$\pm 0.0009$} & ~~(6)
 & 0.0196 & \footnotesize{$\pm 0.0007$} & ~~(11)
 & 0.0172 & \footnotesize{$\pm 0.0007$} & ~~(10)
 & 0.2307 & \footnotesize{$\pm 0.0060$} & ~~(9)
 & 9.43 \\
    SM & 0.4692 & \footnotesize{$\pm 0.0094$} & ~~(6)
 & 0.6421 & \footnotesize{$\pm 0.0118$} & ~~(2)
 & 0.3479 & \footnotesize{$\pm 0.0081$} & ~~(7)
 & 0.1637 & \footnotesize{$\pm 0.0249$} & ~~(4)
 & 0.4926 & \footnotesize{$\pm 0.0079$} & ~~(4)
 & 0.0664 & \footnotesize{$\pm 0.0033$} & ~~(9)
 & 0.3684 & \footnotesize{$\pm 0.0134$} & ~~(6)
 & 5.43 \\
    TabPC & 0.4995 & \footnotesize{$\pm 0.0073$} & ~~(3)
 & 0.5731 & \footnotesize{$\pm 0.0062$} & ~~(4)
 & 0.4490 & \footnotesize{$\pm 0.0040$} & ~~(4)
 & 0.4514 & \footnotesize{$\pm 0.0019$} & ~~(1)
 & 0.5047 & \footnotesize{$\pm 0.0079$} & ~~(2)
 & 0.3475 & \footnotesize{$\pm 0.0023$} & ~~(6)
 & 0.5003 & \footnotesize{$\pm 0.0061$} & ~~(2)
 & 3.14 \\
    \bottomrule[1.0pt]
\end{tabular}}
\end{table}